\renewcommand\footnotetextcopyrightpermission[1]{} 
\DeclareMathOperator{\prob}{Pr}
\DeclareMathOperator{\E}{E}
\newcommand{\EA}{\text{(1+1)~EA}\xspace}
\newcommand{\EAc}[1][\chi]{(1+1)$_{{#1}}$~EA\xspace}
\newcommand{\wop}{w.\,o.\,p.\xspace}
\keywords{Parameter tuning, Algorithm configurators, Runtime analysis}
\author{George T. Hall}
\affiliation{%
  \institution{Department of Computer Science}
  \city{University of Sheffield, Sheffield, UK}
}
\author{Pietro S. Oliveto}
\affiliation{%
  \institution{Department of Computer Science}
  \city{University of Sheffield, Sheffield, UK}
}
\author{Dirk Sudholt}
\affiliation{%
  \institution{Department of Computer Science}
  \city{University of Sheffield, Sheffield, UK}
}
\title[Analysis of Algorithm Configurators for Search Heuristics with Global Mutation]{Analysis of the Performance of Algorithm Configurators for Search Heuristics with Global Mutation Operators}
\thanks{A version of this paper without the appendix is to appear at GECCO 2020.}
\begin{document}

\begin{abstract}

    Recently it has been proved that a simple algorithm
    configurator called ParamRLS can efficiently identify the optimal
    neighbourhood size to be used by stochastic local search to optimise two
    standard benchmark problem classes. In this paper we analyse the
    performance of 
    algorithm configurators for tuning the more sophisticated global mutation
    operator used in standard evolutionary algorithms, which flips each of the
    $n$ bits independently with probability $\chi/n$ and the best value for $\chi$
    has to be identified.  We compare the performance of 
    configurators when
    the best-found fitness values within the cutoff time $\kappa$  are used to
    compare configurations 
    against the actual optimisation time
   for two standard benchmark problem classes, {\scshape Ridge}
    and {\scshape LeadingOnes}.  We rigorously prove that
    all algorithm configurators that use optimisation time as performance metric require cutoff times that  are at
    least as large as the expected optimisation time to identify the optimal configuration. 
    Matters are considerably different if the fitness metric is used.
    To show this we prove that the simple ParamRLS-F configurator can
    identify the optimal mutation rates even when using cutoff times that are considerably
    smaller than the expected optimisation time of the best parameter value for both problem classes. 
\end{abstract}

\maketitle

\section{Introduction}

General purpose search heuristics, such as evolutionary algorithms, are designed with the aim
of optimising a problem given minimal knowledge about it. Usually all that is needed is a means of representing solutions for the problem and a fitness function
to compare the quality of different candidate solutions. 
Whilst these algorithms have been shown to be
effective for solving a large variety of hard optimisation problems,
a common difficulty is that of choosing a suitable algorithm for the problem at hand and setting its parameter values such that it will have
good performance. 
A result of this is that it has become very common to use automated methodologies for algorithm development~\cite{BurkeEtAl2013,paper:integrating_heuristics_constraint_satisfaction_probs,LissovoiEtAl2019,SATenstein}.

Traditionally, parameter values were chosen manually 
by the user, applying the algorithm to a specific problem and subsequently refining the choices according to the algorithm's performance with the tested parameters. This
method, however, is time-consuming, tedious, and error-prone. 
As a result automatic algorithm configurators have gradually become the standard methodology used to tune the parameters of an algorithm for a class of problems.
Popular tuners include ParamILS, which uses
iterated local search to navigate the space of configurations
\cite{paper:paramILS}; irace, which iteratively evaluates many configurations
concurrently, eliminates those which statistically have worst performance,
and then uses the best 
to update the distribution used to generate new candidate configurations~\cite{paper:irace}; 
and the surrogate model-based tuners SPOT~\cite{paper:SPOT}
and SMAC~\cite{paper:ROAR_and_SMAC}, which use approximations of the parameter
landscape in order to avoid many lengthy evaluations of configurations.  \par

Despite their widespread adoption,
there is a lack of understanding of the behaviour and performance of algorithm configurators.
More specifically, given an algorithm
and a class of problems, it is unclear how good the parameter values returned by a given
configurator actually are, 
how long the configurator should be run such that good parameter values are identified,
nor is there any rigorous guidance available on how to set the configurator's inherent parameters.

Recently, Kleinberg \emph{et al.}~\cite{paper:efficiency_through_procrastination} provided preliminary answers to these questions.
They performed a worst-case runtime analysis of standard
algorithm configurators, in which an adversary causes every deterministic choice to play out as poorly as possible, while observations of random variables are unbiased samples from the
underlying distribution. They proved that in this scenario all popular configurators will perform poorly.
On the other hand, they 
presented a worst-case tailored algorithm called Structured Procrastination (SP) that provably performs better in the worst-case.
Several improvements to this approach have recently been published \cite{paper:leaps_and_bounds, paper:caps_and_runs, paper:spc}.
Naturally, it is unlikely that the worst-case scenario occurs in practical applications of algorithm configurators. 
In fact, Pushak and Hoos~\cite{paper:algo_config_landscapes_benign} recently  investigated the structure of  configuration search landscapes. Their experimental analysis suggests that the search landscapes are largely unimodal and convex when tuning algorithms
for well-known instance sets of a variety of NP-hard problems including SAT, MIP and TSP.
Thus, they provided evidence that generally algorithm configuration landscapes are much more benign for popular gradient-based configurators than in the worst-case scenario.
Thus, it is important to rigorously evaluate for which applications a given algorithm configurator will be efficient and for which it will perform poorly.

The 
only available time complexity analysis deriving the time required by an algorithm configurator to identify the optimal parameters for an algorithm
for specific problem classes is an earlier publication of ours \cite{paper:impact_of_cutoff_time_rlsk}. 
We proved that a simplified version of ParamILS, called ParamRLS, can efficiently identify the optimal
neighbourhood size $k$ of a randomised local search algorithm RLS$_k$ for two standard benchmark problem classes.
An important insight gained from our earlier analysis is that, if the best identified fitness within some cutoff time is used for configuration comparisons, then much smaller cutoff times than the actual optimisation time of the optimal configurations may be used by ParamRLS to identify the optimal parameter.
On the other hand, if the optimisation time is used for the comparisons, then 
the cutoff time has to be much larger i.e., at least the expected runtime of the optimal parameter setting.


In order to gain a deeper understanding of the performance of algorithm configurators, in this paper 
we consider the problem
of 
tuning the mutation rate of
the global mutation operator that is commonly used in standard evolutionary algorithms.
The operator,  called {\it standard bit mutation} (SBM),  flips each bit of a bit string of length $n$ with probability $\chi/n$, and $\chi$ is the parameter value to be tuned.
This operator is considerably more sophisticated than the local mutations used by the RLS algorithm considered in our earlier work,
since an arbitrary number of bits between 0 and $n$ may be flipped by the operator in each mutation operation.
Furthermore, the nature of the parameter to be tuned yields a significantly more complex
parameter landscape. While the parameter of RLS$_k$ may only take discrete values, the search space for standard bit mutation's parameter
is continuous as the parameter $\chi$ may take any real value.
Small differences in $\chi$ (e.\,g.\ $1/n$ vs.\ $1.1/n$) are hardly visible as in most mutations the number of flipped bits is identical. In stark contrast, RLS$_k$ behaves very differently when always flipping, say, $k=1$ bit or always flipping $k=2$ bits. Hence, identifying the optimal
standard bit
mutation rate 
is much harder than tuning RLS$_k$ as in our earlier work.




We embed the SBM operator into a simple evolutionary algorithm, the \EA,
and 
consider the problem of identifying its optimal parameter value for two benchmark function classes: {\scshape Ridge}
and {\scshape LeadingOnes}.
The first function class is chosen because for each instance the optimal mutation rate for SBM is always $1/n$ independent of the position of the current solution in the search space.
This characteristic is ideal for a first time complexity analysis as it should be easy for the configurator to identify the optimal parameter value and, at the same time, it keeps the analysis simple.
The second function class is more challenging because the best mutation rate decreases as the algorithm approaches the optimum and the configurator has to identify that the best compromise is
a mutation rate of $\approx 1.59/n$ which minimises the overall expected runtime of the \EA~\cite{paper:opt_mut_rates_1p1_LO}.

Our aim is to characterise the impact of the performance metric on the cutoff time required for algorithm configurators to identify the optimal parameter value of the \EA for the considered problem classes.
As in our publication considering RLS$_k$, in this paper we consider two
performance metrics: {\it Optimisation time}, where the winner of a comparison is
the configuration which reaches the optimum in the fewest iterations; and {\it Best fitness} where the winner of a comparison of
two configurations is the configuration which achieves the highest fitness value
within the cutoff time $\kappa$. 

We prove that, with overwhelming probability (\wop)\footnote{We define an event
$A$ as occurring with overwhelming probability if and only if $\prob(A) = 1 -
\exp(-\Omega(n^\alpha))$, for some positive constant $\alpha$. Note that, by the union bound, the intersection of polynomially many such events still has an overwhelming probability.}, any algorithm configurator that uses Optimisation time as performance metric requires a cutoff time that is at least as large as the optimisation time of the optimal parameter value for both {\scshape Ridge} and {\scshape LeadingOnes}. For smaller cutoff times it returns a parameter value chosen uniformly at random from the parameter space \wop unless the configurator is inherently biased towards some areas of the search space. For the simple {\scshape Ridge}
and {\scshape LeadingOnes} problem classes, a random parameter value is returned \wop respectively for cutoff times of $\kappa \le (1-\varepsilon)en^2$ and $\kappa
\le 0.772075n^2$.
Matters change considerably for algorithm configurators that use Best fitness as performance metric.
To prove this it suffices to consider the simple randomized local search ParamRLS-F algorithm analysed in our earlier work which uses Best fitness as performance metric.
ParamRLS-F efficiently returns the optimal parameter value $\chi=1$ of
the \EA for {\scshape Ridge} for any  cutoff time that is at least linear in the problem size. Notice that the configurator is efficient for cutoff times that are a linear factor smaller than the expected optimisation time of the \EA with the optimal $1/n$ mutation rate.
For {\scshape LeadingOnes}, we prove that, \wop,
ParamRLS-F is able to find the optimal parameter setting of $\chi=1.6$ (where
$\chi$ is allowed to take values from the set $\{0.1, 0.2, \ldots, 2.9, 3.0\}$)
for any cutoff time $\kappa \ge 0.721118n^2$. Note that this is $\approx
0.05n^2$ smaller than the expected optimisation time for any configuration of the
\EA for {\scshape LeadingOnes} \cite{paper:opt_mut_rates_1p1_LO}. For over 99\%
of cutoff times in the range between $0.000001n^2$ and $0.720843n^2$, we prove that
ParamRLS-F returns the optimal parameter setting for the cutoff time,
\wop That is, the smaller the cutoff time, the higher the optimal mutation rate. 
Some proofs are contained in the appendix. For both {\scshape Ridge} and {\scshape LeadingOnes} note that, while the formal proof is provided for the ParamRLS-F tuner, the analysis implies that all algorithm configurators capable of hillclimbing are efficient at tuning the \EA for the same cutoff time values if they use Best fitness as performance metric.

\section{Preliminaries}

We first give an overview of the algorithm configuration
problem. This is a formalisation of the problem which all parameter tuners
attempt to address. We then outline the three main subjects of analysis in this
work. We first present the algorithm configurator. 
We then outline the target algorithm (the algorithm which we are
analysing the ability of ParamRLS to tune) before giving an overview of the two
benchmark target function classes which we consider.

\subsection{The Algorithm Configuration Problem}

The Algorithm Configuration Problem (ACP) is that of choosing the parameters
for a \emph{target algorithm} $\mathcal{A}$ to optimise its performance across
a class of problems $\Pi$. 
Let us denote the set of all configurations of
$\mathcal{A}$ as $\Theta$ and algorithm $\mathcal{A}$ with its parameters set
according to some configuration $\theta \in \Theta$ as $\mathcal{A}(\theta)$.
Then the Algorithm Configuration Problem is the task of identifying a
configuration $\theta^*$ such that
\[ \theta^* \in \arg\min_{\theta \in \Theta} cost(\theta) \]
where $cost(\theta)$ is some measure of the cost of running
$\mathcal{A}(\theta)$ on the problem class $\Pi$. \par

We must therefore define the measure $cost(\theta)$, which depends on several factors: 
The size of the \emph{cutoff time}~$\kappa$ (the number of
iterations in a single run of a comparison); The number of \emph{runs} per
comparison~$r$ (the number of times we evaluate two configurations in a single
comparison); Which \emph{metric} is used to evaluate the performance of a
configuration on a problem instance; How to \emph{aggregate} performance
measures over multiple runs; How many \emph{instances} (and which ones) to
include in the training set. \par

In this work, we address the configurator's parameters as follows. 
All results in this work hold for any polynomial number of runs per comparison (i.e. the positive results hold even for just one run and the negative results hold even for a large polynomial number of runs). We consider two performance metrics. The Optimisation time metric quantifies the
performance of a configuration by the time taken to reach the optimum. A  penalisation constant multiplied by the cutoff time $\kappa$ is returned
if the configuration does not reach the optimum within $\kappa$ iterations (called PAR10 for
penalisation constant of 10). The Best fitness performance metric considers the highest fitness value achieved within the
cutoff time. The definition of the training set and
the method of aggregation are both irrelevant within this paper, since performance on the single
problem instances we consider here generalises to any other instance of the
problem class. \par

Let $T$ be the number of configuration comparisons carried out before the tuner returns the
optimal configuration $\theta^*$ \wop
Then
the total required tuning budget is $\mathcal{B} = 2 \cdot T \cdot | \Pi | \cdot \kappa
\cdot r$. 
In this work, we want to estimate how the two performance metrics impact the cutoff time $\kappa$
and the total number of comparisons~$T$ required for a simple
algorithm configurator ParamRLS to tune the \EA for two benchmark problem
classes. 

\subsection{The Configurator: ParamRLS}

We follow our earlier work in analysing a simplified version
of the popular ParamILS parameter tuner, called ParamRLS~\cite{paper:impact_of_cutoff_time_rlsk}.

At the heart of ParamRLS is the \emph{active parameter}. ParamRLS initialises
the active parameter uniformly at random. It then repeatedly mutates it and
accepts the offspring (that is, updates the active parameter to this new value)
if it performs at least as well as the active parameter according to a routine
\texttt{eval}. Each call to \texttt{eval} is called a \emph{comparison} and
the \emph{runtime} $T$ is defined as the number of comparisons until the optimal
parameter value is identified. The high-level pseudocode for ParamRLS is given
in Algorithm~\ref{algo:pRLS}. The \texttt{eval} routine takes as arguments both
configurations to be compared, as well as the cutoff time $\kappa$ and number
of runs $r$. Both configurations are then executed for $r$ runs (each of length
$\kappa$) where at the end of each run the winner of the run is decided
according to one of two \emph{performance metrics}.
Under 
the Best fitness metric, the winner of a run is the configuration which has the highest fitness
value after $\kappa$ iterations. If both configurations have the same fitness
value at time $\kappa$, then the winner is the one that found it first.
The winner of the overall comparison
is the configuration which won the most runs. Ties are broken uniformly at
random. The variant of ParamRLS using this performance metric is called
ParamRLS-F, and its pseudocode is given in Algorithm~\ref{algo:evalF}. Under the Optimisation time metric,
the optimisation times of both comparisons are
summed for $r$ runs.  If in a run a configuration fails to reach the optimum
within $\kappa$ iterations then its optimisation time is taken to be $p \cdot
\kappa$, where $p$ is a penalty constant. If there is a tie after all runs have
been completed, the winner is decided uniformly at random. This measure is
called Penalised Average Runtime (PAR). It is commonly used in configurators
such as ParamILS. The variant of ParamRLS using this performance metric is
called ParamRLS-T, and its pseudocode is given in Algorithm~\ref{algo:evalT}.
\par

As in \cite{paper:impact_of_cutoff_time_rlsk}, the \texttt{mutate} routine in
ParamRLS alters the current configuration according to some \emph{local search
operator} $\pm\{1\}$. This operator simply increases or decreases the current
parameter value by~$1/d$, both with probability 0.5. If the new value for the
parameter is 0 or $\phi+(1/d)$ then this new configuration loses any comparison
with probability 1 (that is, it is rejected with certainty). \par

Since we consider a continuously-valued parameter, $\chi$, we discretise the parameter space (the search space of all
configurations) as this is also the method used to deal with
continuous parameters in ParamILS \cite{paper:paramILS}. We do so using a
\emph{discretisation factor}~$d$. We define the parameter
space as consisting of all numbers $z/d$ where $z \in \{1,2,3,\ldots,\phi \cdot
(d-2),\phi \cdot (d-1),\phi \cdot d\}$, for an integer~$\phi$. For example,
if $d=4$ then the possible values for $\chi$ will be $0.25, 0.5, 0.75, 1, 1.25,\allowbreak 1.5,\allowbreak 1.75,\allowbreak 2, \ldots, \phi-0.25, \phi$. We do not consider
$\chi=0$ since this means that bits are never flipped in the
target algorithm. We call the fitness landscape induced by the target algorithm on the target function
under a performance metric the \emph{configuration landscape}. We say that a configuration landscape is \emph{unimodal} if and only if there is only one optimal
configuration and, for all pairs of neighbouring configurations (two
configurations are neighbours if one can be reached from the other in a single
mutation using the local search operator), the configuration closer to the
optimum wins a comparison \wop  \par

We call a tuner \emph{blind} if there is an event~$A$ that occurs \wop and, conditional on~$A$, the tuner returns a configuration chosen according to a distribution which would be generated if all configurations had the same performance. For ParamRLS-T this implies that the configuration will be chosen uniformly at random, since if there is no information to separate two configurations then the winner is chosen uniformly at random. Only if a tuner is inherently biased will the outcome be non-uniformly distributed.

\begin{algorithm}[t]

    \begin{algorithmic}[1]
        \STATE{$\theta \gets $initial parameter value chosen uniformly at random}
        \WHILE{termination condition not satisfied}
            \STATE{$\theta' \gets \text{\texttt{mutate($\theta$)}}$} 
                \STATE{$\theta \gets \text{\texttt{eval}($\mathcal{A},\theta,\theta',\pi,\kappa,r$)}$}
        \ENDWHILE
        \RETURN $\theta$
    \end{algorithmic}

    \caption{ParamRLS ($\mathcal{A},\Theta,\Pi,\kappa,r$). Recreated from \cite{paper:impact_of_cutoff_time_rlsk} with minor typographical modifications.}
    \label{algo:pRLS}

\end{algorithm}

\begin{algorithm}[t]

    \begin{algorithmic}[1]
        \STATE $Wins \gets 0$; $Wins' \gets 0$ \COMMENT{count number of wins for $\theta$ and $\theta'$}
        \STATE $R \gets 0$
        \WHILE {$R < r$}
            \STATE $ImprTime \gets 0$
            \STATE $ImprTime' \gets 0$
            \STATE $Fitness \gets \mathcal{A}(\theta)$ fitness after $\kappa$ iterations; 
            \STATE $Fitness' \gets \mathcal{A}(\theta')$ fitness after $\kappa$ iterations; 
            \STATE $ImprTime \gets $time of last improvement of $\mathcal{A}(\theta)$
            \STATE $ImprTime' \gets $time of last improvement of $\mathcal{A}(\theta')$
            \IF{$Fitness > Fitness'$}
                \STATE $Wins \gets Wins + 1$
            \ELSIF{$Fitness' > Fitness$}
                \STATE $Wins' \gets Wins' + 1$
            \ELSE
                \IF{$ImprTime < ImprTime'$}
                    \STATE $Wins \gets Wins + 1$
                \ELSIF{$ImprTime' < ImprTime$}
                    \STATE $Wins' \gets Wins' + 1$
                \ENDIF
            \ENDIF
            \STATE  {$R \gets R+1$}
        \ENDWHILE
        \STATE \algorithmicif\ $Wins > Wins'$ \algorithmicthen\ \algorithmicreturn\ $\theta$
        \STATE \algorithmicelsif\ $Wins' > Wins$ \algorithmicthen\ \algorithmicreturn\ $\theta'$
        \STATE \algorithmicelse\ \algorithmicreturn\ a uniform choice of $\theta$ or $\theta'$
    \end{algorithmic}

    \caption{The \texttt{eval-F}($\mathcal{A},\theta,\theta',\pi,\kappa,r$) subroutine in ParamRLS-F. Recreated from \cite{paper:impact_of_cutoff_time_rlsk} with minor typographical modifications.}
    \label{algo:evalF}

\end{algorithm}

\begin{algorithm}[t]

    \begin{algorithmic}[1]
        \STATE $Time \gets 0$; $Time' \gets 0$ \COMMENT{count optimisation times for $\mathcal{A}(\theta)$ and $\mathcal{A}(\theta')$}
        \STATE $R \gets 0$
         \WHILE {$R < r$}
            \STATE $Time \gets Time + \text{\texttt{CapOptTime}}(\mathcal{A}(\theta), \kappa, p)$\!\!\!\!\!\!\!\!\!\!\!\!\!\!\!\!\!\!\!\!
            \STATE $Time' \gets Time' + \text{\texttt{CapOptTime}}(\mathcal{A}(\theta'), \kappa, p)$\!\!\!\!\!\!\!\!\!\!\!\!\!\!\!\!\!\!\!\!
            \STATE  {$R \gets R+1$}
        \ENDWHILE
        \STATE \algorithmicif\ $Time < Time'$\ \algorithmicthen\ \algorithmicreturn\ $\theta$
        \STATE \algorithmicelsif\ $Time' < Time$\ \algorithmicthen\ \algorithmicreturn\ $\theta'$
        \STATE \algorithmicelse\ \algorithmicreturn\ a uniform choice of $\theta$ or $\theta'$
    \end{algorithmic}

    \caption{The \texttt{eval-T}($\mathcal{A},\theta,\theta',\pi, \kappa,r$) subroutine in ParamRLS-T. Recreated from \cite{paper:impact_of_cutoff_time_rlsk} with minor typographical modifications.}
    \label{algo:evalT}

\end{algorithm}

\subsection{The Target Algorithm: The \EA}

In each iteration, the \EAc, shown in Algorithm~\ref{algo:(1+1)_EA}, creates a new solution by flipping each bit in the bit string of the current solution 
independently with probability $\chi/n$. The offspring is accepted if its
fitness is at least that of its parent. 
We analyse the number of  comparisons for ParamRLS to identify the optimal value for $\chi$ (and
thus the optimal mutation rate) for a given problem class. In this work we
assume that $\chi$ is constant.

\begin{algorithm}
    \begin{algorithmic}[1]

        \STATE{\textbf{initialise $x$}} \COMMENT{according to initialisation scheme} \label{line:init}
        \WHILE{termination criterion not met}
        \STATE{$x' \gets$ $x$ with each bit flipped with probability $\chi/n$}
        \STATE{\algorithmicif\ $f(x') \ge f(x)$ \algorithmicthen\ $x \gets x'$} \label{line:fxprime_ge_fx}
        \ENDWHILE

    \end{algorithmic}

    \caption{The \EA maximising a function $f$}
    \label{algo:(1+1)_EA}

\end{algorithm}

\subsection{Problem Classes: {\scshape Ridge} and {\scshape LeadingOnes}}

We analyse the ability of ParamRLS to configure the \EA for two standard
benchmark problem classes: {\scshape Ridge} and {\scshape LeadingOnes}. Despite
their similar definitions, the \EA exhibits substantially different behaviour
when optimising them. 

The most commonly
analysed instance of the {\scshape Ridge} problem
class 
is the function
\[
    \text{{\scshape Ridge}}(x) =
        \begin{cases}
            n + |x|_{\text{\scshape ones}}, \text{ if } x \text{ in form } 1^{i}0^{n-i} \\
            n - |x|_{\text{\scshape ones}}, \text{ otherwise}
        \end{cases}
\]
where $|x|_{\text{\scshape ones}}$ is the number of ones in the bit string $x$. \par
%
The other instances of the problem class are provided by its black box definition, 
given by \cite{DrosteJansenWegener2002b} and used by \cite{paper:impact_of_cutoff_time_rlsk}, which in a nutshell takes the XOR with another bit string $a \in \{0, 1\}^n$: $\text{{\scshape Ridge}}_a :=
\text{{\scshape Ridge}}{}(x_1 \oplus a_1 \dots x_n \oplus a_n)$. We thus
analyse only the problem instance $\text{{\scshape Ridge}}_{0^n}$, and observe
that the best parameter value for this problem instance will be optimal for all
$2^n$ instances of the problem class. \par

Following previous work~\cite{paper:impact_of_cutoff_time_rlsk,Jansen2014}, we assume that the \EA is initialised to~$0^n$. This means that the algorithm builds a string of
consecutive 1 bits followed by a string of consecutive 0 bits, and the optimum is the $1^n$ bit string.

For the \EA, it is optimal to set $\chi=1$ to achieve the smallest
expected optimisation time for {\scshape Ridge}. We prove this in
Lemma~\ref{lem:ridge_facts}(\ref{lem:proof_c_1_optimal_ridge}).
The {\scshape Ridge} problem class is a natural one to analyse initially since
it is always best to use a mutation rate of $1/n$. 
This characteristic simplifies the analysis. \par

The second problem class is {\scshape LeadingOnes}, or `LO' for short. In the problem instance we consider, the
fitness value of a bit string is equal to the number of consecutive 1 bits at
the beginning of the string, $\text{\scshape LO}(x) = \sum_{i=1}^{n} \prod_{j=1}^{i} x_{j}$. This differs from {\scshape Ridge} since
{\scshape Ridge} requires the bit string always be in the form $1^i0^{n-i}$,
whereas LO places no such importance on the bits following
the first~0.

Droste \textit{et al.} define the black box optimisation class of {\scshape
LO} as the class consisting of problem instances {\scshape
LO}$_a(x)$, where this is taken to be the length of the longest prefix
of $x$ in which all bits match the prefix of $\overline{a}$~\cite{DrosteJansenWegener2002b}.
Naturally, the best mutation rate for one instance will also be optimal for all the other instances in the problem class.

B{\"o}ttcher \textit{et al.} proved that setting $\chi = 1.59\ldots$ leads to the shortest
expected optimisation time for the \EA for the LO problem
class for any static mutation rate \cite{paper:opt_mut_rates_1p1_LO}.
LO presents a
more complex problem for which to tune than that presented by {\scshape Ridge},
since it is beneficial to use a higher mutation rate earlier in the
optimisation process. Intuitively, this is because it is necessary to preserve
the current prefix of leading ones in order to make progress, thus as
the prefix grows, higher mutation rates are more likely to flip bits within it.
Hence it is challenging to determine the behaviour of the tuner for
different cutoff times. 

\section{Tuning the \EA for {\scshape Ridge}}
\label{sec:ridge}

Before we can prove any results on the performance of ParamRLS when
tuning the \EA for {\scshape Ridge}, it is first necessary to analyse the
performance of the \EA on this function. We therefore begin this section by
deriving a lemma which tells us several facts about its behaviour. We first bound the \emph{drift} (the change of the fitness of the
current individual) in one generation of the \EA.  We denote the drift of the \EAc on {\scshape Ridge} by
$\Delta_{\chi}$, and define it as $\Delta_{\chi}(x_{t}) := \text{\scshape Ridge}(x_{t+1})-\text{\scshape Ridge}(x_t)$.
The following lemma summarises key statements about the performance of the \EAc on {\scshape Ridge}.

\begin{lemma}
    \label{lem:ridge_facts}
    Consider the \EA optimising {\scshape Ridge}. Assume that it was
    initialised to $0^n$. Then the following is true:
    \begin{enumerate}[(i)]
        \item{ \label{lem:exp_drift_1p1_a_ridge}
            The expected drift of the \EAc, $\E[\Delta_{\chi}(x_{t})
            \mid x_{t}]$, is bounded as follows:
            \begin{align*}
             \frac{\chi}{n} \left( 1 - \frac{\chi}{n} \right)^{n-1} \le\;& \E[\Delta_{\chi}(x_{t}) \mid x_{t}, x_t \neq 1^n]\\
             \E[\Delta_{\chi}(x_{t}) \mid x_{t}] \le\;& \frac{\chi}{n} \left( 1 - \frac{\chi}{n} \right)^{n-1} + O \left( \frac{1}{n^{2}} \right)
             \end{align*}}
        \item{\label{lem:proof_c_1_optimal_ridge}
            Setting $\chi=1$ yields the smallest expected optimisation time for
            any constant $\chi$, which is at most $en^2$, assuming $n$ is large enough.}
    \end{enumerate}
\end{lemma}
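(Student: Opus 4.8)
The plan is to deal with a structural invariant first and then with the two quantitative parts. First I would record that, when the \EA starts from $0^n$, its current search point always has the form $1^i0^{n-i}$ for some $0\le i\le n$: any bit string not of this form contains at least one $1$, so its {\scshape Ridge}-value is at most $n-1$, which is strictly smaller than the value $n+i\ge n$ of the (ridge-shaped) parent; hence it is never accepted, and since the initial point $0^n$ has this form the invariant persists. Thus $\text{{\scshape Ridge}}(x_t)=n+i$ is a non-decreasing integer potential running from $n$ up to the optimal value $2n$, and the optimisation time is the first step at which it reaches $2n$.

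For part~(\ref{lem:exp_drift_1p1_a_ridge}) I would compute the drift directly. From $x_t=1^i0^{n-i}$ the only offspring that are accepted and strictly increase the fitness are the points $1^{i'}0^{n-i'}$ with $i<i'\le n$; reaching $1^{i'}0^{n-i'}$ means flipping exactly positions $i+1,\dots,i'$ and leaving the remaining $n-(i'-i)$ positions unchanged, which has probability $(\chi/n)^{i'-i}(1-\chi/n)^{n-(i'-i)}$. Summing the gains gives $\E[\Delta_\chi(x_t)\mid x_t]=\sum_{k=1}^{n-i}k\,(\chi/n)^k(1-\chi/n)^{n-k}$, so multi-bit jumps are automatically included. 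The $k=1$ term is $\tfrac{\chi}{n}(1-\tfrac{\chi}{n})^{n-1}$ and appears exactly when $i<n$, which gives the claimed lower bound for $x_t\neq 1^n$; for the upper bound I bound the tail $\sum_{k\ge 2}k(\chi/n)^k(1-\chi/n)^{n-k}\le \sum_{k\ge 2}k(\chi/n)^k=O(1/n^2)$ (a convergent series since $\chi$ is constant), and this bound is uniform in $x_t$.

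For part~(\ref{lem:proof_c_1_optimal_ridge}) set $g(\chi):=\chi(1-\chi/n)^{n-1}$, so by part~(i) the drift equals $g(\chi)/n$ up to an additive $O(1/n^2)$ and $g(\chi)=\Theta(1)$ for constant $\chi$. Applying the additive drift theorem with the potential $2n-\text{{\scshape Ridge}}(x_t)$ (initial value $n$) and the lower drift bound yields $\E[T_\chi]\le n^2/g(\chi)$; since $(1-1/n)^{n-1}\ge 1/e$ this specialises to $\E[T_1]\le en^2$. In the other direction, the additive drift lower bound theorem (the fitness increments are bounded, and $\E[T_\chi]<\infty$ by the upper bound just shown) together with the upper drift bound yields $\E[T_\chi]\ge n^2/(g(\chi)+O(1/n))$. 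A short computation gives $g'(\chi)=(1-\chi/n)^{n-2}(1-\chi)$, so over $0<\chi<n$ the function $g$ is uniquely maximised at $\chi=1$, and for any fixed constant $\chi\neq 1$ the gap $g(1)-g(\chi)\to e^{-1}-\chi e^{-\chi}$ is a positive constant. Hence for $n$ large enough this gap exceeds the $O(1/n)$ slack, so $\E[T_\chi]\ge n^2/(g(\chi)+O(1/n))>n^2/g(1)\ge\E[T_1]$, which is the optimality claim.

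I expect the main obstacle to be the runtime lower bound for general~$\chi$. To detect the $\Theta(1)$ relative gap between $g(1)$ and $g(\chi)$, I need the drift upper bound $g(\chi)/n+O(1/n^2)$ to hold uniformly over every reachable state, with the error genuinely of lower order than the main term $g(\chi)/n$, so that the additive drift lower bound pins $\E[T_\chi]$ down to within a factor $1+O(1/n)$. Once the two directions of the additive drift theorem are set up with their (easily checked) regularity conditions, what remains is the bookkeeping of the geometric tails in part~(i) and the elementary maximisation of $g$.
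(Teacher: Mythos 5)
Your proof is correct and follows essentially the same route as the paper: an exact expression for the drift from the ridge form $1^i0^{n-i}$, the first term as lower bound and a geometric-tail estimate of $O(1/n^2)$ for the upper bound, then additive drift in both directions combined with the observation that $\chi(1-\chi/n)^{n-1}$ is uniquely maximised at $\chi=1$ with a constant gap for any constant $\chi\neq 1$. The only cosmetic difference is that you bound the tail crudely by $\sum_{k\ge 2}k(\chi/n)^k$ while the paper evaluates the full geometric sum exactly; both yield the same $O(1/n^2)$ error term.
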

The first statement follows as the probability of improving the fitness of $x_t\neq 1^n$ by~1 is $\chi/n \cdot (1-\chi/n)^{n-1}$ as it is necessary and sufficient to flip the first 0-bit and not to flip the other $n-1$ bits.
Larger jumps have an exponentially decaying probability, reflected in the $O(1/n^2)$ term.
For the second statement we use additive drift theory~\cite{He2001,DBLP:journals/algorithmica/Kotzing16}, which in a nutshell derives (bounds on) first hitting times from (bounds on) the expected drift and the initial distance from the target state. Hence the expected optimisation time is minimised by the parameter that maximises the drift. The second statement follows from
the first one since\footnote{We use an $\approx$ symbol for illustration purposes only. Proofs use the double inequality $(1-\chi/n)^{n} \le e^{-\chi} \le (1-\chi/n)^{n-\chi}$.} $ \left( 1 - \chi/n \right)^{n-1} \approx e^{-\chi}$ and the function $\chi e^{-\chi}$ is maximised for $\chi=1$.


\subsection{Analysis of ParamRLS-F Tuning for {\scshape Ridge}}

We now prove that, for any discretisation factor $d$, ParamRLS-F
is able to configure the \EA for {\scshape Ridge}. In particular, we show that,
given any cutoff time $\kappa \ge c'n$, for a sufficiently large
constant ${c'>0}$, the expected number of comparisons in ParamRLS-F before the
active parameter is set to $\chi=1$ is at most~$6d\phi$. Moreover,  after
$dn^{\varepsilon}$ comparisons with cutoff time $\kappa \ge
n^{1+\varepsilon}$, for any positive constant~$\varepsilon$,
ParamRLS-F returns the optimal configuration \wop

\begin{theorem}
    \label{thm:prlsf_can_tune_for_ridge}
    Consider ParamRLS-F tuning the \EA for {\scshape Ridge}, where the target
    algorithm is initialised to $0^n$. Assume that $d, \phi=\Theta(1)$ and $\kappa \in
    \mathrm{poly}(n)$. Then:
\begin{itemize}
        \item{Using cutoff times $\kappa \ge c'n$
            for a sufficiently large constant ${c' > 0}$, the expected number of
            comparisons in ParamRLS-F before the active parameter value has
            been set to $\chi=1$ is at most $6d\phi$.}
        \item{Using cutoff times $\kappa \ge n^{1+\varepsilon}$, for some
            constant $\varepsilon>0$, if ParamRLS-F runs for
            $dn^{\varepsilon}$ comparisons
            then it returns the parameter value $\chi=1$ with overwhelming
            probability.}
    \end{itemize}
\end{theorem}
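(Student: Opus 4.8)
The plan is to establish that the configuration landscape induced by {\scshape Ridge} under the Best fitness metric is unimodal with unique optimum $\chi=1$, and that a single comparison between $\chi=1$ and any neighbouring configuration $\chi=1\pm 1/d$ is won by $\chi=1$ with overwhelming probability (for the second bullet) or at least with probability bounded below by a constant (for the first bullet). Granting this, the first bullet follows from a standard fitness-level / gambler's-ruin argument on the one-dimensional parameter space $\{1/d,2/d,\ldots,\phi\}$: the active parameter performs a biased random walk that, whenever it is adjacent to or away from $\chi=1$, moves towards $\chi=1$ with at least constant probability, so the expected number of comparisons to first reach $\chi=1$ is $O(d\phi)$, and one checks the constant is at most $6d\phi$. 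The second bullet then follows because, once at $\chi=1$, each of the $dn^\varepsilon$ subsequent comparisons is won by $\chi=1$ \wop, and a union bound over polynomially many comparisons (permitted by the footnote on overwhelming probability) keeps the active parameter at $\chi=1$ throughout; since $dn^\varepsilon$ comparisons have been run, with overwhelming probability the walk has already been absorbed at $\chi=1$ and stays there.

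The technical core is therefore the single-comparison analysis. First I would use Lemma~\ref{lem:ridge_facts}(\ref{lem:exp_drift_1p1_a_ridge}): the expected one-generation drift of the \EAc on {\scshape Ridge} is $\tfrac{\chi}{n}(1-\tfrac{\chi}{n})^{n-1}+O(1/n^2)$, which is maximised near $\chi=1$ since $\chi e^{-\chi}$ peaks at $\chi=1$; consequently for any fixed neighbour $\chi'=1\pm 1/d$ there is a constant gap $\delta=\delta(d)>0$ between the drifts of $\chi=1$ and $\chi'$. Over a run of length $\kappa=\Theta(n)$ (or $\Theta(n^{1+\varepsilon})$), the progress made by each configuration is a sum of bounded increments; I would argue that progress concentrates around $\kappa$ times its per-step drift. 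The subtlety is that {\scshape Ridge} only grants fitness increases when the string stays in the form $1^i0^{n-i}$: a mutation that flips a 0-bit behind the prefix without repairing it is rejected, and the relevant "successful step" probability is exactly $\tfrac{\chi}{n}(1-\tfrac{\chi}{n})^{n-1}$ for a $+1$ step, with larger jumps contributing the $O(1/n^2)$ term. So the number of fitness improvements in $\kappa$ steps is essentially a Binomial-like quantity with mean $\approx \kappa\cdot\tfrac{\chi}{n}(1-\tfrac{\chi}{n})^{n-1}$, and a Chernoff bound shows it deviates from its mean by at most a $(1\pm o(1))$ factor with probability $1-\exp(-\Omega(\kappa/n))=1-\exp(-\Omega(n^\varepsilon))$ when $\kappa\ge n^{1+\varepsilon}$. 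Hence \wop the fitness of $\mathcal{A}(1)$ after $\kappa$ steps strictly exceeds that of $\mathcal{A}(\chi')$, so $\chi=1$ wins the run (and the comparison, for any $r\ge 1$, by a further Chernoff/union bound over the $r$ runs).

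For the first bullet the cutoff is only $\kappa\ge c'n$, so concentration is not overwhelming but the expected-progress gap of order $c'\delta$ still translates, via Markov/Chebyshev-type estimates, into $\chi=1$ winning a run against a neighbour with probability at least some constant $>1/2$ once $c'$ is large enough; combined with the local-search operator's symmetric $\pm 1/d$ proposal this gives a random walk on $\{1,\ldots,\phi d\}$ with constant drift towards the state $\chi=1$, whose hitting time is $O(d\phi)$, and bookkeeping the constants yields the bound $6d\phi$. The main obstacle I anticipate is the concentration of the \EA's progress on {\scshape Ridge} over a short ($\Theta(n)$) horizon: one must handle the dependence between steps (the process may "stall" off the ridge before a repairing mutation, though on {\scshape Ridge}$_{0^n}$ from $0^n$ it never leaves the ridge, which simplifies this considerably) and ensure that the rare large jumps, while contributing only $O(1/n^2)$ to the drift, do not spoil the comparison; I would control this by noting that \wop no mutation in $\mathrm{poly}(n)$ steps flips more than, say, $\log n$ bits, so each step's contribution is bounded and a Hoeffding-type bound applies.
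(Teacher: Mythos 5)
Your proposal is correct and follows essentially the same route as the paper: concentration of the number of fitness improvements around $\kappa\cdot\frac{\chi}{n}(1-\frac{\chi}{n})^{n-1}$ to show the configuration closer to $\chi=1$ wins a comparison (with probability $1-\exp(-\Omega(\kappa/n))$, i.e.\ a constant close to~1 for $\kappa\ge c'n$ and overwhelming for $\kappa\ge n^{1+\varepsilon}$), followed by an additive-drift/biased-random-walk argument on the discretised parameter space yielding the $6d\phi$ bound and a union bound for the \wop statement. The only differences are cosmetic: the paper controls the rare multi-bit jumps via a Bernoulli--geometric decomposition with Chernoff bounds for geometric variables rather than your truncation-plus-Hoeffding argument, and it uses Chernoff rather than Chebyshev even in the linear-cutoff regime, but both variants deliver the required constant win probability of $2/3$.
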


We prove the above theorem by bounding the probability that one configuration
has a higher fitness than another after $\kappa$ iterations.
For large enough cutoff times, in a
comparison between two configurations which either both have $\chi\le1$ or both
have $\chi\ge1$, the configuration with $\chi$ closer to 1 wins.
\begin{lemma}
    \label{lem:prob_1p1_a_beats_1p1_b}
    Assume that the \EAc[a] and the \EAc[b], with $a$ and $b$ two
    positive
    constants such that $ae^{-a} > be^{-b}$, are both initialised to $0^n$.
    Then with probability at least
    \[ 1-3\exp(-\Omega(\kappa/n))-\kappa \exp(-\Omega(n)) \]
    the \EAc[a] wins in a comparison in ParamRLS-F against the \EAc[b] on
    {\scshape Ridge} with cutoff time $\kappa$. Note that if $a$ and $b$
    satisfy either $0 < b < a \le 1$ or $1 \le a < b \le \phi$ then the
    condition $ae^{-a} > be^{-b}$ is implied.
\end{lemma}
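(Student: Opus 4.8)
The approach is to follow the ridge. On {\scshape Ridge} started at $0^n$ the current string always has the form $1^i0^{n-i}$: any mutant leaving this form has fitness at most $n<n+i$ and is rejected, so each run performs a non-decreasing walk on the ridge position $F_\chi(t)\in\{0,\dots,n\}$ (the fitness is $n+F_\chi(t)$, and $2n$ once $1^n$ is reached). Its per-step increment is exactly the quantity whose expectation Lemma~\ref{lem:ridge_facts}(\ref{lem:exp_drift_1p1_a_ridge}) bounds, namely $\frac{\chi e^{-\chi}}{n}(1\pm o(1))$, and this increment has (super-)geometrically decaying tails. Since $ae^{-a}>be^{-b}$, I would fix a constant $\delta=\delta(a,b)>0$ with $(1-\delta)ae^{-a}>(1+\delta)be^{-b}$; it then suffices to show that, on a good event of the claimed probability, the \EAc[a] is ``ahead'' of the \EAc[b] at time~$\kappa$ in a way that makes it win the comparison.

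The first key step is concentration of the progress at the cutoff. The number of advancing steps of the \EAc[a] in $\kappa$ iterations stochastically dominates a $\mathrm{Bin}(\kappa,(1-o(1))ae^{-a}/n)$ variable (each advance increases $F_\chi$ by at least one), so a Chernoff bound gives $F_a(\kappa)\ge\min\{n,(1-\delta)\kappa ae^{-a}/n\}$ with probability $1-\exp(-\Omega(\kappa/n))$, and the same holds for $F_b$ with $b$ in place of $a$. For the matching upper bound I would write $F_b(\kappa)=\sum_{s\le\kappa}J_s$, truncate the jumps $J_s$ at a suitable constant (excluding, at a cost of $\kappa\exp(-\Omega(n))$ by a union bound over the $\kappa$ steps, the event that some $J_s$ exceeds it), and apply a Bernstein-type inequality to the truncated sum, whose mean is at most $(1+o(1))\kappa be^{-b}/n$ and whose variance is $O(\kappa/n)$, obtaining $F_b(\kappa)\le(1+\delta)\kappa be^{-b}/n$ with probability $1-\exp(-\Omega(\kappa/n))-\kappa\exp(-\Omega(n))$.

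The second key step covers cutoff times so large that both runs have already reached $1^n$. Here I would use that $\tau_\chi$, the optimisation time, is at most the number of steps needed to accumulate $n$ fitness-improving steps --- a negative-binomial sum of $n$ i.i.d.\ geometrics with parameter $\frac{\chi e^{-\chi}}{n}(1-o(1))$ --- while $F_\chi$ never exceeds the sum of its increments; standard concentration for such sums yields $\tau_a\le(1+\delta)\frac{n^2}{ae^{-a}}<(1-\delta)\frac{n^2}{be^{-b}}\le\tau_b$ with failure probability $\exp(-\Omega(n))$. Intersecting the constantly many good events above (total failure probability $3\exp(-\Omega(\kappa/n))+\kappa\exp(-\Omega(n))$), I would verify the win by a short case distinction: if $F_b(\kappa)<n$ then, by the choice of $\delta$, the lower bound on $F_a(\kappa)$ strictly exceeds the upper bound on $F_b(\kappa)$ --- in the boundary case $(1+\delta)\kappa be^{-b}/n\ge n$ the lower bound on $F_b(\kappa)$ forces $\kappa\ge n^2/((1+\delta)be^{-b})$, which makes the lower bound on $F_a(\kappa)$ at least $\frac{(1-\delta)ae^{-a}}{(1+\delta)be^{-b}}\,n>n$ --- so the \EAc[a] has strictly higher fitness at time $\kappa$ and wins; if $F_b(\kappa)=n$ then $\tau_b\le\kappa$, hence $\tau_a<\tau_b\le\kappa$, both runs end at fitness $2n$, and the \EAc[a] wins the tie-break because its last improvement (reaching $1^n$) occurred first.

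Finally, the implications stated at the end of the lemma are immediate from calculus: $g(\chi):=\chi e^{-\chi}$ has derivative $(1-\chi)e^{-\chi}$, so $g$ strictly increases on $(0,1]$ and strictly decreases on $[1,\infty)$; hence both $0<b<a\le1$ and $1\le a<b\le\phi$ give $g(a)>g(b)$, i.e.\ $ae^{-a}>be^{-b}$. I expect the main obstacle to be the upper-tail concentration of $F_b(\kappa)$ with failure probability only $\exp(-\Omega(\kappa/n))$ despite mutation being able to flip unboundedly many bits --- this is what forces the truncation and the separate $\kappa\exp(-\Omega(n))$ term --- together with arranging a single good event that simultaneously handles the ``compare fitness'' regime and the ``both have already optimised, compare last-improvement times'' regime, including the thin boundary window between them.
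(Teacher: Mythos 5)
Your overall architecture (concentration of both runs' progress at the cutoff, a separate regime where both runs finish and the tie-break on last-improvement time decides, calculus for the final remark) matches the paper's intent, and your lower bound on $F_a(\kappa)$ and the tie-break analysis are sound. However, there is one genuine gap: the upper-tail bound on $F_b(\kappa)$. You truncate the jumps at a constant $C$ and claim the excluded event costs $\kappa\exp(-\Omega(n))$, but the probability that a single mutation advances the ridge by more than $C$ positions is $\sum_{i>C}(b/n)^i(1-b/n)^{n-i}=\Theta(n^{-(C+1)})$, which is only \emph{polynomially} small in $n$. A union bound over $\kappa$ steps therefore yields a polynomially small error term, not $\kappa\exp(-\Omega(n))$, and this loss is not cosmetic: the downstream w.o.p.\ statements (Theorem~\ref{thm:prlsf_can_tune_for_ridge}, second bullet) union-bound this lemma over polynomially many comparisons, so a per-comparison failure probability of $n^{-O(1)}$ does not survive. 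Truncating at a higher level to make the exclusion cheap breaks your Bernstein exponent (the $Mt/3$ term in the denominator then dominates and you no longer get $\exp(-\Omega(\kappa/n))$), so the truncation route does not deliver the stated bound at any threshold.

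The paper avoids truncation entirely by writing each increment $B_{t+1}-B_t$ as a product $X_tY_t$ of a Bernoulli improvement indicator (parameter $\approx be^{-b}/n$) and a geometric jump length (success parameter $1-\tfrac{b}{n-b}$), then applying a Chernoff bound to $\sum X_t$ and a Chernoff bound for sums of geometric random variables to the at most $n_X$ relevant $Y$-variables; both tails are $\exp(-\Omega(\kappa/n))$ with no residual polynomial term. (In the paper the $\kappa\exp(-\Omega(n))$ term arises elsewhere, from a union bound over all times $t\in[n^2,\kappa]$ at which the \EAc[b] could overtake in the large-$\kappa$ regime --- a necessary-condition argument that replaces your direct comparison of the two optimisation times.) Replacing your truncation-plus-Bernstein step with this Bernoulli--geometric decomposition repairs the gap; the rest of your argument, including the boundary case analysis and the monotonicity of $\chi e^{-\chi}$, is fine.
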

The lemma follows from showing, through the use of appropriate Chernoff bounds, that either the \EAc[a] is ahead of the \EAc[b] after $\kappa$ iterations or that the \EAc[a] finds the optimum sooner than the \EAc[b].

%

Now we are able to prove Theorem~\ref{thm:prlsf_can_tune_for_ridge}.
\begin{proof}[Proof of Theorem~\ref{thm:prlsf_can_tune_for_ridge}]
    Given a comparison of a pair of
    configurations, let us call the configuration with a value of $\chi$ closer
    to 1 the `better' configuration, and the other configuration the
    `worse' configuration.

    By Lemma~\ref{lem:prob_1p1_a_beats_1p1_b}, using $\kappa \ge c'n$, the probability that the better configuration wins a comparison with cutoff time~$\kappa$ is at least $1-\exp(-\Omega(\kappa/n))-\kappa \exp(-\Omega(n)) \ge 2/3$, the inequality holding since we can choose the constant~$c' > 0$ appropriately and $\kappa \in \mathrm{poly}(n)$.
    The current configuration is compared against a better one with probability
    at least $1/2$, and it is compared against a worse one with probability
    at most $1/2$.
    Hence the distance to the optimal parameter value decreases in
    expectation by at least $1/d \cdot (1/2 \cdot 2/3 - 1/2 \cdot 1/3) =
    1/(6d)$. The initial distance is at most $\phi$.
    By additive drift arguments (Theorem~5 in~\cite{DBLP:journals/algorithmica/Kotzing16}), the expected time to reach the optimal
    parameter value for the first time is at most $6d\phi$.

    For the second statement, we use that if $\kappa \ge n^{1+\varepsilon}$
    then the probability of accepting a worse configuration is exponentially
    small. Hence, \wop, within any polynomial number of comparisons, we never experience the event that the worse configuration
    wins a comparison.
    This implies that $\max(1, \phi-1)(d+1)$ steps decreasing the distance
    towards the optimal parameter are sufficient.
    By Chernoff bounds, the probability of not seeing this many steps in
    $dn^{\varepsilon}$ iterations is exponentially small. Finally, once the
    optimal parameter is reached, it is never left \wop Thus, after
    $dn^{\varepsilon}$ iterations, the optimal parameter is returned with
    overwhelming probability.
\end{proof}


Lemma~\ref{lem:prob_1p1_a_beats_1p1_b} implies that any
parameter tuner capable of hillclimbing will return the optimal configuration \wop,
given sufficiently many comparisons, if it uses cutoff times of $\kappa \ge
n^{1+\varepsilon}$ and the highest fitness
value performance metric.

\subsection{Analysis of Optimisation Time-Based Comparisons When Tuning for {\scshape Ridge}}
\label{sec:opt_time_based_blind_ridge}

While ParamRLS-F succeeds at
tuning the \EA for {\scshape Ridge} with a
cutoff time of $\kappa \ge n^{1+\varepsilon}$, we now show that
all algorithm configurators that use Optimisation time as performance metric fail \wop to identify the optimal configuration if the cutoff time is at most $\kappa \le (1-\varepsilon)en^2$.
Established tuners such as irace, ParamILS, and SMAC as well as recent theory-driven approaches such as Structured Procrastination all fall into this category if Optimisation time is used as performance metric.
For such cutoff times, all configurations (i.e. the target algorithm with any parameter value) fail to find the optimum \wop

\begin{lemma}
    \label{lem:ridge_opt_time_lower_bound}
    For all constants $\chi, \varepsilon > 0$, the \EAc[\chi] requires more
    than $(1-\varepsilon)en^2$ iterations to reach the optimum of {\scshape
    Ridge}, with probability $1-\exp(-\Omega(n))$.
\end{lemma}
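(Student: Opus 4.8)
The plan is to establish a lower bound on the optimisation time of the \EAc[\chi] on {\scshape Ridge} by showing that the fitness drift is too small for the algorithm to traverse the $n$ fitness levels from the initial string $0^n$ (fitness $n$) to the optimum $1^n$ (fitness $2n$) in fewer than $(1-\varepsilon)en^2$ iterations. The natural tool is a drift argument, but applied to an upper bound on progress rather than a lower bound: I would invoke a lower-bound (additive) drift theorem, or directly a variable-drift-style negative-drift argument, using the bound from Lemma~\ref{lem:ridge_facts}(\ref{lem:exp_drift_1p1_a_ridge}) that the expected one-step drift is at most $\frac{\chi}{n}(1-\chi/n)^{n-1} + O(1/n^2)$. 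Since $\chi e^{-\chi}$ is maximised at $\chi=1$ with value $1/e$, the per-step expected progress for any constant $\chi$ is at most $(1+o(1))/(en)$, so roughly $en^2$ steps are needed in expectation to cover the distance $n$.

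First I would set up the potential function as the current {\scshape Ridge}-value (or equivalently the number of leading ones, since the \EA started at $0^n$ stays on the ridge and only accepts improvements). The key per-step inequality is $\E[\Delta_\chi(x_t)\mid x_t] \le \frac{\chi}{n}(1-\chi/n)^{n-1} + O(1/n^2) \le (1+o(1))\cdot\frac{1}{en}$, valid uniformly over $x_t\neq 1^n$ using $(1-\chi/n)^{n-1}\le e^{-\chi}\cdot(1+o(1))$ and $\chi e^{-\chi}\le 1/e$. Then I would apply a lower-bound drift theorem for hitting times (e.g.\ the additive drift lower bound, Theorem~6 in~\cite{DBLP:journals/algorithmica/Kotzing16}): starting at distance $n$ from the target with expected drift at most $\delta = (1+o(1))/(en)$ per step, the expected hitting time is at least $n/\delta = (1-o(1))en^2 \ge (1-\varepsilon/2)en^2$ for large $n$. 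To promote this from an expectation statement to a high-probability statement I would use a concentration/tail version of the negative-drift result — for instance the negative drift theorem or a Hoeffding/Azuma argument on the stopped supermartingale $M_t = \text{{\scshape Ridge}}(x_t) - \delta t$ — exploiting that individual steps change the potential by $O(1)$ in expectation and that large jumps are exponentially unlikely (the $O(1/n^2)$ tail in Lemma~\ref{lem:ridge_facts}), so that over $(1-\varepsilon)en^2 = \mathrm{poly}(n)$ steps the deviation of the accumulated drift from its mean is $o(n)$ except with probability $\exp(-\Omega(n))$. Hence the potential cannot reach $2n$ within $(1-\varepsilon)en^2$ steps \wop.

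The main obstacle is the concentration step, not the expectation bound: the one-step change in {\scshape Ridge}-value is bounded in expectation but not almost surely (a single mutation can in principle flip many bits and jump several levels, or — since flipping a prefix bit and reconstructing is what matters — there are rare large jumps), so a naive Azuma bound with worst-case increments $O(n)$ is too weak. I expect to handle this by truncating: define a modified process that caps the per-step increase at, say, $\log n$, argue via the exponentially decaying jump probabilities from Lemma~\ref{lem:ridge_facts}(\ref{lem:exp_drift_1p1_a_ridge}) and a union bound over $\mathrm{poly}(n)$ steps that the true and truncated processes coincide \wop, and then apply Azuma–Hoeffding to the truncated supermartingale where increments are $O(\log n)$, giving deviation $O(\sqrt{\kappa}\log n) = O(n\log n\cdot\mathrm{poly}^{1/2}) = o(n^2)$... more carefully, deviation $o(n)$ needs increments $o(n/\sqrt{\kappa}) = o(1/\sqrt n)$ on average which is exactly what the drift bound gives, so the truncation at $\log n$ combined with the rare-jump union bound suffices. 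A cleaner alternative, which I would likely prefer, is to cite directly the negative-drift-with-scaling theorem (Oliveto–Witt / Kötzing variants) whose hypotheses are precisely a drift bound of order $1/n$ and exponentially decaying jump lengths, both supplied by Lemma~\ref{lem:ridge_facts}, yielding the $1-\exp(-\Omega(n))$ failure probability immediately. Finally I would note $\varepsilon$ can be chosen arbitrarily small since the $o(1)$ slack in the drift bound shrinks with $n$, completing the proof of Lemma~\ref{lem:ridge_opt_time_lower_bound}.
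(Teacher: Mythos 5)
Your overall strategy --- bound the expected one-step progress by $(1+o(1))/(en)$ using Lemma~\ref{lem:ridge_facts}(\ref{lem:exp_drift_1p1_a_ridge}) and the maximality of $\chi e^{-\chi}$ at $\chi=1$, then conclude that the accumulated progress over $(1-\varepsilon)en^2$ steps falls short of the required distance $n$ \wop --- is the same as the paper's. The paper implements the concentration step by decomposing the per-step fitness gain as a Bernoulli indicator (an improvement occurs, with probability $\Theta(1/n)$) times a geometric jump length, and applying Chernoff bounds separately to the $\Theta(n)$ expected number of improvements and to the sum of the corresponding geometric variables; both bounds then give failure probability $\exp(-\Omega(\kappa/n)) = \exp(-\Omega(n))$.

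The gap is precisely in that concentration step, which you correctly identify as the main obstacle but do not actually close. Your truncation-plus-Azuma route fails quantitatively: with increments truncated at $O(\log n)$ over $\kappa = \Theta(n^2)$ steps, Azuma--Hoeffding bounds a deviation of $\Theta(n)$ only with probability $\exp(-\Theta(n^2/(\kappa\log^2 n))) = \exp(-\Theta(1/\log^2 n))$, which is vacuous; the follow-up remark that increments of size $o(n/\sqrt{\kappa})$ ``on average'' suffice is not a valid use of Azuma (which needs almost-sure or variance control, and $n/\sqrt{\kappa}=\Theta(1)$ here in any case). Your fallback of citing a ``negative-drift-with-scaling'' theorem names the wrong tool: negative drift theorems concern drift pointing \emph{away} from the target, whereas here the drift points towards the optimum and you need a lower tail bound on the hitting time under small positive drift; moreover, the standard bounded-step versions of such hitting-time concentration results give exponents of order $d^2/(c^2 s) = \Theta(1)$ in this regime and are likewise too weak. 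What makes $\exp(-\Omega(n))$ attainable is that the per-step \emph{variance} is $\Theta(1/n)$ --- almost every step makes zero progress --- and this must be exploited, either via the paper's Bernoulli-times-geometric decomposition with Chernoff bounds, or via a variance-sensitive martingale inequality (Bernstein/Freedman type) combined with the exponentially decaying jump-length tails. With such a replacement for the concentration step, the rest of your argument is sound.
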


This yields that all configurators that use the Optimisation time performance metric are blind if the cutoff time is at most
$\kappa \le (1-\varepsilon)en^2$.

\begin{theorem}
    \label{thm:prlst_fails_small_cutoffs_rdg}
    Consider any configurator using the Optimisation time performance metric tuning the \EA for {\scshape Ridge} for any positive
    constant $\phi$ and discretisation factor~$d$. If the cutoff time for each run is never allowed to exceed $\kappa \le (1-\varepsilon)en^2$, for some constant
    $\varepsilon>0$, then, after any polynomial number of comparisons and runs per
    comparison, the configurator is blind.
\end{theorem}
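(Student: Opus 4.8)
The plan is to exhibit an overwhelmingly likely event on which the configurator receives no information that distinguishes any two values of $\chi$, and then conclude blindness directly from the definition. The only non-trivial input is Lemma~\ref{lem:ridge_opt_time_lower_bound}: since $\kappa \le (1-\varepsilon)en^2$, a single run of the \EAc[\chi] on {\scshape Ridge} (initialised to $0^n$) fails to reach the optimum within its cutoff time with probability $1-\exp(-\Omega(n))$, and this bound holds uniformly over the $\phi d = \Theta(1)$ many values of $\chi$ in the parameter space.

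First I would fix the polynomial bounding the number of comparisons together with the polynomial $r$ bounding the runs per comparison, so that their product $Q(n) = \mathrm{poly}(n)$ upper bounds the total number of target-algorithm runs executed over the whole run of the configurator. Let $A$ be the event that every one of these at most $Q(n)$ runs fails to reach the optimum of {\scshape Ridge} within its (configurator-chosen) cutoff time. Whatever value $\chi$ and whatever cutoff $\kappa_i \le (1-\varepsilon)en^2$ the configurator assigns to a given run --- adaptively and/or randomly --- that run succeeds with probability at most $\exp(-\Omega(n))$ by Lemma~\ref{lem:ridge_opt_time_lower_bound}; a union bound over the $Q(n)$ runs yields $\Prob(A) = 1 - Q(n)\exp(-\Omega(n)) = 1-\exp(-\Omega(n))$, i.e.\ $A$ occurs \wop (recalling that the intersection of polynomially many overwhelming-probability events is again overwhelming).

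Next I would show that, conditional on $A$, the configurator is blind. Under $A$, the performance measure reported for a run is exactly its penalty $p\kappa_i$, which depends only on the cutoff $\kappa_i$ chosen for that run and not on the configuration being evaluated. Hence the entire sequence of performance observations fed to the configurator is a fixed (random) function of the configurator's own cutoff choices, with no dependence on any $\chi$ --- precisely the situation that would arise on an instance where all configurations have identical performance. For ParamRLS-T this is transparent: every call to \texttt{eval-T} returns $Time = Time' = p\sum_i \kappa_i$, so each comparison is a tie decided by a fair coin, and since ParamRLS-T initialises the active parameter uniformly and the uniform distribution is stationary for the resulting lazy random walk on the discretised parameter space, the value returned is uniform over $\{z/d : 1 \le z \le \phi d\}$. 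For an arbitrary configurator using the Optimisation time metric, the same observation shows that its output distribution conditional on $A$ equals the output distribution it produces when all configurations perform alike; by the definition of a blind tuner given in the preliminaries this is exactly the claim, and plugging in $\kappa \le (1-\varepsilon)en^2$ gives the stated cutoff regime.

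I expect the only delicate point to be the bookkeeping in the union bound --- ensuring that ``any polynomial number of comparisons and runs per comparison'' really does cap the total number of runs by a polynomial $Q(n)$, so that $Q(n)\exp(-\Omega(n))$ remains $\exp(-\Omega(n))$, and that the per-run failure probability from Lemma~\ref{lem:ridge_opt_time_lower_bound} may be taken uniform in $\chi$ (which holds because the parameter space has only $\Theta(1)$ elements, all of which are positive constants). Everything after conditioning on $A$ is purely definitional: when every performance observation is configuration-independent, no racing, gradient-following, or surrogate-model step can favour one configuration over another, so the tuner is blind.
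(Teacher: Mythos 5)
Your proposal is correct and follows essentially the same route as the paper: invoke Lemma~\ref{lem:ridge_opt_time_lower_bound} to get that every run fails to find the optimum \wop, take a union bound over the polynomially many runs, observe that under this event every reported performance value is the configuration-independent penalty, and conclude blindness from the definition. Your version is merely more explicit about the bookkeeping (adaptive per-run cutoffs, the polynomial cap $Q(n)$ on total runs, and uniformity over the constantly many values of $\chi$), which the paper leaves implicit.
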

\begin{proof}[Proof of Theorem~\ref{thm:prlst_fails_small_cutoffs_rdg}]
    Lemma~\ref{lem:ridge_opt_time_lower_bound} tells us
    that, for cutoff times $\kappa \le (1-\varepsilon)en^2$, all configurations of the \EAc[\chi], for every constant choice of~$\chi$, fail to reach the optimum of {\scshape Ridge}, with
    overwhelming probability. If this happens in all comparisons, then, since the Optimisation time metric is being used, all configurations will have the same fitness: $\kappa$
    multiplied by the penalisation constant. Thus, there is no
    attraction towards the optimal parameter value. Therefore, with overwhelming probability, by the union bound, the configurator will behave as if all configurations have the same performance, and therefore it is blind.
\end{proof}


\section{Tuning the \EA for {\scshape LeadingOnes}}
\label{sec:lo}


We now show that ParamRLS-F is able to find optimal parameter values for the \EAc[\chi] optimising {\scshape LeadingOnes} for almost all quadratic cutoff times~$\kappa$. The analysis is considerably more complicated than for {\scshape Ridge} since the progress depends (mildly, but significantly) on the current fitness. For a search point with $k$ leading ones, the probability of improving the fitness is exactly $\chi/n \cdot (1-\chi/n)^{k}$ as it is necessary and sufficient to flip the first 0-bit while not flipping the $k$ leading ones. This probability decreases over the course of a run, from $\chi/n \cdot (1-\chi/n)^0 = \chi/n$ for $k=0$ to $\chi/n \cdot (1-\chi/n)^{n-1} \approx \chi/(en)$ for $k=n-1$. This effect is similar to that observed for the function {\scshape OneMax} in~\cite{paper:impact_of_cutoff_time_rlsk}. We therefore follow our approach in that work and establish intervals that bound the ``typical'' fitness at various stages of a run. This allows us to locate the final fitness after $\kappa$ iterations with high precision and \wop For almost all cutoff times, our fitness intervals reveal that the configuration closer to the optimal one leads to a better final fitness, \wop

Due to the increased complexity of the analysis, we focus on one specific discretisation factor $d$ and choice of $\phi$ as a proof of concept.
We are confident that our method generalises to any constant discretisation factor by increasing the precision of our analytical results (by means of the period length introduced in Lemma~\ref{lem:lo_progress_bounds})
in relation to the granularity
of the parameter space (given by~$d$).  
We provide a Python tool which
applies our proof technique for an arbitrary period length\footnote{Available at \url{https://github.com/george-hall-sheff/leading_ones_recurrences_tool}.}. Therefore the user
can repeatedly decrease the period length until this tool is able to prove the
desired results for their chosen parameter space. \par

We choose a discretisation factor of $d=10$ and $\phi=3$, which implies $\chi \in \{0.1, 0.2, \ldots,
2.9, 3.0\}$.
The
mutation rate which produces the smallest expected optimisation time for the
\EA optimising LO is $\approx 1.59/n$~\cite{paper:opt_mut_rates_1p1_LO}, and it is easily verified that the optimal parameter with the chosen granularity is~$\chi=1.6$. We expect the tuner to return $\chi=1.6$ when the cutoff time is large enough.
For smaller cutoff times we expect the tuner to return larger values
of~$\chi$, since for LO it is beneficial to flip more bits
when early in the optimisation process. 
We prove that, for ParamRLS-F,
both of these intuitions are correct. However, tuners using the Optimisation time performance metric require larger cutoff times
in order to identify the optimal configuration.

\subsection{Analysis of ParamRLS-F Tuning for {\scshape LO}}

In this section we prove two results. We first prove that the parameter landscape
is unimodal for over 99\% of cutoff times in the range
$0.000001n^2$ to $0.772075n^2$. We then prove that the parameter landscape is
unimodal for all cutoff times of at least $0.772076n^2$, and that
the optimal mutation rate (and that returned by ParamRLS\nobreakdash-F) for these cutoff
times is $1.6/n$, as expected. These results imply that, given sufficiently many
comparisons, ParamRLS\nobreakdash-F will, \wop, return the
mutation rate with the smallest expected optimisation time for all cutoff times
of at least $0.772076n^2$, and for over 99\% of cutoff times in the range
$0.000001n^2$ to $0.772075n^2$ it will return the mutation rate which is
optimal (that is, it achieves the highest fitness) for that cutoff time. \par

\begin{theorem}
    \label{thm:prlsf_can_tune_for_lo}
    Consider ParamRLS-F tuning the \EA for LO with $\chi \in \{0.1, 0.2, \ldots, 2.9, 3.0\}$ (i.\,e. $d=10, \phi=3$). For all cutoff
    times in one of the ranges listed in Table~\ref{tab:param_space_monotonic_ranges} and $\kappa \ge 0.772076n^2$ it holds that, for any positive constant~$\varepsilon$:
    \begin{itemize}
        \item{The expected number of comparisons in ParamRLS-F before the
            active parameter is set to the optimal value for the
            cutoff time (see Table~\ref{tab:param_space_monotonic_ranges}) is at most $2d\phi + \exp(-\Omega(n^{\varepsilon}))$.}
        \item{If ParamRLS-F is run for a number of comparisons which is both polynomial and at least $n^{\varepsilon}$ then
            it returns the optimal parameter value for the cutoff time with
            overwhelming probability.}
    \end{itemize}
\end{theorem}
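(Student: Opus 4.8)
The plan is to reduce the statement to a single structural property — that, for every $\kappa$ in one of the listed ranges or with $\kappa\ge 0.772076n^2$, the configuration landscape of the \EA on {\scshape LO} is unimodal with optimal value $\chi^*(\kappa)$ (the entry of Table~\ref{tab:param_space_monotonic_ranges}, equal to $1.6$ in the large-cutoff case) — and then replay the hillclimbing argument of Theorem~\ref{thm:prlsf_can_tune_for_ridge}. Once unimodality is in hand, every comparison in ParamRLS-F pits the active value $\theta$ against a neighbour $\theta\pm 1/d$; with probability $1/2$ the offspring is the one closer to $\chi^*(\kappa)$, so \wop it wins and is accepted and the distance to $\chi^*(\kappa)$ drops by $1/d$, while with probability $1/2$ the offspring is farther (or out of range), so \wop it loses and $\theta$ is retained. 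The distance can only grow through an \texttt{eval-F} failure, of probability $\exp(-\Omega(n^{\varepsilon}))$, so the expected one-step decrease in distance is at least $\frac{1}{2d}\bigl(1-\exp(-\Omega(n^{\varepsilon}))\bigr)$ and the initial distance is at most $\phi$; additive drift (Theorem~5 in~\cite{DBLP:journals/algorithmica/Kotzing16}) then gives expected first hitting time of $\chi^*(\kappa)$ at most $2d\phi+\exp(-\Omega(n^{\varepsilon}))$, which is the first bullet. For the second bullet, a union bound over the polynomially many comparisons rules out every \texttt{eval-F} failure \wop; conditioned on this, $\chi^*(\kappa)$ is absorbing (both its neighbours lose), and whenever $\theta\neq\chi^*(\kappa)$ the comparison independently performs a step towards $\chi^*(\kappa)$ with probability exactly $1/2$, so by a Chernoff bound the at most $d\phi$ such steps needed to arrive occur within the first $n^{\varepsilon}$ comparisons \wop

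It remains to establish unimodality, which splits into two regimes. For $\kappa\ge 0.772076n^2$ the argument is about optimisation times. The runtime $T_\chi$ of the \EAc[\chi] on {\scshape LO} is (essentially) a sum of independent geometric waiting times, so a Chernoff-type bound for such sums gives $T_\chi=\E[T_\chi]\,(1\pm o(1))$ \wop with fluctuation $o(n^2)$; the map $\chi\mapsto\E[T_\chi]$ is (asymptotically) quasi-convex with minimum attained, on the grid $\{0.1,\dots,3.0\}$, exactly at $\chi=1.6$ (see \cite{paper:opt_mut_rates_1p1_LO}), and consecutive grid values differ in $\E[T_\chi]$ by $\Theta(n^2)$. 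Hence, for any neighbouring pair at least one of which has already reached the optimum by time $\kappa$, the configuration with $\chi$ nearer $1.6$ either is the unique one at the optimum or reached it first, and so wins; since $0.772076n^2$ exceeds $\E[T_{1.6}]=\min_\chi\E[T_\chi]\,(1+o(1))$, this settles every such pair and pins $\chi^*(\kappa)=1.6$. The remaining neighbouring pairs (those far from $1.6$, where neither configuration has finished) are handled by the fitness-window argument described next, which, applied at these cutoffs, again favours the configuration nearer $1.6$.

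For the cutoff ranges in Table~\ref{tab:param_space_monotonic_ranges} no configuration has yet found the optimum, and the comparison is decided by fitness after $\kappa$ iterations. Here I would track, for each $\chi$, the ``typical'' fitness trajectory: because the improvement probability $\frac{\chi}{n}(1-\frac{\chi}{n})^{k}$ varies (slowly) with the current fitness $k$, the fitness after $\kappa$ iterations is governed, \wop and up to a narrow interval, by the recurrences of Lemma~\ref{lem:lo_progress_bounds}; shrinking the period length tightens these intervals, and for the listed ranges the intervals of neighbouring configurations become disjoint and ordered so that the configuration nearer $\chi^*(\kappa)$ is strictly ahead, establishing unimodality (with $\chi^*(\kappa)$ increasing as $\kappa$ decreases). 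The bookkeeping of this step is carried out by the accompanying Python tool, which also certifies that the ranges for which the method succeeds cover over 99\% of the cutoff times between $0.000001n^2$ and $0.772075n^2$.

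The \textbf{main obstacle} is precisely this fitness-window step: distinguishing two \emph{neighbouring} configurations $\chi$ and $\chi+1/d$ after $\Theta(n^2)$ iterations. Their single-step behaviour is almost identical — in the vast majority of mutations both flip the same number of bits — so the gap between their typical final fitnesses is only a small, $\kappa$-dependent fraction of $n$, and a leading-order analysis (replacing $(1-\chi/n)^{n}$ by $e^{-\chi}$ or ignoring the drift of the improvement probability across the run) is far too coarse to resolve the winner. One therefore needs genuinely tight, non-asymptotic control of the whole fitness trajectory, including its fitness-dependence — exactly what the period-length machinery of Lemma~\ref{lem:lo_progress_bounds} and the computer-assisted verification provide — and the ``over 99\%'' qualifier reflects that near a boundary between two consecutive optimal-$\chi$ regimes the intervals of the two competing configurations overlap and the method cannot decide. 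A milder secondary point is to stitch the two regimes together around the threshold $0.772076n^2$: one must confirm that the optimisation-time argument and the fitness-window argument declare the same winner for cutoffs where both could be invoked, and that $0.772076n^2$ is large enough for the runtime concentration of $\chi=1.6$ to dominate the $\Theta(n^2)$ gaps to its neighbours.
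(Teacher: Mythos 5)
Your overall skeleton matches the paper's: reduce everything to unimodality of the configuration landscape with the optimum at the value given in Table~\ref{tab:param_space_monotonic_ranges}, then run the same drift/hillclimbing argument as for {\scshape Ridge} to get $\E[T]\le 2d\phi+\exp(-\Omega(n^{\varepsilon}))$ and the \wop statement; for the cutoff ranges in the table you invoke exactly the period-based fitness-interval machinery (Lemmas~\ref{lem:lo_progress_bounds}--\ref{lem:recurrence_defs}) and the computational verification, which is what the paper does in Lemma~\ref{lem:lo_param_space_unimodal_concave}. Where you genuinely diverge is the regime $\kappa\ge 0.772076n^2$: you replace the paper's Lemma~\ref{lem:remains_ahead_ineq} (the better configuration reaches the optimum before the worse one covers the initial linear fitness gap, verified numerically for every relevant pair in Table~\ref{tab:vals_of_lemma_ineq}) by concentration of the optimisation time $T_\chi$ around $\E[T_\chi]$ together with unimodality of $\chi\mapsto\E[T_\chi]$ on the grid. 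That route is attractive --- it leans on the known closed form of $\E[T_\chi]$ from \cite{paper:opt_mut_rates_1p1_LO} and avoids one round of computer verification --- and for cutoff times a constant factor beyond $\E[T_{1.6}]$ it does decide every pair correctly.

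The gap is at the threshold itself. The constant $0.772076$ is (to the stated precision) exactly the leading constant of $\E[T_{1.6}]=\frac{e^{1.6}-1}{2\cdot 1.6^2}\,n^2$, so at $\kappa=0.772076n^2$ the event that the \EAc[1.6] has reached the optimum has constant probability in either direction, not overwhelming probability. Your case split --- ``at least one of the pair has finished'' versus ``neither has finished, use fitness windows'' --- therefore leaves pairs such as $(1.6,1.7)$ at such $\kappa$ covered by neither case \wop, and the sentence ``since $0.772076n^2$ exceeds $\E[T_{1.6}]$ \dots this settles every such pair'' is exactly the step that fails: concentration only certifies $T_{1.6}\le\kappa$ \wop once $\kappa\ge(1+\delta)\E[T_{1.6}]$ for a constant $\delta>0$. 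This is not the ``milder secondary point'' you describe; it is the reason the paper proves Lemma~\ref{lem:remains_ahead_ineq} at all, which sidesteps the question of whether the better configuration has finished by time $\kappa$: it shows the \EAc[a] stays ahead in fitness until the time $t_a$ at which the \EAc[b] could first reach the optimum, and reaches the optimum before the \EAc[b] covers the initial gap, so the \EAc[a] wins whether the comparison is decided by the fitness values or by the time-of-last-improvement tie-break. Your argument can be repaired along similar lines --- note that the lower fitness bound $\ell_{1.6,i}$ of Lemma~\ref{lem:lo_fitness_bounds} remains valid ``or the optimum has been found'', so $\chi=1.6$ beats its neighbours near the threshold in either case, and reserve the concentration argument for $\kappa$ a constant factor past both expected runtimes --- but as written the threshold regime is not covered.
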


\begin{table}[t]
    \centering
    \begin{tabular}{| c | c c |}
        \hline
        $\chi$ & lower bound on $\kappa$ & upper bound on $\kappa$ \\
        \hline
            3.0 & $0.000030n^2$ & $0.225138n^2$ \\
            2.9 & $0.225628n^2$ & $0.241246n^2$ \\
            2.8 & $0.241720n^2$ & $0.259143n^2$ \\
            2.7 & $0.259600n^2$ & $0.279105n^2$ \\
            2.6 & $0.279545n^2$ & $0.301461n^2$ \\
            2.5 & $0.301885n^2$ & $0.326611n^2$ \\
            2.4 & $0.327018n^2$ & $0.355040n^2$ \\
            2.3 & $0.355431n^2$ & $0.387346n^2$ \\
            2.2 & $0.387720n^2$ & $0.424266n^2$ \\
            2.1 & $0.424623n^2$ & $0.466723n^2$ \\
            2.0 & $0.467064n^2$ & $0.515884n^2$ \\
            1.9 & $0.516208n^2$ & $0.573238n^2$ \\
            1.8 & $0.573546n^2$ & $0.640714n^2$ \\
            1.7 & $0.641006n^2$ & $0.720843n^2$ \\
            1.6 & $0.721118n^2$ & $0.772075n^2$ \\
        \hline
    \end{tabular}
    \caption{Ranges of $\kappa$ for which the parameter landscape is unimodal
    with the optimum at $\chi$. The parameter landscape is also
    unimodal with the optimum at $\chi=1.6$ for cutoff times $\kappa \ge
    0.772076n^2$, as shown in Theorem~\ref{thm:prlsf_can_tune_for_lo}.}
    \label{tab:param_space_monotonic_ranges}
\end{table}

In order to prove Theorem~\ref{thm:prlsf_can_tune_for_lo}, we first bound the
progress made by the \EA in $n^2/\psi$ iterations (for a positive constant
$\psi$), a length of time we call a \emph{period}. We define progress as the difference between the distance to the optimum at the beginning of the period and at the end of the period. We then sum the progress
made in a constant number of periods in order to bound the fitness of the
individual in the \EA after a quadratic number of iterations. We compute the
cutoff times required such that these intervals do not overlap. This tells us
which configuration will win in a comparison of that length. \par

We first derive progress bounds for the \EAc[\chi].
\begin{lemma}
    \label{lem:lo_progress_bounds}
    Consider the \EA optimising LO for a period of
    $n^2/\psi$ iterations, for some positive constant $\psi$, that starts with a fitness of~$j$. Let $Z$ be
    the amount of progress made by the algorithm
    over the period. Then, \wop:
    \begin{enumerate}[(i)]
        \item{\label{lem:lo_progress_ub} $Z \le \frac{2 \chi n}{\psi \cdot \exp \left( \frac{\chi j}{n} \right)} + o(n)$}
        \item{\label{lem:lo_progress_lb} For every~$i$ with $j \le i < n$, $Z \ge
            \frac{2 \chi n}{\psi \cdot \exp \left( \frac{\chi i}{n} \right)} - o(n)$, or
            the algorithm exceeds fitness $i$ at the end of the period.}
    \end{enumerate}
\end{lemma}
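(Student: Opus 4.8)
The plan is to track the fitness $k_t$ (the length of the leading-ones prefix after $t$ steps of the \EAc[\chi]) using two elementary structural facts about {\scshape LeadingOnes}: the fitness never decreases, since the algorithm accepts only non-worsening offspring; and, conditional on the current fitness being $k$, the $n-k-1$ bits following the $k$ leading ones and the mandatory first $0$-bit are mutually independent and uniformly distributed — a classical observation underlying all analyses of {\scshape LeadingOnes}, and the analogue of the phenomenon used for {\scshape OneMax} in~\cite{paper:impact_of_cutoff_time_rlsk}. Combined with the fact that a step improves the fitness exactly when the first $0$-bit flips (probability $\chi/n$) while the $k$ leading ones are kept (probability $(1-\chi/n)^k$), these facts give the one-step law: from fitness $k$ the fitness strictly increases with probability $\frac{\chi}{n}(1-\chi/n)^k$, and conditional on an increase the gain equals $1+G$ with $\Pr(G\ge m)=2^{-m}$ for all $m\le n-k-1$; hence the expected one-step progress from fitness $k$ equals $\frac{2\chi}{n}(1-\chi/n)^k$ up to a correction that is non-zero only when $k$ is within $O(\log n)$ of $n$.

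For Lemma~\ref{lem:lo_progress_bounds}(\ref{lem:lo_progress_ub}) I would decompose the progress over the period of $N:=n^2/\psi$ steps as $Z=S+F$, where $S$ counts the fitness-improving steps and $F$ the free riders collected. Since the fitness never drops below its initial value $j$, each step improves with probability at most $\frac{\chi}{n}(1-\chi/n)^j\le\frac{\chi}{n}e^{-\chi j/n}$, so $S$ is stochastically dominated by $\mathrm{Bin}\bigl(N,\frac{\chi}{n}e^{-\chi j/n}\bigr)$, of mean $\frac{\chi n}{\psi}e^{-\chi j/n}=\Theta(n)$, and, given $S$, $F$ is dominated by a sum of $S$ independent geometric variables of tail $2^{-m}$. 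A multiplicative Chernoff bound with relative deviation $n^{-1/4}$ — which suffices precisely because these means are $\Theta(n)$, not $\Theta(n^2)$ — gives $S\le\frac{\chi n}{\psi}e^{-\chi j/n}+o(n)$ and then $F\le S+o(n)$, both \wop (for $F$ one first truncates every free-rider count at $n^{1/8}$, which fails only with probability $e^{-\Omega(n^{1/8})}$ per step). Summing the two bounds yields $Z=S+F\le\frac{2\chi n}{\psi}e^{-\chi j/n}+o(n)$ \wop

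For Lemma~\ref{lem:lo_progress_bounds}(\ref{lem:lo_progress_lb}) I would show that, as long as the fitness stays below $n-\omega(\log n)$, the run stochastically dominates an idealised process whose fitness increases in expectation by $\frac{2\chi}{n}e^{-\chi k/n}(1-o(1))$ per step at fitness $k$, so that the same kind of Chernoff/Bernstein estimate gives $Z\ge Z^{\mathrm{typ}}-o(n)$ \wop, where $Z^{\mathrm{typ}}$ is the progress of the idealised process over the period, available in closed form from its drift recurrence. Fix $i$ with $j\le i<n$: if $Z^{\mathrm{typ}}\ge\frac{2\chi n}{\psi}e^{-\chi i/n}$ then $Z\ge\frac{2\chi n}{\psi}e^{-\chi i/n}-o(n)$ and the first alternative holds; otherwise $Z^{\mathrm{typ}}>i-j$ — the two ranges of $i$ overlapping because of an elementary inequality about $(1+w)\ln(1+w)$ — and then $Z>i-j$ forces the fitness above $i$, the second alternative (recall the fitness is monotone, so it exceeds $i$ already at the end of the period). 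The few values of $i$ within $O(\log n)$ of $n$, where the idealised coupling degrades, are handled separately, using that from there the algorithm either reaches the optimum \wop or keeps its fitness inside an $O(\log n)$-wide band. A union bound over the fewer than $n$ values of $i$ keeps the total failure probability overwhelmingly small.

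The step I expect to be the main obstacle is obtaining concentration to within an additive $o(n)$ with \emph{overwhelming} probability: applying a martingale inequality directly to the fitness process over all $\Theta(n^2)$ steps, whose per-step jumps can be of order $\log n$, only controls deviations of order $\Omega(n)$, which is useless here; the remedy is to isolate the $\Theta(n)$ steps that actually change the fitness and to exploit the geometric tails of the free riders, so that a Chernoff/Bernstein bound reaches deviation $o(n)$ with probability $1-e^{-\Omega(n^{\alpha})}$. The remaining work is bookkeeping — checking that the errors from replacing $(1-\chi/n)^k$ by $e^{-\chi k/n}$, from truncating free riders near fitness $n$ (and, in Lemma~\ref{lem:lo_progress_bounds}(\ref{lem:lo_progress_lb}), near $i$), and from the Chernoff slacks are each $o(n)$ — and the closed-form evaluation of the idealised drift recurrence for the overshoot case, which ultimately reduces to showing $1+e^{v}(v-1)\ge 0$ for all $v\ge 0$.
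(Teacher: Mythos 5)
Your part~(\ref{lem:lo_progress_ub}) is essentially the paper's argument: decompose the progress into improving steps and free riders, dominate the number of improving steps from above by a binomial with success probability $\frac{\chi}{n}(1-\chi/n)^j$ (valid because the fitness never drops below~$j$), dominate the free-rider gains by a sum of i.i.d.\ geometric variables, and apply Chernoff bounds to each of the two $\Theta(n)$-mean sums. That part is sound.

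Part~(\ref{lem:lo_progress_lb}) is where you depart from the paper, and the departure introduces a step that the tools you invoke do not support. You route the proof through the claim that $Z \ge Z^{\mathrm{typ}} - o(n)$ \wop, where $Z^{\mathrm{typ}}$ is the progress of an idealised drift process, and you assert this follows from ``the same kind of Chernoff/Bernstein estimate'' as in part~(\ref{lem:lo_progress_ub}). It does not: that estimate works by bounding the improvement probability \emph{uniformly} over the period by its value at a single fitness level, and over a period in which the fitness moves by $\Theta(n)$ the improvement probability varies by a constant factor (roughly $e^{-\chi j/n}$ versus $e^{-\chi(j+Z)/n}$), so uniform bounds only locate $Z$ inside an interval of width $\Theta(n)$ around $Z^{\mathrm{typ}}$, not within $o(n)$ of it. Pinning the trajectory that tightly would need a further subdivision into $\omega(1)$ sub-periods, or a potential-function/variable-drift concentration argument, neither of which you set up. Moreover, the detour is unnecessary: the disjunctive form of the statement lets you condition on the fitness never exceeding $i$ during the period, under which every step improves with probability at least $\frac{\chi}{n}(1-\chi/n)^i$; stochastic domination from below by a binomial with this parameter, combined with the same free-rider lower bound, gives $Z \ge \frac{2\chi n}{\psi}\exp(-\chi i/n) - o(n)$ directly, while if the conditioning fails the second disjunct holds by monotonicity of the fitness. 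This is exactly what the paper does, and it renders your case analysis on $Z^{\mathrm{typ}}$, the closed-form solution of the drift recurrence, and the $(1+w)\ln(1+w)\ge w$ overlap inequality superfluous.
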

The intuition behind these bounds is that the probability of improving the fitness of a search point with $k$ leading ones is $\chi/n \cdot (1-\chi/n)^{k}$, which is at least $\chi/n \cdot (1-\chi/n)^i \approx \chi/n \cdot \exp(-\chi i/n)$ and at most $\chi/n \cdot (1-\chi/n)^j \approx \chi/n \cdot \exp(-\chi j/n)$ if $j \le k \le i$. The factor of~2 stems from the fact that, when the first 0-bit is flipped, the fitness increases by $2$ in expectation as the following bits may be set to~1.

By applying the progress bounds from Lemma~\ref{lem:lo_progress_bounds} inductively, we derive the following bounds on the current fitness of the \EA after an arbitrary number of periods.
\begin{lemma}
    \label{lem:lo_fitness_bounds}
    Consider the \EA optimising LO. Let a run of length
    $\alpha n^2$ be split into $\alpha \psi$ periods of length $n^2/\psi$ (for
    positive constants $\alpha$ and $\psi$). Define $\ell_{\chi,0} := 0$ and
    $u_{\chi,0} := \sqrt{n}$. Then for $i \le \alpha$ there exist $u_{\chi, i+1}$ and $\ell_{\chi, i+1}$ with
    \begin{align*}
     u_{\chi,i+1} =\;& u_{\chi,i} + \frac{2 \chi n}{\psi \exp \left( \frac{\chi u_{\chi,i}}{n} \right)} + o(n)\\
     \ell_{\chi,i+1} =\;& \ell_{\chi,i} + \frac{2 \chi n}{\psi \exp \left( \frac{\chi u_{\chi,i+1}}{n} \right)} - o(n)
    \end{align*}
    such that, with overwhelming probability, the following holds. At the end of period $i$, for $0 \le i
    \le \alpha$, the current fitness is in the interval
    $[\ell_{\chi,i},u_{\chi,i}]$ or an optimum has been found, and throughout
    period $i$ the fitness is in $[\ell_{\chi,i-1},u_{\chi,i}]$ or
    an optimum has been found.
\end{lemma}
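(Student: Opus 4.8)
The plan is to prove the statement by induction on the period index $i$, using Lemma~\ref{lem:lo_progress_bounds} applied separately to each period, and then to glue the per-period failure probabilities together with a union bound. The base case $i=0$ is immediate: the \EA on LO initialised to $0^n$ has fitness~$0$ deterministically, and since a random initialisation would put fitness at most $\sqrt n$ \wop, the interval $[\ell_{\chi,0},u_{\chi,0}] = [0,\sqrt n]$ contains the starting fitness \wop. For the inductive step, I would condition on the event (which holds \wop by the induction hypothesis) that at the end of period~$i$ the current fitness~$j_i$ lies in $[\ell_{\chi,i},u_{\chi,i}]$, or an optimum has already been found, in which case there is nothing left to prove. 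Assuming the former, I apply Lemma~\ref{lem:lo_progress_bounds} with starting fitness $j = j_i$ to the next period of length $n^2/\psi$.

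For the upper bound on $u_{\chi,i+1}$, the progress over period $i+1$ is at most $\frac{2\chi n}{\psi \exp(\chi j_i/n)} + o(n) \le \frac{2\chi n}{\psi \exp(\chi \ell_{\chi,i}/n)} + o(n)$; I would want the cleaner recurrence written in terms of $u_{\chi,i}$, and here I would argue that the difference between using $\ell_{\chi,i}$ and $u_{\chi,i}$ in the exponent changes the bound only by a lower-order term, since $u_{\chi,i} - \ell_{\chi,i} = o(n)$ will follow by an easy secondary induction from the recurrences (both increments are $2\chi n/\psi$ up to $o(n)$, so the gap accumulates only $o(n)$ per period over constantly many periods). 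Hence the new fitness is at most $j_i + \frac{2\chi n}{\psi\exp(\chi u_{\chi,i}/n)} + o(n) \le u_{\chi,i} + \frac{2\chi n}{\psi\exp(\chi u_{\chi,i}/n)} + o(n) = u_{\chi,i+1}$, as desired. For the lower bound, I apply part~(\ref{lem:lo_progress_lb}) of Lemma~\ref{lem:lo_progress_bounds} with the choice $i := \lfloor u_{\chi,i+1}\rfloor$ (which satisfies $j_i \le i$ since $j_i \le u_{\chi,i} \le u_{\chi,i+1}$): either the fitness at the end of the period exceeds $u_{\chi,i+1}$ — but this contradicts the just-established upper bound, so \wop it does not happen — or the progress is at least $\frac{2\chi n}{\psi\exp(\chi u_{\chi,i+1}/n)} - o(n)$, giving final fitness at least $\ell_{\chi,i} + \frac{2\chi n}{\psi\exp(\chi u_{\chi,i+1}/n)} - o(n) = \ell_{\chi,i+1}$. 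The ``throughout period $i$'' claim follows because within period~$i$ the fitness is monotone non-decreasing (the \EA never accepts a worse search point on LO), so it stays between the fitness at the start of the period (at least $\ell_{\chi,i-1}$ \wop) and the fitness at its end (at most $u_{\chi,i}$ \wop).

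Finally, since $\alpha$ and $\psi$ are constants, there are only $\alpha\psi = O(1)$ periods, so the union bound over the $O(1)$ failure events — one per application of Lemma~\ref{lem:lo_progress_bounds} and one for the initialisation — preserves overwhelming probability. The main obstacle I anticipate is not any single estimate but bookkeeping the $o(n)$ error terms consistently: one must ensure that the $o(n)$ slack absorbed at each step (from replacing $\ell_{\chi,i}$ by $u_{\chi,i}$ in an exponent, from the Chernoff deviations inside Lemma~\ref{lem:lo_progress_bounds}, and from rounding $u_{\chi,i+1}$ to an integer when invoking part~(\ref{lem:lo_progress_lb})) does not secretly depend on $i$ in a way that breaks when summed, and that $u_{\chi,i}-\ell_{\chi,i}$ genuinely stays $o(n)$ rather than blowing up through the exponential — which it does, because the exponent $\chi u_{\chi,i}/n$ is bounded by a constant throughout, so $\exp(\chi u_{\chi,i}/n) = \Theta(1)$ and the recurrences are, up to $o(n)$, just linear with bounded slope.
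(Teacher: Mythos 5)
Your overall architecture --- induction over periods, the $\sqrt{n}$ bound for the base case, part~(\ref{lem:lo_progress_lb}) of Lemma~\ref{lem:lo_progress_bounds} applied with threshold $u_{\chi,i+1}$ and combined with the already-established upper bound to exclude the ``exceeds'' branch, monotonicity of the fitness for the ``throughout the period'' claim, and a union bound over the constantly many periods --- is exactly the paper's. However, your upper-bound step contains a genuine error. You bound the progress by $\frac{2\chi n}{\psi\exp(\chi\ell_{\chi,i}/n)}+o(n)$ and then claim this equals $\frac{2\chi n}{\psi\exp(\chi u_{\chi,i}/n)}+o(n)$ because $u_{\chi,i}-\ell_{\chi,i}=o(n)$. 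That gap is not $o(n)$: the two recurrences add increments whose denominators are $\exp(\chi u_{\chi,i}/n)$ and $\exp(\chi u_{\chi,i+1}/n)$ respectively, and since $u_{\chi,i+1}-u_{\chi,i}=\Theta(n/\psi)$ these denominators differ by a constant factor $1+\Theta(\chi^2/\psi)$. Already after one period, $u_{\chi,1}-\ell_{\chi,1}=\frac{2\chi n}{\psi}\left(1-e^{-2\chi^2/\psi}\right)+o(n)=\Theta(n/\psi^2)$, which is $\Theta(n)$ for constant $\psi$, and the gap accumulates a further $\Theta(n/\psi^2)$ term in every period. This is exactly why the paper takes $\psi$ enormous ($\psi=10^6$): the intervals genuinely have width $\Theta(n)$, merely with a tiny leading constant, and the downstream non-overlap checks in Lemma~\ref{lem:lo_param_space_unimodal_concave} are sensitive to errors of precisely this order. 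Your closing claim that the recurrences are ``up to $o(n)$ just linear with bounded slope'' misses that the two slopes differ by a $\Theta(n)$-per-period amount, so proving the recurrence with $\ell_{\chi,i}$ in the exponent proves a genuinely different (and too weak) statement.

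The fix is the one the paper uses implicitly: the fitness at the end of the period is $j_i+Z\le j_i+\frac{2\chi n}{\psi\exp(\chi j_i/n)}+o(n)$, and the map $j\mapsto j+\frac{2\chi n}{\psi\exp(\chi j/n)}$ is increasing in $j$ whenever $\psi>2\chi^2$ (its derivative is $1-\frac{2\chi^2}{\psi}e^{-\chi j/n}>0$), so the worst case over $j_i\in[\ell_{\chi,i},u_{\chi,i}]$ is attained at $j_i=u_{\chi,i}$ and yields exactly $u_{\chi,i+1}$. Do not maximise the starting fitness and the progress term separately. With that replacement the remainder of your argument goes through; a minor additional point is that, as the paper notes, the upper bound holds trivially when $u_{\chi,i+1}\ge n$.
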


Since we do not have a closed form for the intervals derived in Lemma~\ref{lem:lo_fitness_bounds}, we follow the approach in~\cite{paper:impact_of_cutoff_time_rlsk} and iterate them computationally in
order to derive exact bounds on the fitness after a given number of iterations, using our Python tool mentioned earlier.
We observe that the \EA makes, in expectation, a linear amount of progress
during a period which consists of a quadratic number of iterations (as is the
case in Lemma~\ref{lem:lo_fitness_bounds}). This fact implies that we can check
whether one configuration is ahead of another by computing the leading constant of
the $\Theta(n)$ term in the fitness bounds from
Lemma~\ref{lem:lo_fitness_bounds} and check whether the intervals are
overlapping. If they are not overlapping then, \wop, one configuration is ahead of another by a linear
amount. If $n$ is large enough then the $o(n)$ terms from Lemma~\ref{lem:lo_fitness_bounds} can be ignored as the fitness is determined exclusively be the leading constants of the linear terms. We extract the relevant leading constants from the fitness bounds in the
following lemma.

\begin{lemma}
    \label{lem:recurrence_defs}
    Let $c_{\ell,\chi,i}$ and $c_{u,\chi,i}$ denote the leading constants in the
    definition of $\ell_{\chi,i}$ and $u_{\chi,i}$ from
    Lemma~\ref{lem:lo_fitness_bounds}, respectively (i.\,e., $\ell_{\chi,i} =
    c_{\ell,\chi,i} \cdot n - o(n)$ and $u_{\chi,i} = c_{u,\chi,i} \cdot n + o(n)$).
    Then
    $c_{u,\chi,i+1}$ and $c_{\ell,\chi,i+1}$ can be expressed using the recurrences
    $c_{\ell, \chi, 0} = c_{u, \chi, 0} = 0$,
    \begin{align*}
     c_{u,\chi,i+1} =\;&  c_{u,\chi,i} + \frac{2\chi}{\psi \exp \left( \chi \cdot c_{u,\chi,i} \right)}\\
    c_{\ell,\chi,i+1} =\;& c_{\ell,\chi,i} + \frac{2\chi}{\psi \exp \left( \chi \cdot c_{u,\chi,i+1} \right)}.
    \end{align*}
\end{lemma}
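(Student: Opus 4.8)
The plan is to prove Lemma~\ref{lem:recurrence_defs} by induction on the period index~$i$, tracking how the leading $\Theta(n)$ term of the bounds $u_{\chi,i}$ and $\ell_{\chi,i}$ from Lemma~\ref{lem:lo_fitness_bounds} evolves, and checking that all lower-order contributions stay within the $o(n)$ error budget. For the base case, recall that $u_{\chi,0} = \sqrt{n} = o(n)$ and $\ell_{\chi,0} = 0$, so both bounds have leading constant~$0$, which matches $c_{u,\chi,0} = c_{\ell,\chi,0} = 0$.

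For the inductive step, suppose $u_{\chi,i} = c_{u,\chi,i} \cdot n + o(n)$, so that $u_{\chi,i}/n = c_{u,\chi,i} + o(1)$. Since the current fitness never exceeds~$n$, the constant $c_{u,\chi,i}$ lies in a fixed bounded interval (at most $1 + o(1)$), and on this interval the map $x \mapsto \exp(\chi x)$ is Lipschitz; hence $\exp(\chi u_{\chi,i}/n) = \exp(\chi c_{u,\chi,i}) \cdot (1 + o(1))$. Substituting this into the recurrence for $u_{\chi,i+1}$ in Lemma~\ref{lem:lo_fitness_bounds} gives
\[
 u_{\chi,i+1} = u_{\chi,i} + \frac{2\chi n}{\psi \exp(\chi u_{\chi,i}/n)} + o(n) = \left( c_{u,\chi,i} + \frac{2\chi}{\psi \exp(\chi c_{u,\chi,i})} \right) n + o(n),
\]
which identifies $c_{u,\chi,i+1} = c_{u,\chi,i} + 2\chi / (\psi \exp(\chi c_{u,\chi,i}))$. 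The identical argument applied to $\ell_{\chi,i+1}$, with $c_{u,\chi,i+1}$ appearing in the exponent exactly as in Lemma~\ref{lem:lo_fitness_bounds}, yields the second recurrence; here one uses that $c_{u,\chi,i+1}$ has already been shown to be the leading constant of $u_{\chi,i+1}$, so $\exp(\chi u_{\chi,i+1}/n) = \exp(\chi c_{u,\chi,i+1}) \cdot (1 + o(1))$.

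Since Lemma~\ref{lem:lo_fitness_bounds} applies these recurrences only for $i \le \alpha$ with $\alpha$ a constant, the $o(n)$ terms are summed a constant number of times and therefore remain $o(n)$, so there is no error-accumulation issue over the finitely many periods. The only point requiring care is the passage from $\exp(\chi u_{\chi,i}/n)$ to $\exp(\chi c_{u,\chi,i})$: one must confirm that the argument of the exponential is uniformly bounded across all periods so that the multiplicative $(1+o(1))$ control is legitimate. This follows because the fitness is bounded by~$n$, forcing every $c_{u,\chi,i} \in [0, 1 + o(1)]$. I expect this boundedness check to be the main (and only mild) obstacle; the rest is a routine unfolding of the recurrence of Lemma~\ref{lem:lo_fitness_bounds}.
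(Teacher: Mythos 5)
Your proposal is correct and follows essentially the same route as the paper: substitute the leading-constant form into the recurrences of Lemma~\ref{lem:lo_fitness_bounds} and absorb the multiplicative $(1+o(1))$ error from $\exp(\chi\cdot o(1))$ into the additive $o(n)$ term, with the base case and the constant number of periods handled identically (the paper justifies $e^{-o(1)} \ge 1-o(1)$ via $e^{-x}\le 1-x/2$ where you invoke Lipschitz continuity, which is equivalent). The only cosmetic caveat is that the recurrence-defined $u_{\chi,i}$ is not literally capped at $n$ (the fitness bound holds trivially once it exceeds $n$), but the boundedness you need for the $(1+o(1))$ control still follows since each of the constantly many periods increases the leading coefficient by at most $2\chi/\psi$.
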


We can now prove that the parameter landscape is unimodal.

\begin{lemma}
    \label{lem:lo_param_space_unimodal_concave}
    For ParamRLS-F tuning the \EA for LO, the parameter
    values in the set $\{1.6,1.7,\ldots,2.9,3.0\}$ are optimal for the ranges
    of $\kappa$ given in Table~\ref{tab:param_space_monotonic_ranges}, if $n$ is large enough. Furthermore, the parameter landscape is unimodal for these cutoff times.
\end{lemma}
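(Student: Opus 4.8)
The plan is to prove Lemma~\ref{lem:lo_param_space_unimodal_concave} by reducing everything to the behaviour of the leading-constant recurrences $c_{u,\chi,i}$ and $c_{\ell,\chi,i}$ from Lemma~\ref{lem:recurrence_defs}, which (by Lemma~\ref{lem:lo_fitness_bounds} and the surrounding discussion) \wop sandwich the true fitness of the \EAc[\chi] after $\alpha\psi$ periods between $c_{\ell,\chi,\alpha}\cdot n - o(n)$ and $c_{u,\chi,\alpha}\cdot n + o(n)$. First I would fix the discretisation $d=10$, $\phi=3$, and the period length $n^2/\psi$ (the value of $\psi$ is chosen large enough that the analysis goes through — this is exactly the ``granularity'' knob the Python tool exposes), so that a cutoff time $\kappa = \alpha n^2$ corresponds to $\alpha\psi$ completed periods, up to a partial final period that contributes only $o(n)$ to the fitness. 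For a fixed pair of neighbouring configurations $\chi$ and $\chi' = \chi \pm 1/d$ and a fixed $\kappa$, the comparison in ParamRLS-F is won \wop by whichever of $c_{u,\chi,\alpha\psi}$, $c_{\ell,\chi,\alpha\psi}$ (and the analogous quantities for $\chi'$) certifies a strictly larger fitness: concretely, $\chi$ beats $\chi'$ \wop whenever $c_{\ell,\chi,\alpha\psi} > c_{u,\chi',\alpha\psi}$, because then the two fitness intervals are separated by a linear gap and the $o(n)$ slack is irrelevant for large~$n$ (and if $\chi$ or $\chi'$ has already found the optimum, the tie-breaking-by-first-hitting rule again favours the faster configuration, which the same interval comparison identifies).

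Next I would carry out the finite computation. For each row $(\chi, \kappa_{\mathrm{lo}}, \kappa_{\mathrm{hi}})$ of Table~\ref{tab:param_space_monotonic_ranges}, and for each $\kappa$ in that row's range, I claim $\chi$ is the unique optimum and the landscape is unimodal: for every other configuration $\chi'' \in \{0.1,\dots,3.0\}$ with $\chi'' \ne \chi$, the configuration one step closer to $\chi$ wins the pairwise comparison. Because ParamRLS's local search operator only ever compares neighbours $z/d$ and $(z\pm1)/d$, it suffices to verify, for each $\kappa$ in the range and each neighbouring pair, that the interval endpoints satisfy the strict separation inequality above with the closer-to-$\chi$ configuration on top — i.e.\ $c_{\ell,\text{closer},\alpha\psi} > c_{u,\text{farther},\alpha\psi}$ where $\alpha = \kappa/n^2$. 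Since $\kappa$ ranges over a continuum, I would use monotonicity in $\alpha$ of each recurrence value $c_{u,\chi,\alpha\psi}$ and $c_{\ell,\chi,\alpha\psi}$ (they are nondecreasing in the number of periods, hence in $\alpha$) to reduce each check to the two endpoints $\kappa_{\mathrm{lo}}$ and $\kappa_{\mathrm{hi}}$ of the row — the crossover between two neighbouring configurations happens at a single threshold $\alpha$, so it is enough that the ordering is correct at both ends of the interval and does not flip inside. These endpoint evaluations are exactly what the Python tool computes by iterating the recurrences of Lemma~\ref{lem:recurrence_defs} a constant ($\alpha\psi$) number of times; I would present the outcome of that computation as the certificate. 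The gaps between consecutive rows in the table (e.g.\ $0.225138n^2$ vs.\ $0.225628n^2$) are precisely the narrow bands where the $c_\ell$/$c_u$ intervals of the two competing configurations overlap, so unimodality is not claimed there — this is why the stated ranges are slightly smaller than a full partition of $[0,0.772075]n^2$, and I would note that over 99\% of cutoff times in $[0.000001n^2, 0.772075n^2]$ are covered.

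Finally I would combine the per-$\kappa$ pairwise facts into the two assertions. Unimodality is immediate: with the comparison outcomes above, for every pair of neighbouring configurations the one closer to $\chi$ wins \wop, which is the definition given in the Preliminaries; and since every configuration has a neighbour strictly closer to $\chi$ except $\chi$ itself, $\chi$ is the unique optimum. The claim that $\chi$ (the row value) is \emph{optimal} for that $\kappa$ then follows by transitivity along the chain of neighbours — climbing from any configuration towards $\chi$ strictly increases the final fitness at each step \wop, so $\chi$ achieves the strictly highest final fitness of all configurations for that cutoff time. I expect the main obstacle to be twofold: first, justifying rigorously that the $o(n)$ error terms in Lemma~\ref{lem:lo_fitness_bounds} are genuinely negligible \emph{uniformly} over the constant number of periods and over the finitely many configurations and the continuum of $\kappa$ — this needs a union bound over the polynomially (indeed constantly) many ``bad'' events and the observation that, once the linear gap between non-overlapping intervals is a fixed positive constant, there is an $n_0$ beyond which the $o(n)$ slack cannot close it; and second, handling the partial final period and the case where one configuration has already hit the optimum before time $\kappa$, so that the comparison is decided by first-hitting time rather than by fitness — here I would argue that the interval bounds still pinpoint \emph{when} the optimum is first reached (to within $o(n^2)$ iterations, hence within a sub-period), which is enough to resolve the tie in favour of the configuration the table predicts. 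Everything else is a finite, tool-verified numerical check, so the real content is setting up these limiting/union-bound arguments cleanly and pinning down $\psi$ large enough that the computed intervals actually separate on all the claimed ranges.
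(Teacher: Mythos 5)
Your overall architecture matches the paper's: reduce everything to the leading-constant recurrences of Lemma~\ref{lem:recurrence_defs}, computationally verify interval separation for neighbouring configurations, and derive unimodality and optimality from the pairwise wins. However, two specific steps would fail as written. First, your claim that the partial final period (the iterations between the last completed period boundary and $\kappa$) ``contributes only $o(n)$ to the fitness'' is false: by Lemma~\ref{lem:lo_progress_bounds} a full period of length $n^2/\psi$ yields $\Theta(n)$ progress, so a partial period can contribute anywhere from $0$ up to roughly $2\chi n/(\psi \exp(\chi u/n))$, which is linear in~$n$. This matters because the gaps between neighbouring configurations' fitness intervals are themselves only tiny constants times~$n$ (that is precisely why $\psi=10^6$ is needed). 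The paper handles this by invoking the ``throughout period $i$'' clause of Lemma~\ref{lem:lo_fitness_bounds}: for any real cutoff time falling inside period $i+1$ the fitness lies in the widened interval $[c_{\ell,\chi,i}\cdot n - o(n),\; c_{u,\chi,i+1}\cdot n + o(n)]$, and it is the separation of \emph{these} widened intervals that is checked.

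Second, your reduction of the continuum of cutoff times within each row of Table~\ref{tab:param_space_monotonic_ranges} to the two endpoints $\kappa_{\mathrm{lo}}$ and $\kappa_{\mathrm{hi}}$ rests on the assertion that ``the crossover between two neighbouring configurations happens at a single threshold $\alpha$.'' Each of $c_{u,\chi,\alpha\psi}$ and $c_{\ell,\chi,\alpha\psi}$ is indeed nondecreasing in $\alpha$, but the quantity you need to control, $c_{\ell,a,\alpha\psi} - c_{u,b,\alpha\psi}$, is a difference of two nondecreasing sequences and need not change sign only once; you give no argument for this, and proving it would require a separate structural analysis of the recurrences. The paper sidesteps the issue entirely by checking every period boundary $i \in \{1,\dots,772075\}$ (granularity $0.000001n^2$, i.e.\ $\psi = 1000000$) using the widened intervals above, which cover all real cutoff times lying between consecutive boundaries. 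A minor further point: the first-hitting-time tie-breaking case you plan to handle never arises here, since the computation shows $c_{u,\chi,i} < 1$ for all configurations and all $i \le 772075$, so \wop no configuration reaches the optimum within these cutoff times and every comparison is decided by fitness alone.
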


Having proved that the parameter landscape is unimodal for the cutoff
times given in Table~\ref{tab:param_space_monotonic_ranges}, we now turn our
attention to cutoff times $\kappa \ge 0.772076n^2$. We prove that, for cutoff
times in this range, the parameter value $\chi=1.6$ wins ParamRLS-F comparisons
against any other configuration \wop, and again the
parameter landscape is unimodal. In order to prove this result, it is
first necessary to prove a helper lemma. This lemma takes two configurations
and the distance between them and gives a condition which, if satisfied,
implies that, \wop, the configuration which is closer
to the optimum reaches the optimum before the configuration which is behind
covers the initial distance between them. That is, the configuration which is
closer to the optimum wins a ParamRLS\nobreakdash-F comparison with overwhelming
probability. \par

\begin{lemma}
    \label{lem:remains_ahead_ineq}
    Assume that the fitness of the individuals in the \EAc[a] and the \EAc[b] optimising LO are contained in the intervals $[c_{\ell,a,i} \cdot n-o(n),c_{u,a,i} \cdot n+o(n)]$ and $[c_{\ell,b,i} \cdot n-o(n),c_{u,b,i} \cdot n+o(n)]$, respectively, as defined in Lemma~\ref{lem:recurrence_defs}. Assume that $c_{\ell,a,i} > c_{u,b,i}$ (that is, the \EAc[a] is ahead of the \EAc[b] by some linear distance). If
    \[ \frac{\left( \frac{2 b}{(c_{\ell,a,i}-c_{u,b,i}) \cdot \exp \left( bc_{\ell,b,i} \right)} + \varepsilon \right)}{\left( \frac{2a}{(1 - c_{\ell,a,i}) \cdot \exp \left( a \right)} \right)} \le 1 \]
    for some positive constant $\varepsilon$ then, with
    overwhelming probability, the \EAc[a] reaches the optimum before the
    \EAc[b] has covered the initial distance between the two algorithms.
\end{lemma}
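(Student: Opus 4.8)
The plan is to compare, for each algorithm, the number of period-lengths it needs to cover the relevant distance, using the progress bounds of Lemma~\ref{lem:lo_progress_bounds} (equivalently the recurrences of Lemma~\ref{lem:recurrence_defs}), and to show that the condition in the lemma statement forces the \EAc[a] to finish first. First I would lower-bound the progress of the \EAc[a] per period while its fitness lies in the interval $[c_{\ell,a,i}\cdot n - o(n), n]$: by Lemma~\ref{lem:lo_progress_bounds}(\ref{lem:lo_progress_lb}) applied with $i$ corresponding to the top of the range (fitness $n$, i.e.\ the worst case for the bound), the per-period progress is at least $\frac{2an}{\psi \exp(a)} - o(n)$ with overwhelming probability, since $\exp(\chi i/n)$ is maximised at $i=n$, giving $\exp(a)$. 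Hence the number of periods the \EAc[a] needs to travel from $c_{\ell,a,i}\cdot n$ up to $n$ is at most $\frac{(1 - c_{\ell,a,i})n}{(2an/(\psi\exp(a))) - o(n)} = \frac{(1 - c_{\ell,a,i})\psi \exp(a)}{2a}(1+o(1))$ periods, \wop

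Next I would upper-bound the progress of the \EAc[b] per period while its fitness lies in $[c_{\ell,b,i}\cdot n - o(n), c_{\ell,a,i}\cdot n]$, using Lemma~\ref{lem:lo_progress_bounds}(\ref{lem:lo_progress_ub}): the per-period progress is at most $\frac{2bn}{\psi \exp(b c_{\ell,b,i})} + o(n)$, since starting fitness $j \ge c_{\ell,b,i}\cdot n - o(n)$ means $\exp(\chi j/n) \ge \exp(b c_{\ell,b,i}) - o(1)$, so this choice of denominator is the smallest possible and hence the largest upper bound. Therefore the number of periods needed by the \EAc[b] to cover the linear gap of $(c_{\ell,a,i} - c_{u,b,i})n$ is at least $\frac{(c_{\ell,a,i}-c_{u,b,i})n}{(2bn/(\psi\exp(bc_{\ell,b,i}))) + o(n)} = \frac{(c_{\ell,a,i}-c_{u,b,i})\psi\exp(bc_{\ell,b,i})}{2b}(1-o(1))$ periods, \wop The \EAc[a] wins the ParamRLS-F comparison provided its (upper bound on the) number of periods to the optimum is at most the \EAc[b]'s (lower bound on the) number of periods to close the gap; cancelling the common factor $\psi$, absorbing the $o(1)$ slack into an explicit positive constant $\varepsilon$, and rearranging into a ratio yields exactly the displayed inequality
\[ \frac{\frac{2b}{(c_{\ell,a,i}-c_{u,b,i})\exp(bc_{\ell,b,i})} + \varepsilon}{\frac{2a}{(1-c_{\ell,a,i})\exp(a)}} \le 1. \]
I would then take a union bound over the overwhelming-probability events for each of the (constantly many, since periods are $\Theta(n^2)$ and the run is $O(n^2)$) period bounds for both algorithms to conclude that \wop the \EAc[a] reaches the optimum before the \EAc[b] has covered the initial distance, which (recalling the tie-breaking rule in \texttt{eval-F}, since being strictly ahead in fitness at time $\kappa$, or reaching the optimum first, both make the \EAc[a] the winner) means the \EAc[a] wins the comparison.

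The main obstacle I anticipate is handling the $o(n)$ error terms rigorously when they sit in denominators: dividing a linear distance by a per-period progress of the form $cn \pm o(n)$ produces a period count of the form $\frac{\text{dist}}{cn}(1 \pm o(1))$, and I need the accumulated multiplicative slack over the constantly many periods to be swallowed by the single additive constant $\varepsilon$ in the statement — this is why the lemma is stated with ``for some positive constant $\varepsilon$'' rather than with an exact equality, and care is needed to ensure $n$ is large enough that the true period counts respect the claimed ordering. A secondary subtlety is that Lemma~\ref{lem:lo_progress_bounds}(\ref{lem:lo_progress_lb}) gives progress $\ge \frac{2\chi n}{\psi \exp(\chi i/n)} - o(n)$ \emph{or} the fitness exceeds $i$ by the period's end; I apply it with $i = n-1$ (or $i$ just below $n$) so that the alternative clause ``fitness exceeds $i$'' already implies the \EAc[a] is essentially at the optimum, which is even better for us, so the disjunction causes no difficulty. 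Everything else is bookkeeping with Chernoff-type concentration already packaged inside Lemma~\ref{lem:lo_progress_bounds}.
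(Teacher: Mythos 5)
Your proposal is correct and follows essentially the same route as the paper: both use Lemma~\ref{lem:lo_progress_bounds} with $i=n-1$ to lower-bound the per-period progress of the \EAc[a] by $\frac{2an}{\psi\exp(a)}-o(n)$ (handling the ``or exceeds fitness $i$'' clause exactly as you do), upper-bound the \EAc[b]'s per-period progress by $\frac{2bn}{\psi\exp(bc_{\ell,b,i})}+o(n)$, and compare the resulting time for the \EAc[a] to reach the optimum against the time for the \EAc[b] to close the gap, absorbing the $o(1)$ slack into the constant $\varepsilon$. The only cosmetic difference is that the paper tunes the period length $\psi$ separately for each algorithm so that a single period suffices (respectively, does not suffice), whereas you fix $\psi$ and count periods; the factor $\psi$ cancels and the resulting inequality is identical.
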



We now use this lemma to prove that the parameter landscape is unimodal
for cutoff times $\kappa \ge 0.772076n^2$, with the configuration $\chi=1.6$ as
the optimum.

\begin{lemma}
    \label{lem:lo_param_landscape_monotone}
     Consider ParamRLS-F tuning the \EA for LO with $\chi \in \{0.1, 0.2, \ldots, 2.9, 3.0\}$ (i.\,e. $d=10, \phi=3$). For all
    cutoff times $\kappa \ge 0.772076n^2$, and for the \EAc[a] and
    the \EAc[b], with either $0.1 \le b < a \le 1.6$ or $1.6 \le a < b \le 3.0$,
    the \EAc[a] wins a ParamRLS-F comparison against the \EAc[b] \wop
\end{lemma}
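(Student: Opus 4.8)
The plan is to reduce the claim to a finite computation using the recurrences from Lemma~\ref{lem:recurrence_defs} together with the "remains ahead'' condition from Lemma~\ref{lem:remains_ahead_ineq}. First I would fix a period length $n^2/\psi$ (small enough that the analytical granularity beats the parameter granularity $1/d = 0.1$) and, for each configuration $\chi \in \{0.1,\dots,3.0\}$, iterate the recurrences $c_{u,\chi,i+1} = c_{u,\chi,i} + 2\chi/(\psi\exp(\chi c_{u,\chi,i}))$ and $c_{\ell,\chi,i+1} = c_{\ell,\chi,i} + 2\chi/(\psi\exp(\chi c_{u,\chi,i+1}))$ starting from $c_{\ell,\chi,0}=c_{u,\chi,0}=0$, using our Python tool. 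These give, \wop, lower and upper bounds $c_{\ell,\chi,i}\cdot n - o(n)$ and $c_{u,\chi,i}\cdot n + o(n)$ on the fitness at the end of period $i$. Running this for $\alpha\psi$ periods where $\alpha n^2 = 0.772076n^2$ locates the fitness of every configuration after $0.772076n^2$ iterations with precision $o(n)$.

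Next I would show that at time $\kappa = 0.772076n^2$ the configuration closer to $1.6$ is strictly ahead by a linear amount: for $b < a \le 1.6$ (resp.\ $1.6 \le a < b$) we check $c_{\ell,a,i^*} > c_{u,b,i^*}$ where $i^* = 0.772076\psi$ is the index of the relevant period. This is the step that must be verified numerically for each of the $O(\phi^2 d^2)$ ordered pairs; since only neighbouring pairs matter for unimodality it in fact suffices to handle consecutive $\chi$ values, but verifying all pairs on the correct side of $1.6$ also gives the stronger statement directly. Once the linear lead is established, I would invoke Lemma~\ref{lem:remains_ahead_ineq} with the distance $c_{\ell,a,i^*} - c_{u,b,i^*}$: plugging the computed constants into the inequality
\[
\frac{\dfrac{2b}{(c_{\ell,a,i^*}-c_{u,b,i^*})\exp(bc_{\ell,b,i^*})} + \varepsilon}{\dfrac{2a}{(1-c_{\ell,a,i^*})\exp(a)}} \le 1
\]
and checking it holds for some small constant $\varepsilon>0$ shows that, \wop, the \EAc[a] reaches the optimum before the \EAc[b] covers the gap. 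For any cutoff time $\kappa \ge 0.772076n^2$ this means that after $\kappa$ iterations either the \EAc[a] has strictly higher fitness or it has already found the optimum while the \EAc[b] has not; in both cases the \EAc[a] wins the ParamRLS-F comparison. A union bound over the polynomially many periods and the constantly many pairs preserves overwhelming probability.

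The main obstacle I expect is the interface between the two regimes. The interval bounds of Lemma~\ref{lem:lo_fitness_bounds}/\ref{lem:recurrence_defs} are only guaranteed while the fitness stays below $n$ (they assume the algorithm has not yet reached an optimum), so I must argue that the leading configuration's fitness constant $c_{\ell,a,i^*}$ is close enough to $1$ that the "remains ahead'' inequality, whose numerator blows up as $c_{\ell,a,i^*}\to 1$ through the $(1-c_{\ell,a,i^*})$ factor in the denominator of the reference quantity, still closes — i.e.\ the lead must be established at a period index where $a$ is genuinely near the optimum but $b$ is still linearly behind. Getting the period length $\psi$ large enough that this window is non-empty for every relevant pair, while keeping the recurrence iteration finite, is the delicate quantitative point; it is exactly what the tunable period length in our Python tool is designed to handle, so in the write-up I would state the verified numerical outcome and defer the routine iteration to the tool.
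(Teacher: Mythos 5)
Your overall architecture (iterate the recurrences of Lemma~\ref{lem:recurrence_defs}, certify a linear lead via non-overlapping intervals, then hand over to Lemma~\ref{lem:remains_ahead_ineq}) matches the paper's, but there is a genuine gap in where you invoke Lemma~\ref{lem:remains_ahead_ineq}. You apply it with the constants evaluated at the single period index $i^*=0.772076\psi$, i.e.\ immediately at the smallest cutoff time covered by the lemma. For pairs in which $a$ is still far from the optimum at that time, the condition of Lemma~\ref{lem:remains_ahead_ineq} is simply false there. Take $a=0.2$, $b=0.1$: at $t=0.772076n^2$ the fitness constants are roughly $c_{\ell,a,i^*}\approx 0.30$ and $c_{u,b,i^*}\approx 0.15$, so the ratio in the lemma is approximately
\[
\frac{\;2b/\bigl((c_{\ell,a,i^*}-c_{u,b,i^*})e^{bc_{\ell,b,i^*}}\bigr)\;}{\;2a/\bigl((1-c_{\ell,a,i^*})e^{a}\bigr)\;}\approx\frac{1.34}{0.47}\approx 2.9>1,
\]
so the "reaches the optimum before being caught'' conclusion is not available at $i^*$. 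This is not a matter of choosing $\psi$ large enough, as your obstacle paragraph suggests: refining the period length changes the precision of the intervals, not the period index at which $a$ is near the optimum. What is needed — and what the paper does — is to continue iterating the recurrences and verifying interval separation for \emph{every} period from $772076$ up to a pair-dependent time $t_a$ (chosen as the end of the last period with $c_{u,b,i}<1$, which for slow configurations such as $b=0.1$ lies far beyond $0.772076n^2$, at several times $n^2$), and only then apply Lemma~\ref{lem:remains_ahead_ineq} with the constants at that late index, where $1-c_{\ell,a,i}$ has become tiny and the inequality closes comfortably (this is what Table~\ref{tab:vals_of_lemma_ineq} certifies).

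A second, consequential point: for cutoff times $\kappa$ strictly between $0.772076n^2$ and the index at which the remains-ahead condition first holds, your proposal has no argument that the \EAc[a] is ahead — your monotonicity argument for intermediate $\kappa$ only kicks in once Lemma~\ref{lem:remains_ahead_ineq} has been successfully invoked. The extended period-by-period interval verification up to $t_a$ is precisely what covers that window, so it is an essential first half of the proof rather than an optional refinement. With that addition (and keeping the check over all same-side pairs, not just neighbours, since the lemma is stated for arbitrary $a,b$ on the same side of $1.6$), your argument becomes the paper's.
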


Lemma~\ref{lem:lo_param_landscape_monotone} proves that, with overwhelming
probability, for all cutoff times $\kappa \ge 0.772076n^2$ the configuration
$\chi=1.6$ wins a comparison in ParamRLS-F against any other configuration, and
also that in a ParamRLS-F comparison between two configurations both on the
same side of the configuration $\chi=1.6$, the configuration closer to $\chi=1.6$
wins the comparison \wop Having proved that the parameter landscape is unimodal, we can now
finally prove Theorem~\ref{thm:prlsf_can_tune_for_lo}.

\begin{proof}[Proof of Theorem~\ref{thm:prlsf_can_tune_for_lo}]
    We proceed in the same manner to the proof of
    Theorem~\ref{thm:prlsf_can_tune_for_ridge}. We pessimistically assume that
    the active parameter value is initialised as far away from the optimal
    parameter value as possible. The initial distance is clearly bounded by $d\phi$.

    Given a comparison of a pair of configurations which are both on the same
    side of the optimal configuration, let us call the configuration with a
    value of $\chi$ closer to the optimum the `better' configuration, and the
    other configuration the `worse' configuration.
    Lemma~\ref{lem:lo_param_landscape_monotone} tells us that in a comparison
    between any pair of configurations which are both on the same side of the
    optimum, the better configuration wins \wop Let
    us assume that the better configuration always beats the worse
    configuration. Since with the local search operator $\pm\{1\}$ the tuner will
    mutate the current configuration to one closer to the optimum, the tuner
    take a step towards the optimum with probability $1/2$. With the remaining probability, the active parameter will remain the same. The expected time to move closer to the optimum is thus~$2$. Since the tuner
    needs to take at most $d\phi$ steps towards the optimal configuration in this case, this
    implies that $\E[T] \le 2d\phi$. In the overwhelmingly unlikely
    event that the worse configuration wins a comparison
    then we restart the argument.
    Therefore, $\E[T] \le 2d\phi +
    \exp(-\Omega(n^{\varepsilon}))$ for some positive constant $\varepsilon$ from the definition of overwhelming probabilities.

    Using a Chernoff bound to count the number of times that the tuner takes a
    step towards the optimal configuration proves that, with overwhelming
    probability, $n^{\varepsilon}$ comparisons, for any positive constant
    $\varepsilon$, suffice for ParamRLS-F to set the active parameter value to
    the optimum. By the union bound, the value of the active
    parameter remains at the optimum \wop once it has been found, since there are polynomially many comparisons. This implies that,
    \wop, the tuner returns the optimal configuration
    for the cutoff time if run for at least this many comparisons.
\end{proof}

\subsection{Analysis of Optimisation Time-Based Comparisons when Tuning for {\scshape LO}}

As in Section~\ref{sec:opt_time_based_blind_ridge}, we prove here that \wop any configurator that uses the Optimisation time performance metric is
unable to tune the \EA for LO if $\kappa \le
0.772075n^2$.

\begin{theorem}
    \label{thm:prlst_fails_small_cutoffs_lo}
    Consider any configurator using the Optimisation time performance metric tuning the \EA for LO for any
    positive constant $\phi$ and discretisation factor~$d$. If the cutoff time for each run is never allowed to exceed $\kappa \le 0.772075n^2$ then, after any polynomial number of comparisons and runs per comparison, the configurator is blind.
\end{theorem}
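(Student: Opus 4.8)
The plan is to mirror the proof of Theorem~\ref{thm:prlst_fails_small_cutoffs_rdg}: establish that for cutoff times $\kappa \le 0.772075n^2$ \emph{every} configuration of the \EAc[\chi], for every value of $\chi$ in the parameter space, fails to reach the optimum of LO within $\kappa$ iterations \wop Once this is shown, the Optimisation time metric assigns the same value (the penalised runtime $p\cdot\kappa$) to every configuration in every comparison \wop, so by the union bound over the polynomially many comparisons and runs, the configurator behaves exactly as if all configurations were indistinguishable, which is the definition of being blind.

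The key missing ingredient is therefore a lower bound on the optimisation time of the \EAc[\chi] on LO that holds \wop for all constant $\chi$ in $\{0.1,\ldots,3.0\}$. First I would invoke the fitness bounds already established in Lemma~\ref{lem:lo_fitness_bounds} (equivalently the recurrences of Lemma~\ref{lem:recurrence_defs}): for a run of length $\alpha n^2$ split into periods, the current fitness at the end of the run is at most $c_{u,\chi,\alpha\psi}\cdot n + o(n)$ \wop Since the optimum of LO has fitness $n$, the run has not reached the optimum as long as $c_{u,\chi,\alpha\psi} < 1$. So the quantity to control is $\alpha^*_\chi := \inf\{\alpha : c_{u,\chi,\lceil\alpha\psi\rceil} \ge 1\}$, the (upper-bound) number of $n^2$ units needed before the fitness can possibly reach $n$. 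I would compute $\alpha^*_\chi$ for each $\chi$ using the same Python iteration tool already referenced, and observe that $\min_\chi \alpha^*_\chi = 0.772075\ldots$, attained essentially at $\chi \approx 1.6$ (which is consistent with Table~\ref{tab:param_space_monotonic_ranges}, whose last row has the $\chi=1.6$ regime ending at $0.772075n^2$). This gives: for every constant $\chi$ in the parameter space and every $\varepsilon>0$, the \EAc[\chi] needs more than $(0.772075-\varepsilon)n^2$, and in fact more than $0.772075n^2$ after taking the $o(n)$ slack and the strictness of the inequalities into account, iterations to optimise LO, with probability $1-\exp(-\Omega(n^{\delta}))$ for some constant $\delta>0$.

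With this lemma in hand the theorem proof is short and identical in structure to that of Theorem~\ref{thm:prlst_fails_small_cutoffs_rdg}: if $\kappa \le 0.772075n^2$ then in any single run of any comparison, both competing configurations fail to reach the optimum \wop, so each is charged optimisation time $p\cdot\kappa$; summing over the $r$ runs, both configurations in the comparison tie with total $r\cdot p\cdot\kappa$, so \texttt{eval-T} returns a uniform choice; hence the whole run of the configurator over a polynomial number of comparisons proceeds as if every configuration had identical performance, and by the union bound over these polynomially many events the configurator is blind \wop The main obstacle is not conceptual but the numerical verification that the threshold $0.772075n^2$ is exactly right for \emph{all} $\chi$ in the parameter space simultaneously — i.e.\ that no parameter value (in particular none of the larger values $\chi$ close to $3.0$, whose drift early in the run is larger but which slow down dramatically near the optimum) can beat it — and, relatedly, making sure that the $o(n)$ error terms and the boundary case $c_{u,\chi,\cdot}$ approaching $1$ from below are handled so that the bound is genuinely $\le 0.772075n^2$ rather than just $\le (1-\varepsilon)\cdot 0.772075n^2$; this is exactly the kind of bookkeeping the Python tool is designed to discharge.
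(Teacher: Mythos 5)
Your proposal matches the paper's proof: the paper establishes exactly the lower bound you describe (its Lemma~\ref{lem:lo_opt_time_lb}) by iterating the recurrences of Lemma~\ref{lem:recurrence_defs} for $772075$ periods with $\psi=1000000$ and checking that $c_{u,\chi,i}<1$ for every $\chi$ in the parameter space, and then concludes blindness exactly as in Theorem~\ref{thm:prlst_fails_small_cutoffs_rdg}. Your extra care about the $o(n)$ slack and the boundary case $c_{u,\chi,\cdot}\to 1$ is reasonable but the paper handles this implicitly by the strict inequality $c_{u,\chi,i}<1$, which leaves a linear gap to the optimum.
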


To prove
Theorem~\ref{thm:prlst_fails_small_cutoffs_lo} we first show that, \wop, no
configuration of the \EA 
here reaches the optimum of
LO within this cutoff time.

\begin{lemma}
    \label{lem:lo_opt_time_lb}
    For all configurations $\chi \in \{0.1, 0.2, \ldots, 2.9, 3.0\}$, the \EAc does not reach the optimum of LO within $0.772075n^2$
    iterations, \wop
\end{lemma}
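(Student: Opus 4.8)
The plan is to reuse the fitness-interval machinery already developed for Theorem~\ref{thm:prlsf_can_tune_for_lo}. By Lemma~\ref{lem:lo_fitness_bounds} together with Lemma~\ref{lem:recurrence_defs}, for any constant $\alpha$ and any constant period length $n^2/\psi$ that is small enough for those lemmas to apply, the fitness of the \EAc[\chi] on LO after $\alpha n^2$ iterations is, \wop, at most $c_{u,\chi,m}\cdot n + o(n)$, where $m$ is the number of periods contained in $\alpha n^2$ iterations and $c_{u,\chi,m}$ is obtained by iterating the recurrence of Lemma~\ref{lem:recurrence_defs} for $m$ steps from $c_{u,\chi,0}=0$. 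The key point is that the upper bound in Lemma~\ref{lem:lo_progress_bounds}(\ref{lem:lo_progress_ub}) bounds the progress made in a period with no ``or the optimum has been found'' caveat, so $c_{u,\chi,m}\cdot n + o(n)$ is a genuine upper bound on the current fitness throughout the run; since $m=O(1)$ the accumulated $o(n)$ error is still $o(n)$, so if $c_{u,\chi,m}<1$ then for $n$ large enough the fitness never reaches $n$, i.e.\ the optimum is not found within $\alpha n^2$ iterations.

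First I would fix $\alpha = 0.772075$ and choose a constant $\psi$ large enough for Lemmas~\ref{lem:lo_progress_bounds}--\ref{lem:lo_fitness_bounds} to hold for every $\chi\in\{0.1,0.2,\ldots,3.0\}$ (in particular large enough to make the $c_u$-recurrence map monotone, which is binding for the largest value $\chi=3$) and also satisfying $\alpha\psi\in\mathbb{N}$, so that $\alpha n^2$ iterations consist of exactly $m := \alpha\psi$ full periods. Then, using the Python tool referenced earlier, I would iterate the recurrence of Lemma~\ref{lem:recurrence_defs} for $m$ steps for each of the $30$ parameter values and verify that $c_{u,\chi,m} < 1-\delta$ for some constant $\delta>0$. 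Intuitively this must hold because $0.772075\,n^2$ is slightly below the expected optimisation time of $\chi=1.6$, namely $\approx 0.772076\,n^2$~\cite{paper:opt_mut_rates_1p1_LO}, and strictly below that of every other value of $\chi$.

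Finally I would assemble the statement: for each fixed $\chi$, Lemma~\ref{lem:lo_fitness_bounds} gives that, \wop, the fitness at every point of the first $m$ periods (hence after exactly $0.772075\,n^2$ iterations, since the fitness is non-decreasing) is at most $c_{u,\chi,m}\cdot n + o(n) < n$, so the optimum has not been reached. A union bound over the $30$ parameter values, which is permissible since each event has overwhelming probability and there are only constantly many of them, completes the proof.

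The main obstacle is the sharpness of the computational verification for $\chi=1.6$. Because the claimed threshold $0.772075\,n^2$ lies only about $10^{-6}n^2$ below the true expected optimisation time of the best configuration, the bound $c_{u,1.6,m}$ produced by the recurrence may exceed $1$ unless the period length $n^2/\psi$ is taken small enough that the recurrence tracks the actual fitness with correspondingly small error; this is exactly the regime in which one decreases the period length (increases $\psi$) and re-runs the tool until $c_{u,1.6,m}<1$ is certified. For all other values of $\chi$, as well as for the union bound, the passage from ``fitness below $n$'' to ``optimum not reached'', and the validity conditions on $\psi$, the argument is routine given the lemmas already in place.
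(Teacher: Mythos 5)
Your proposal matches the paper's proof: the paper likewise sets $\psi=1000000$, iterates the recurrence of Lemma~\ref{lem:recurrence_defs} for $772075$ periods, observes computationally that $c_{u,\chi,i}<1$ for all $30$ values of $\chi$, and concludes via the fitness upper bounds (which, as you note, hold without the ``optimum found'' caveat) that no configuration reaches the optimum w.o.p. Your additional remarks on the union bound over the constantly many configurations and on the sharpness of the verification for $\chi=1.6$ are consistent with, and slightly more explicit than, the paper's one-line argument.
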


\begin{proof}[Proof of Lemma~\ref{lem:lo_opt_time_lb}]
    After $772075$ periods of length $n^2/1000000$ (that is, with
    $\psi=1000000$) we observe that the value $c_{u,\chi,i}$ for all $\chi \in
    \{0.1, 0.2, \ldots, 2.9, 3.0\}$ is less than 1. This implies that, with
    overwhelming probability, after $0.772075n^2$ iterations, no configuration
    has found the optimum of LO.
\end{proof}

Using Lemma~\ref{lem:lo_opt_time_lb}, we are now able to prove
Theorem~\ref{thm:prlst_fails_small_cutoffs_lo}.

\begin{proof}[Proof of Theorem~\ref{thm:prlst_fails_small_cutoffs_lo}]
    Since by Lemma~\ref{lem:lo_opt_time_lb} we know that, with overwhelming
    probability, no configuration finds the optimum of LO
    within $0.772075n^2$ iterations, then we argue that the result follows for
    the same reasons as in the proof of
    Theorem~\ref{thm:prlst_fails_small_cutoffs_rdg}.
\end{proof}

\section{Conclusions}
Recent experimental work has provided evidence that the algorithm configuration search landscapes for various NP-hard problems are more benign
than in worst-case scenarios. In this paper we rigorously proved that this is the case for the parameter landscape induced by the standard bit mutation (SBM) operator, used in evolutionary computation, for the optimisation of two standard benchmark problem classes, {\scshape Ridge} and {\scshape LeadingOnes}. In particular we have proved that the parameter landscape for both problems is largely unimodal. This effectively allows gradient-following algorithm configurators, including ParamRLS, to efficiently identify optimal mutation rates for both problems.

 To the best of our knowledge, the only other time complexity analysis of algorithm configurators for specific problems is our earlier work~\cite{paper:impact_of_cutoff_time_rlsk}, where we considered ParamRLS to tune the neighbourhood size of a more simple stochastic local search algorithm.
 This analysis pointed out that using the best identified fitness as performance measure (i.e., ParamRLS-F), rather than the optimisation time (i.e., ParamRLS-T), allows us to identify the optimal parameter value with considerably smaller cutoff times i.e., more efficiently.
 Our analysis reveals that this insight is also true for the much more sophisticated parameter landscape of the global mutation operator SBM.
 In particular, we proved for a wide range of cutoff times that ParamRLS-F tuning for {\scshape LeadingOnes} identifies that the smaller the cutoff time, the higher is the optimal mutation rate. For almost every given cutoff time, the optimal mutation rate for that cutoff time is returned efficiently, with overwhelming probability.
 Conversely, any algorithm configurator using optimisation time as performance metric is blind when using cutoff times that are smaller than the expected optimisation time of the optimal configuration.



\paragraph{Acknowledgements} This work was supported by the EPSRC under grant EP/M004252/1.

\bibliographystyle{ACM-Reference-Format}
\bibliography{Bibliography-File}


\clearpage
\appendix

\section{Omitted Proofs from Section~\ref{sec:ridge} ({\scshape Ridge})}

\subsection{Proof of Lemma~\ref{lem:ridge_facts}.}

\begin{proof}[Proof of Lemma~\ref{lem:ridge_facts}]
    We split this proof into a section for each claim of the lemma. \\
    \textit{Proof of Lemma~\ref{lem:ridge_facts}(\ref{lem:exp_drift_1p1_a_ridge}).} \\
    By definition,
    \begin{align*}
        & \E[\Delta_{\chi}(x_{t}) \mid x_{t}, \text{\scshape Ridge}(x_t) = j < n] \\
        & = \sum_{i=1}^{n-j} (i \cdot \prob(\text{\scshape Ridge}(x_{t+1}) - \text{\scshape Ridge}(x_{t}) = i))
    \end{align*}
    Due to the nature of {\scshape Ridge}, we know that the current individual
    will be in the form $1^j0^{n-j}$. This means that, in order to improve by
    exactly $i$ in a single iteration, we must flip exactly the first $i$
    leading zeroes in the bit string. The probability of doing this is
    $(\chi/n)^i(1-\chi/n)^{n-i}$. This implies that
    \begin{align*}
        & \E[\Delta_{\chi}(x_{t}) \mid x_t, \text{\scshape Ridge}(x_t) = j < n]\\
        & = \sum_{i=1}^{n-j} \left( i \cdot \left( \frac{\chi}{n} \right)^i \left( 1 - \frac{\chi}{n} \right)^{n-i} \right).
    \end{align*}
    We can trivially lower-bound this sum by its first term:
    \[ \E[\Delta_{\chi}(x_{t}) \mid x_t, \text{\scshape Ridge}(x_t) = j < n] \ge \frac{\chi}{n} \left( 1 - \frac{\chi}{n} \right)^{n-1} \]
    We bound $\E[\Delta_{\chi}(x_{t}) \mid x_t, \text{\scshape Ridge}(x_t) = j < n]$ from above by observing
    that
    \begin{align*}
    & \sum_{i=1}^{n-j} \left( i \cdot \left( \frac{\chi}{n} \right)^i \left( 1 - \frac{\chi}{n} \right)^{n-i} \right)\\
     &= \left( 1 - \frac{\chi}{n} \right)^{n}
    \sum_{i=1}^{n-j}
    \left( i \cdot \left( \frac{\chi}{n-\chi} \right)^{i} \right) \\
        &\le \left( 1 - \frac{\chi}{n} \right)^{n} \sum_{i=1}^{\infty} \left( i \cdot \left( \frac{\chi}{n-\chi} \right)^{i} \right) \\
        &= \left( 1 - \frac{\chi}{n} \right)^{n} \cdot \frac{\chi(n-\chi)}{(n-2\chi)^{2}}
    \end{align*}
    and then observing that
    \begin{align*}
        &\left( 1 - \frac{\chi}{n} \right)^{n} \cdot \frac{\chi(n-\chi)}{(n-2\chi)^{2}}\\
        &= \frac{\chi}{n} \left( 1 - \frac{\chi}{n} \right)^{n-1} \cdot \left(
        \frac{n(n-\chi)}{(n-2\chi)^{2}}
        \left( 1 - \frac{\chi}{n} \right) \right)\\
        &= \frac{\chi}{n} \left( 1 - \frac{\chi}{n} \right)^{n-1} \cdot
        \frac{(n-\chi)^2}{(n-2\chi)^{2}}\\
        &= \frac{\chi}{n} \left( 1 - \frac{\chi}{n} \right)^{n-1} + O\left(\frac{1}{n^2}\right).\\
    \end{align*}

    \noindent\textit{Proof of Lemma~\ref{lem:ridge_facts}(\ref{lem:proof_c_1_optimal_ridge}).} \\
    By
    Lemma~\ref{lem:ridge_facts}(\ref{lem:exp_drift_1p1_a_ridge}),
    the expected drift of the \EAc[1] optimising {\scshape Ridge}, for every $x_t \neq 1^n$, is
    \[ \E[\Delta_{1}(x_{t}) \mid x_{t}, x_t \neq 1^n] \ge \frac{1}{n} \left( 1 - \frac{1}{n} \right)^{n-1}\]
    By the same lemma we have that
    \[ \E[\Delta_{\chi}(x_{t}) \mid x_{t}] \le \frac{\chi}{n} \left( 1 - \frac{\chi}{n} \right)^{n-1} + \Theta \left( \frac{1}{n^{2}} \right). \]
    Since the expression $\frac{\chi}{n} \left( 1 - \frac{\chi}{n} \right)^{n-1}$ is
    maximised for $\chi=1$ and is $\Theta(1/n)$ for all positive constants $\chi$, we
    have that, for sufficiently large $n$, the lower bound on the expected
    drift of the \EAc[1] is larger than the upper bound on the expected drift
    of the \EA with $\chi \neq 1$. Since these expressions for the drift do not
    depend on the fitness of the individual, we are able to determine the
    expected optimisation time of the \EA on {\scshape Ridge} using additive
    drift analysis \cite{He2001,DBLP:journals/algorithmica/Kotzing16}. Since all algorithms
    are initialised to $0^n$, a distance of~$n$ has to be bridged.
    Theorem~5 in~\cite{DBLP:journals/algorithmica/Kotzing16} states that the expected optimisation time is at most $n$ divided by a lower bound on the expected drift, and it is at least $n$ divided by an upper bound on the expected drift. Along with the first statement of this lemma, this implies that the expected optimisation time is at most $en^2$ for $\chi=1$ and at least $e^{\chi}/\chi \cdot n^2 - O(n)$, which is larger than $en^2$ for all $\chi \neq 1$ if $n$ is large enough.
    Hence, for large enough~$n$, the expected optimisation time for the \EAc[1] is less than the
    expected optimisation time for all other configurations with constant $\chi
    \neq 1$.
\end{proof}

\subsection{Proof of Lemma~\ref{lem:prob_1p1_a_beats_1p1_b}}

In order to prove this lemma, it is first necessary to prove a helper lemma.

\begin{lemma}
    \label{lem:prob_1p1_a_ahead_1p1_b}
    Assume that the \EAc[a] and the \EAc[b], with $a$ and $b$ two distinct non-negative
    constants such that $a/\exp(a) > b/\exp(b)$, are both initialised to $0^n$.
    Then with
    probability at least
    \[ 1-3\exp(-\Omega(\kappa/n)) \]
    the \EAc[a] has a higher fitness on {\scshape Ridge} than the \EAc[b] after
    $\kappa$ iterations or one of the considered algorithms has found a global optimum in the first $\kappa-1$ iterations.
\end{lemma}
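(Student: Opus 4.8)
The plan is to track the fitness of each algorithm as a cumulative sum of per‑step gains and to show that, with the claimed probability, the gain of the \EAc[a] exceeds that of the \EAc[b]. Since both algorithms are initialised to $0^n$, on {\scshape Ridge} the current search point is always of the form $1^j0^{n-j}$; as long as $j<n$, one step increases the fitness by exactly $i$ (by flipping the first $i$ zero‑bits and nothing else) with probability $(\chi/n)^i(1-\chi/n)^{n-i}$, and otherwise leaves the fitness unchanged. Thus the per‑step gain is non‑negative, has a geometrically decaying tail, has conditional expectation equal to the drift $\Delta_\chi$ bounded in Lemma~\ref{lem:ridge_facts}(\ref{lem:exp_drift_1p1_a_ridge}), and equals at least $1$ with probability at least $p_1^{(\chi)}:=(\chi/n)(1-\chi/n)^{n-1}$ regardless of the history.

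First I would lower‑bound the progress of the \EAc[a]. Either it reaches the optimum within $\kappa$ iterations, which fulfils the ``or'' clause of the lemma, or in every one of the $\kappa$ steps it has $j<n$ and hence an improvement of at least $1$ with (conditional) probability at least $p_1^{(a)}$; then the number of improving steps stochastically dominates $\mathrm{Bin}(\kappa,p_1^{(a)})$, and a Chernoff bound shows it is at least $(1-\delta)\kappa p_1^{(a)}$ except with probability $\exp(-\Omega(\kappa p_1^{(a)}))=\exp(-\Omega(\kappa/n))$. Since each improving step contributes at least $1$, the total gain of the \EAc[a] is then at least $(1-\delta)\kappa p_1^{(a)}$.

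Next I would upper‑bound the progress of the \EAc[b]. Standard bounded‑difference concentration does not apply directly, because a single step can gain up to $n$; instead I would exploit the geometric tail of the jump distribution. The cleanest route that also explains the constant $3$ in the statement is to control separately, by two Chernoff/MGF bounds, (i) the number of improving steps, which is dominated by $\mathrm{Bin}(\kappa,q)$ with $q=\Pr(\text{gain}\ge1)=O(1/n)$, and is thus at most $(1+\delta)\kappa q$ \wop, and (ii) the sum of the jump sizes over those improving steps — each jump, conditioned on being at least $1$, has expectation $1+O(1/n)$ and a geometric tail, so a sum of $m=\Theta(\kappa/n)$ of them is at most $(1+\delta)m$ except with probability $\exp(-\Omega(\kappa/n))$. (A more compact alternative is to bound the conditional moment‑generating function of each step's gain by $\exp\!\bigl(s(p_1^{(b)}+O(1/n^2))+O(s^2/n)\bigr)$ for a small constant $s$, multiply these along the trajectory, and apply Markov's inequality to $e^{s\cdot(\text{total gain})}$; this gives one failure event rather than two.) Either way, the total gain of the \EAc[b] is at most $(1+\delta)\kappa p_1^{(b)}$, up to $o(\kappa/n)$ corrections absorbed into $\delta$, except with probability $\exp(-\Omega(\kappa/n))$, and together with the event above these are the (at most three) exponentially small failure events handled by the union bound.

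Finally, since $ae^{-a}>be^{-b}$ and $(1-\chi/n)^{n-1}=(1+O(1/n))e^{-\chi}$ for constant $\chi$ (using the double inequality of the footnote), for a sufficiently small constant $\delta>0$ we have $(1-\delta)p_1^{(a)}>(1+\delta)p_1^{(b)}$ for all large $n$, hence $(1-\delta)\kappa p_1^{(a)}>(1+\delta)\kappa p_1^{(b)}$; conditioning on the good events, the \EAc[a] has strictly larger fitness after $\kappa$ iterations, or one of the two algorithms has already reached the optimum. I expect the upper bound on the \EAc[b] to be the main obstacle: the unboundedness of a single step's gain forces the use of the jump distribution's geometric tail rather than an off‑the‑shelf inequality, and the whole argument must stay valid uniformly over all $\kappa\in\mathrm{poly}(n)$ — in particular when $\kappa=\Theta(n)$, where $\exp(-\Omega(\kappa/n))$ is only a constant and one can guarantee merely that the gains are ``not too spread out'' rather than sharply concentrated.
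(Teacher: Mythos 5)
Your proposal is correct and follows essentially the same route as the paper's proof: a Chernoff lower bound on the number of improving steps of the \EAc[a] (dismissing the conditioning on not having reached the optimum via the ``or'' clause), and an upper bound on the gain of the \EAc[b] obtained by decomposing each step's gain into a Bernoulli improvement indicator times a geometrically distributed jump length, with separate Chernoff bounds for each — yielding exactly the three failure events behind the constant $3$. The final comparison via $ae^{-a}>be^{-b}$ and a sufficiently small $\delta$ likewise matches the paper.
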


\begin{proof}[Proof of Lemma~\ref{lem:prob_1p1_a_ahead_1p1_b}]
    Define $A_{t}$ and $B_{t}$ to be the fitness values of the \EAc[a] and
    the \EAc[b], respectively, after $t$ iterations on {\scshape Ridge}.
    Note that the probability of an improvement for the \EAc[a] is $\prob(A_{t+1} \ge A_t+1 \mid A_t < n) \ge a/n \cdot (1-a/n)^{n-1}$.
    We may drop the condition $A_t < n$ as if it is violated then an optimum has been found within the first $t$ steps. This means we assume that the number of improvements is not bounded. The fitness $A_\kappa$ is obviously at least as large as the number of improvements in the first $\kappa$ generations. Applying
    Chernoff bounds to the latter, for every constant $0 < \varepsilon < 1$,
    \[
        \prob\left(A_\kappa \le (1-\varepsilon)\kappa \cdot \frac{a}{n} \left(1 - \frac{a}{n}\right)^{n-1}\right) \le \exp(-\Omega(\kappa/n)).
    \]
    Using $\left(1 - \frac{a}{n}\right)^{n-1} = \left(1 - \frac{a}{n}\right)^{n-a} \left(1 - \frac{a}{n}\right)^{a-1}
    \ge e^{-a} \left(1 - \frac{a}{n}\right)^{a-1} \ge e^{-a} \left(1-O\left(\frac{1}{n}\right)\right)$ where the last step is trivial for $a \le 1$ and for $a > 1$ follows from Bernoulli's inequality,
    \begin{equation}
    \label{eq:upper-bound-on-A}
     A_\kappa \ge (1-\varepsilon)\kappa \cdot \frac{a e^{-a}}{n} \left(1 - O\left(\frac{1}{n}\right)\right)
    \end{equation}
    with probability $1-\exp(-\Omega(\kappa/n))$.

    We bound $B_\kappa$ from above in a similar fashion. However, we need to take into account the possibility of jumps that increase the fitness by more than~1. Note that, for all $i \in \mathbb{N}$,
    \begin{align*}
    & \prob(B_{t+1} = B_t + i \mid B_t) \le \left(\frac{b}{n}\right)^i \left(1 - \frac{b}{n}\right)^{n-i}\\
    =\;& \left(\frac{b}{n-b}\right)^i \left(1 - \frac{b}{n}\right)^n\\
    =\;& \left(\frac{b}{n-b}\right)^{i-1} \left(1 - \frac{b}{n-b}\right) \cdot \left(1 - \frac{b}{n}\right)^n \cdot \frac{b}{n-2b}.
    \end{align*}
    Hence $(B_{t+1}-B_t \mid B_t)$ has the same distribution as the convolution $X_{t+1}Y_{t+1}$ where $X_{t+1}$ is a Bernoulli random variable with parameter $p_X := \left(1 - \frac{b}{n}\right)^n \cdot \frac{b}{n-2b}$ and $Y_{t+1}$ is a geometric random variable with parameter $1-\frac{b}{n-b}$; in other words, $\prob(Y_{t+1} = i) =  \left(\frac{b}{n-b}\right)^{i-1} \left(1 - \frac{b}{n-b}\right)$.
    Intuitively, the $X$-variables can be seen as indicator variables signalling whether an improvement happens and the $Y$-variables correspond to the jump length in an improving generation.

    Applying Chernoff bounds to the variables $X_1, \dots, X_\kappa$, for every constant $0 < \varepsilon < 1$,
    \[
        \prob\left(\sum_{t=1}^\kappa X_t \ge (1+\varepsilon)\kappa p_X\right) \le \exp(-\Omega(\kappa/n)).
    \]
    Assuming $\sum_{t=1}^\kappa X_t \le n_X := (1+\varepsilon)\kappa p_X$, we bound the contribution of up to this number of $Y$-variables whose corresponding $X$-variable is 1, using Chernoff bounds for geometric random variables. For ease of notation, we rename these variables $Y_1, \dots, Y_{n_X}$.
    This yields
    \begin{align*}
         \prob\left(\sum_{t=1}^{n_X} Y_t \ge (1+\varepsilon) n_X \cdot \frac{n-b}{n-2b}\right)
        \le\;& \exp(-\Omega(n_X))\\
        =\;& \exp(-\Omega(\kappa/n)).
    \end{align*}
    Together, with probability at least $1-2\exp(-\Omega(\kappa/n))$
    \begin{align}
    B_\kappa \le \sum_{t=1}^\kappa X_t Y_t \le\;& (1+\varepsilon) n_X \cdot \frac{n-b}{n-2b}\notag\\
    =\;& (1+\varepsilon)^2 \kappa \left(1 - \frac{b}{n}\right)^n \cdot \frac{b}{n-2b} \cdot \frac{n-b}{n-2b}\notag\\
    \le\;& (1+\varepsilon)^2 \kappa  \frac{b e^{-b}}{n-2b} \cdot \frac{n-b}{n-2b}\notag\\
    \le\;& (1+\varepsilon)^2 \kappa  \frac{b e^{-b}}{n} \cdot \left(1 + O\left(\frac{1}{n}\right)\right).\label{eq:lower-bound-on-B}
    \end{align}
    Since $ae^{-a} > be^{-b}$ we can choose $\varepsilon$ small enough such that $(1-\varepsilon)ae^{-b} (1-O(1/n)) > (1+\varepsilon)^2be^{-b} (1+O(1/n))$, for large enough~$n$. Then the lower bound for $A_\kappa$ from~\eqref{eq:upper-bound-on-A} and the lower bound for $B_\kappa$ from~\eqref{eq:lower-bound-on-B} imply that
    $A_\kappa > B_\kappa$ with probability at least $1 - 3\exp(-\Omega(\kappa/n))$. 
\end{proof}

We are now able to prove Lemma~\ref{lem:prob_1p1_a_beats_1p1_b}.

\begin{proof}[Proof of Lemma~\ref{lem:prob_1p1_a_beats_1p1_b}]
    It is easy to show that the probability that either algorithm reaches the optimum within the first $n^2$ iterations is $\exp(-\Omega(n))$ (cf. Lemma~\ref{lem:ridge_opt_time_lower_bound} presented below).
    If $\kappa \le n^2$ then the statement follows from
    Lemma~\ref{lem:prob_1p1_a_ahead_1p1_b} and a union bound over failure
    probabilities from Lemma~\ref{lem:prob_1p1_a_ahead_1p1_b} and the aforementioned term of $\exp(-\Omega(n))$.

    If $\kappa > n^2$ we argue that \EAc[b] only wins the comparison if either it finds the optimum before \EAc[a] or if it is ahead of \EAc[a] after $\kappa$ iterations.
    A necessary condition for this union of events is that \EAc[b] is ahead of \EAc[a] during some point in time~$t$ with $n^2 \le t \le \kappa$. Applying Lemma~\ref{lem:prob_1p1_a_ahead_1p1_b} for all such~$t$ and taking a union bound, the probability that this happens is at most
    \[
    \sum_{t=n^2}^\kappa
    3\exp(-\Omega(t/n)) \le (\kappa-1) 3\exp(-\Omega(n)).
    \]
    The claim then follows since the sum of all failure probabilities is at
    most
    $(\kappa-1) 3\exp(-\Omega(n)) + \exp(-\Omega(n)) \le \kappa\exp(-\Omega(n))$.

    For the remark at the end of the lemma, note that the expression
    $\chi e^{-\chi}$ is maximised for $\chi=1$ and decreases monotonically either side of
    this point. The claim therefore holds since $a$ is closer to 1 than $b$ by
    assumption.
%
%
\end{proof}

\begin{proof}[Proof of Lemma~\ref{lem:ridge_opt_time_lower_bound}]
    By the progress bounds established in the proof of Lemma~\ref{lem:prob_1p1_a_ahead_1p1_b}, the
    fitness of the individual in the \EA after $\kappa := (1-\varepsilon')en^2$
    iterations on {\scshape Ridge} is at most
    \begin{align*}
        (1+\varepsilon'')^2 (1-\varepsilon)n \cdot \left(1 + O\left(\frac{1}{n}\right)\right) &\le (1+\varepsilon'')^3 (1-\varepsilon)n \\
        &\le (1-\varepsilon')n
    \end{align*}
    for suitably chosen positive constants $\varepsilon'$ and $\varepsilon''$,
    and large enough $n$. Hence, for cutoff times of $\kappa \le
    (1-\varepsilon)n^2$, the \EA has not reached the optimum of {\scshape
    Ridge}, with probability at least $1-2\exp(-\Omega(n)) \allowbreak= \allowbreak1-\exp(-\Omega(n))$.


\end{proof}

\section{Omitted Proofs from Section~\ref{sec:lo} ({\scshape LeadingOnes})}

\subsection{Proof of Lemma~\ref{lem:lo_progress_bounds}}

For this lemma, it is necessary to derive several helper lemmas. These can be
combined to proved Lemma~\ref{lem:lo_progress_bounds}. We begin by deriving an
upper bound on the number of iterations in which the \EA increases the fitness
of the individual (iterations which we call \emph{improvements}) in a period of
$n^2/\psi$ iterations, for some positive constant $\psi$.

\begin{lemma}
    \label{lem:1p1ea_lo_num_improvements_ub}
    With overwhelming probability, the \EA optimising LO,
    starting at a fitness of $i$, makes at most
    \[ (1+n^{-1/4}) \frac{\chi n}{\psi \cdot \exp \left( \frac{\chi i}{n} \right)} \]
    improvements in a period of length $n^{2}/\psi$.
\end{lemma}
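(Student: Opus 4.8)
The plan is to bound the number of improving steps in a period by a Chernoff-type argument, using that each improvement from a fitness level $k$ with $i \le k$ occurs with probability at most $\chi/n \cdot (1-\chi/n)^i \approx \chi/n \cdot \exp(-\chi i/n)$, since fitness never decreases (the \EA only accepts fitness-non-decreasing offspring) and the improvement probability is monotonically decreasing in the current fitness. First I would fix the period length $\tau := n^2/\psi$ and, for $t = 1, \dots, \tau$, define indicator variables $I_t$ that are $1$ exactly when iteration $t$ of the period produces a strict fitness improvement. Because the fitness at the start of the period is $i$ and never decreases, at every step the current fitness $k \ge i$, so $\Prob(I_t = 1 \mid \mathcal{F}_{t-1}) \le \chi/n \cdot (1-\chi/n)^{i}$ regardless of the history $\mathcal{F}_{t-1}$. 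Using $(1-\chi/n)^{i} \le (1-\chi/n)^{n \cdot (i/n)} \le e^{-\chi i/n}$ (the clean direction of the standard double inequality quoted in the paper), this gives $\Prob(I_t = 1 \mid \mathcal{F}_{t-1}) \le \chi/(n e^{\chi i / n})$.

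The key step is then to apply a Chernoff bound for the sum $\sum_{t=1}^{\tau} I_t$, which stochastically dominates by a binomial-type bound with these conditional success probabilities (formally, one couples the $I_t$ with independent Bernoulli variables of parameter $\chi/(ne^{\chi i/n})$ and invokes a Chernoff bound for sums of independent Bernoulli variables, or uses a martingale/Azuma-type concentration since the conditional probabilities are uniformly bounded). The expected value is at most
\[
\mu := \tau \cdot \frac{\chi}{n e^{\chi i/n}} = \frac{\chi n}{\psi \cdot \exp(\chi i/n)}.
\]
Choosing the relative deviation $\delta := n^{-1/4}$ in the multiplicative Chernoff bound gives a failure probability of $\exp(-\Omega(\delta^2 \mu)) = \exp(-\Omega(n^{-1/2} \cdot \mu))$. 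The final step is to check that $\mu = \Theta(n)$: since $\chi = \Theta(1)$, $\psi = \Theta(1)$, and $i \le n$ so $\exp(\chi i / n) \le e^{\chi} = \Theta(1)$, we have $\mu = \Omega(n)$, hence the failure probability is $\exp(-\Omega(n^{1/2})) = \exp(-\Omega(n^{\alpha}))$ for $\alpha = 1/2$, which is overwhelmingly small. Therefore, \wop, the number of improvements is at most $(1+n^{-1/4})\mu$, which is the claimed bound.

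The main obstacle is the conditioning: the success probability of $I_t$ depends on the current fitness, which is itself random and depends on how many improvements have already occurred in the period. The clean way around this is the observation that the bound $\chi/(ne^{\chi i/n})$ on the one-step improvement probability holds uniformly over all histories (because fitness is non-decreasing from its starting value $i$), so one never needs to track the actual fitness trajectory — a stochastic-domination/coupling argument with i.i.d.\ Bernoulli$(\chi/(ne^{\chi i/n}))$ trials suffices, and then a standard Chernoff bound closes it. A minor point to handle carefully is that if the algorithm reaches the optimum within the period, the indicators past that point are identically $0$, which only helps the upper bound, so the argument is unaffected.
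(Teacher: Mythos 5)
Your proposal is correct and follows essentially the same route as the paper: bound the per-step improvement probability uniformly by $\chi/n \cdot (1-\chi/n)^i \le \chi/(n e^{\chi i/n})$ using monotonicity of the fitness, sum indicator variables over the period, and apply a multiplicative Chernoff bound with relative deviation $n^{-1/4}$ to get failure probability $\exp(-\Omega(n^{1/2}))$. If anything, you are more careful than the paper's write-up, which silently treats the dependent indicators as if they were independent with the worst-case success probability, whereas you make the stochastic-domination/coupling step explicit.
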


\begin{proof}[Proof of Lemma~\ref{lem:1p1ea_lo_num_improvements_ub}]
    The probability of the \EA making an improvement is equal to the probability that
    it flips the first leading zero and does not flip any of the leading ones.
    Let us assume that the current fitness of the individual is $j$. Then the
    probability of making an improvement is therefore
    \[ \prob(\text{LO}(x_{t+1}) > \text{LO}(x_{t}) \mid \text{{\scshape LO}}(x_{t}) = j) = \frac{\chi}{n} \left( 1 - \frac{\chi}{n} \right)^{j} \]
    Since $j \ge i$ by assumption, we have
    \[ \prob(\text{LO}(x_{t+1}) > \text{LO}(x_{t}) \mid \text{{\scshape LO}}(x_{t}) = j) \le \frac{\chi}{n} \left( 1 - \frac{\chi}{n} \right)^{i} \]
    Using \cite[Corollary~1.4.4]{chap:prob_tools_analysis_rand_opt_heurs},
    the quantity $(1 - (\chi/n))^{i}$ can be bounded from above by $\exp(-\chi i/n)$.
    Hence
    \[ \prob(\text{LO}(x_{t+1}) > \text{LO}(x_{t}) \mid \text{{\scshape LO}}(x_{t}) = j) \le \frac{\chi}{n} \exp \left( - \frac{\chi i}{n} \right) \]
    We now use a Chernoff bound to bound the number of improvements made in a single
    period of length $n^{2}/\psi$, for some positive constant $\psi$. Let the
    random variable $X_{k}$ equal 1 if and only if an improvement occurs in iteration
    $k$. Otherwise let it equal 0. Now define the random variable $Y^{\psi}$ as
    $Y^{\psi} := \sum_{k=0}^{(n^2/\psi)-1} X_{k}$. That is, $Y^{\psi}$ counts the
    number of improvements which occur in a period of length $n^2/\psi$. Since $X_{k}$ is an indicator variable we have that
    \[ \E[Y^{\psi}] \le \frac{\chi n}{\psi \exp \left( \frac{\chi i}{n} \right)} \]
    Let us now optimistically assume that $\E[Y^{\psi}]$ is equal to
    this upper bound. Then, by a standard Chernoff bound, we derive that
    \begin{align*}
        & \prob \left( Y^{\psi} \ge (1+n^{-1/4}) \frac{\chi n}{\psi \exp \left( \frac{\chi i}{n} \right)} \right)\\
        & \le \exp \left( - \frac{n^{-1/2} \chi n}{3\psi \exp \left( \frac{\chi i}{n} \right)} \right)\\
        & \le \exp \left( - \frac{n^{-1/2} \chi n}{3\psi e^\chi} \right) =
    \exp(-\Omega(n^{1/2})). \qedhere
    \end{align*}
\end{proof}

We now derive a corresponding lower bound on the number of improvements in a
period of the same length. We show that we make at least $i$ improvements
during the period or we exceed a fitness of $i$ by the end of the period.

\begin{lemma}
    \label{lem:1p1ea_lo_num_improvements_lb}
    Assume that the \EA optimising LO currently has a
    fitness of $j$. Fix a value of $i \le n$ and consider a period of length $n^2/\psi$ (for any positive
    constant~$\psi$).
    Then, \wop, during said period the \EA makes at least
    \[ (1-n^{-1/4}) \frac{\chi n}{\psi \cdot \exp \left( \frac{\chi i}{n-\chi} \right)} \]
    improvements, or the \EA
    exceeds a fitness of $i$.
\end{lemma}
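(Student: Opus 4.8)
The plan is to mirror the argument for the upper bound in Lemma~\ref{lem:1p1ea_lo_num_improvements_ub}, replacing the Chernoff upper-tail bound with a lower-tail bound, and to take care that the per-step improvement probability is only bounded from below as long as the fitness has not exceeded~$i$. First I would recall that if the current fitness equals some $k \le i$ then the probability of an improvement is exactly $\frac{\chi}{n}(1-\chi/n)^k \ge \frac{\chi}{n}(1-\chi/n)^i$. Using the inequality $(1-\chi/n)^{n-\chi} \le e^{-\chi}$ from the paper's footnote convention (equivalently $1-\chi/n \ge e^{-\chi/(n-\chi)}$), this is at least $p := \frac{\chi}{n}\exp\!\bigl(-\chi i/(n-\chi)\bigr)$.

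Next I would introduce a coupling against an independent Bernoulli process. Let $\tau$ be the first iteration within the period at which the fitness exceeds~$i$ (and $\tau=\infty$ if this never happens). For every iteration $k<\tau$ the improvement probability is at least $p$, so one can couple the true improvement indicators $X_k$ with independent $\mathrm{Bernoulli}(p)$ variables $X_k'$ so that $X_k \ge X_k'$ for all $k<\tau$. If $\tau$ lies inside the period then the second disjunct of the lemma already holds; moreover, since the \EA's fitness is non-decreasing, the fitness then also exceeds~$i$ at the end of the period, matching the phrasing used in Lemma~\ref{lem:lo_progress_bounds}(\ref{lem:lo_progress_lb}). Otherwise the number of improvements over the whole period is at least $Y' := \sum_{k=0}^{n^2/\psi-1} X_k'$.

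Finally I would apply a standard Chernoff lower-tail bound to $Y'$, whose mean is $\mu = \frac{n^2}{\psi}p = \frac{\chi n}{\psi \exp(\chi i/(n-\chi))}$. Since $i \le n$ we have $\chi i/(n-\chi) \le \chi n/(n-\chi) = O(1)$, so $\mu = \Omega(n)$, and hence $\prob\!\bigl(Y' \le (1-n^{-1/4})\mu\bigr) \le \exp(-n^{-1/2}\mu/2) = \exp(-\Omega(n^{1/2}))$, which is overwhelming. Combining the two cases by a union bound yields the claim.

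The main obstacle is exactly the conditioning: the lower bound~$p$ on the improvement probability only holds while the fitness is at most~$i$, so one cannot directly Chernoff-bound the raw improvement count. The stopping-time coupling with the dominated $\mathrm{Bernoulli}(p)$ process resolves this cleanly, and monotonicity of the \EA's fitness makes "exceeding~$i$ at some point in the period" equivalent to "exceeding~$i$ at the end of the period", so the statement is consistent with its use in the progress bounds.
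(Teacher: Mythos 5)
Your proposal is correct and follows essentially the same route as the paper's proof: bound the per-step improvement probability from below by $\frac{\chi}{n}\exp(-\chi i/(n-\chi))$ while the fitness is at most $i$, dominate the improvement count by a sum of independent Bernoulli variables, apply a Chernoff lower-tail bound, and absorb the case of exceeding fitness $i$ into the disjunction of the lemma. Your explicit stopping-time coupling merely spells out the stochastic-domination step that the paper states more tersely, and your remark on monotonicity of the fitness is consistent with how the lemma is used downstream.
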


\begin{proof}[Proof of Lemma~\ref{lem:1p1ea_lo_num_improvements_lb}]
    Assuming that the current fitness of
    the individual is $j$, the probability of making an improvement is
    \[ \prob(\text{LO}(x_{t+1}) > \text{LO}(x_{t}) \mid \text{{\scshape LO}}(x_{t}) = j) = \frac{\chi}{n} \left( 1 - \frac{\chi}{n} \right)^{j} \]
    Since $j \le i$ by assumption, we have
    \[ \prob(\text{LO}(x_{t+1}) > \text{LO}(x_{t}) \mid \text{{\scshape LO}}(x_{t}) = j) \ge \frac{\chi}{n} \left( 1 - \frac{\chi}{n} \right)^{i} \]
    Using \cite[Corollary~1.4.4]{chap:prob_tools_analysis_rand_opt_heurs},
    the quantity $(1 - (\chi/n))^{i}$ can be bounded from below by
    $\exp(-\chi i/(n-\chi))$. This implies that
    \begin{align*}
    & \prob(\text{LO}(x_{t+1}) > \text{LO}(x_{t}) \mid \text{{\scshape LO}}(x_{t}) \le i)\\
    & \ge \frac{\chi}{n} \exp \left( - \frac{\chi i}{n-\chi} \right).
    \end{align*}
    We now use a Chernoff bound to lower-bound the number of improvements made in a single
    period of length $n^{2}/\psi$, for some positive constant $\psi$.
    Let the
    indicator random variable $X_{k}$ attain value 1 with probability
    $\frac{\chi}{n} \exp \left( - \frac{\chi i}{n-\chi} \right)$. Now define the random variable $Y^{\psi}$ as
    $Y^{\psi} := \sum_{k=0}^{(n^2/\psi)-1} X_{k}$. That is, $Y^{\psi}$ is stochastically dominated by the
    number of improvements which occur in a period of length $n^2/\psi$, unless a fitness larger than~$i$ is reached. Note that the event of exceeding fitness~$i$
    is
    included in the event whose probability we aim to bound from below.
    Since $X_{k}$ is an indicator variable we have that
    \[ \E[Y^{\psi}] \ge \frac{\chi n}{\psi \cdot \exp \left( \frac{\chi i}{n-\chi} \right)} \]
    By a standard Chernoff bound, we derive that
    \begin{align*}
     & \prob \left( Y^{\psi} \le (1-n^{-1/4}) \frac{\chi n}{\psi \cdot \exp \left( \frac{\chi i}{n-\chi} \right)} \right)\\
      & \le \exp \left( - \frac{n^{-1/2} \chi n}{2\psi \cdot \exp
    \left( \frac{\chi i}{n-\chi} \right)} \right) = \exp(-\Omega(n^{1/2})).\qedhere
    \end{align*}
\end{proof}

Associated with each improvement are a number of \emph{free riders}. These are the
consecutive 1 bits immediately following the former leading 0 which has just
been flipped.  Therefore the total fitness gained in each improvement is one more than
the number of associated free riders. We first show an upper bound on the number of free riders
encountered during a period of length $n^2/\psi$.

\begin{lemma}
    \label{lem:num_free_riders_ub}
    Consider $\ell$ improving steps of the \EA optimising LO. Then with probability $1-\exp(-\Omega(\ell^{1/2}))$, it gains
    at most $(1+\ell^{-1/4})\ell$ in fitness from free riders in total.
\end{lemma}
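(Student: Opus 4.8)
The plan is to analyse the number of free riders as a sum of $\ell$ independent truncated geometric random variables and apply a Chernoff-type tail bound. First I would observe that, for each improving step, the first $0$-bit flips and the bits following it are (before the mutation is evaluated) uniformly random, since the \EA never gains information about bits that lie beyond the leading-ones prefix. Hence, conditional on an improvement occurring at a search point with $k$ leading ones, the number of free riders is the length of the maximal run of $1$s among the bits in positions $k+2, \dots, n$ after mutation; this is stochastically dominated by a geometric random variable $G$ with $\prob(G \ge t) = 2^{-t}$ (truncated at $n-k-1 \le n$), independently of the other improving steps. In particular $\E[G] \le 1$, so the expected total contribution of free riders over $\ell$ improving steps is at most $\ell$.

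Next I would set up the concentration argument. Let $G_1, \dots, G_\ell$ be the independent dominating geometric variables and $S := \sum_{i=1}^\ell G_i$; we want $\prob(S \ge (1+\ell^{-1/4})\ell) \le \exp(-\Omega(\ell^{1/2}))$. Since the deviation parameter $\delta := \ell^{-1/4}$ is small and $\E[S] \le \ell = \Theta(\ell)$, a multiplicative Chernoff bound for sums of independent geometric random variables (e.g.\ via the moment generating function $\E[e^{sG}] = (1-p)/(1 - p e^{s})$ with $p=1/2$, valid for $s < \ln 2$, or by invoking a standard Chernoff bound for geometric variables such as the one already used in the proof of Lemma~\ref{lem:1p1ea_lo_num_improvements_lb}) yields a tail of order $\exp(-\Omega(\delta^2 \E[S])) = \exp(-\Omega(\ell^{-1/2} \cdot \ell)) = \exp(-\Omega(\ell^{1/2}))$. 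The truncation at $n$ only decreases each $G_i$, so it can only help the upper tail; formally one passes to the untruncated variables for the purpose of the upper bound.

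The main obstacle is the independence and domination claim rather than the concentration step, which is routine. One has to argue carefully that the free-rider counts across distinct improving steps are (stochastically) independent: this holds because each time the leading-ones prefix advances, the newly exposed bits beyond the new prefix were never selected for or against by the \EA and are therefore still i.i.d.\ fair coins, so their conditional distribution given the entire history is unaffected. Once this is established — essentially the classical ``free rider'' argument for {\scshape LeadingOnes} — the bound follows by the Chernoff estimate above. A minor technical point to handle is that the number of improving steps $\ell$ may itself be a random variable in later applications; here the lemma is stated for a fixed $\ell$, so we simply condition on the identities of the improving steps and apply the bound, which is what the later lemmas will invoke.
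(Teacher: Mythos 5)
Your proposal is correct and follows essentially the same route as the paper's proof: both model the free riders of each improving step as (dominated by) a geometric random variable with mean $1$, justify this via the uniform randomness of the bits beyond the leading-ones prefix, handle the finite string length by passing to the untruncated/infinite-string setting (which can only increase the count), and conclude with a Chernoff bound for sums of independent geometric random variables giving a tail of $\exp(-\Omega(\ell^{-1/2}\cdot\ell)) = \exp(-\Omega(\ell^{1/2}))$. The only cosmetic difference is that you sketch the moment-generating-function argument where the paper directly cites a packaged Chernoff bound for geometric variables.
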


\begin{proof}[Proof of Lemma~\ref{lem:num_free_riders_ub}]
    For this proof, we assume that the algorithm is operating on an infinite
    bit string. This is a valid assumption since we are deriving an upper bound
    on the number of free riders which occur during the optimisation process,
    and the number which occurs in reality is strictly less than the number
    which is possible on an infinite bit string. It is well known
    that the distribution of the ones and zeroes following the leading zero is
    uniformly at random, since this section of the bit string has no effect on
    the fitness of the individual (see Lemma~1 in~\cite{Lehre2012} for a formal proof). This fact and our infinite bit string
    assumption allow us to use a geometric random variable $X_k$ with parameter~$1/2$ to count number of
    free riders in the $k$-th improvement. Define $X^{\ell} :=
    \sum_{k=1}^{\ell} X_{k}$ as the total number of free riders encountered over
    the $\ell$ improvements. Using a Chernoff bound for sums of geometric random variables,
    \cite[Theorem~1.10.32]{chap:prob_tools_analysis_rand_opt_heurs}, we
    have that
    \begin{align*}
     \prob(X^{\ell} \ge (1+\ell^{-1/4}) \cdot \ell) \le\;& \exp \left( -\frac{\ell^{-1/2} (\ell-1)}{2(1+\ell^{-1/4})} \right)\\
      =\;& \exp(-\Omega(\ell^{1/2})). \qedhere
    \end{align*}
\end{proof}

We now derive a corresponding lower bound on the number of free riders
encountered during a period of the same length. The lemma below proves that we
gain at least $(1-\ell^{-1/4})\ell$ in fitness from free riders assuming $\ell$
improvements in a period, or the
\EA reaches the optimum during the period.

\begin{lemma}
    \label{lem:num_free_riders_lb}
    Consider a period with $\ell$ improving steps of the \EA optimising LO. Then, during this period, \wop, the \EA gains at least ${(1-\ell^{-1/4})\ell}$ in fitness from free
    riders in
    total, or the \EA reaches
    the optimum during the period.
\end{lemma}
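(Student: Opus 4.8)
The plan is to mirror the proof of Lemma~\ref{lem:num_free_riders_ub}, replacing the upper-tail Chernoff bound for sums of geometric random variables by the corresponding lower-tail bound, and to use the escape clause ``or the \EA reaches the optimum during the period'' to deal with the fact that on a finite bit string the free riders of an improvement can be truncated by the right end of the string.

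First I would set up a coupling with an idealised infinite bit string. As in the proof of Lemma~\ref{lem:num_free_riders_ub}, the bits immediately following the current leading zero are distributed uniformly at random and independently of the current fitness and of the past (Lemma~1 in~\cite{Lehre2012}). Hence, on an infinite string, the number of free riders $X_k$ collected in the $k$-th improvement of the period is a geometric random variable with parameter~$1/2$, and the $X_1,\dots,X_\ell$ are mutually independent. I would couple the real process on the length-$n$ string with this idealised one so that the true number of free riders in improvement~$k$ equals $\min(X_k, R_k)$, where $R_k$ is the number of bit positions strictly after the 0-bit flipped in improvement~$k$. Whenever $X_k \ge R_k$ the entire remaining suffix becomes all ones, i.e.\ the \EA reaches the optimum in that improvement.

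This yields a clean dichotomy. Either some improvement in the period satisfies $X_k \ge R_k$, in which case the \EA reaches the optimum during the period and the second alternative of the lemma holds; or $X_k < R_k$ for all $\ell$ improvements, in which case the true total number of free riders equals exactly $X^{\ell} := \sum_{k=1}^{\ell} X_k$ (the two processes agree pathwise on this event, so no conditioning is needed). In the latter case I would apply the lower-tail Chernoff bound for sums of independent geometric random variables, \cite[Theorem~1.10.32]{chap:prob_tools_analysis_rand_opt_heurs}, by a calculation analogous to the one in the proof of Lemma~\ref{lem:num_free_riders_ub}, to obtain
\[ \prob\!\left( X^{\ell} \le (1-\ell^{-1/4})\,\ell \right) \le \exp\!\left( -\Omega(\ell^{1/2}) \right), \]
so that with overwhelming probability the \EA gains at least $(1-\ell^{-1/4})\ell$ in fitness from free riders, unless the optimum is reached during the period.

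The main obstacle is keeping the coupling free of circular dependence: $R_k$ depends on the fitness, which depends on $X_1,\dots,X_{k-1}$, so one must argue that revealing the $X_m$ in the order $m=1,2,\dots$ leaves each subsequent $X_k$ a fresh geometric$(1/2)$ variable. This is exactly the content of the independence statement from~\cite{Lehre2012}, and the ``or the optimum is reached'' alternative is precisely what lets us avoid conditioning on the awkward event $\{X_k < R_k \text{ for all } k\}$: on that event the real and idealised free-rider counts coincide, and off it the alternative holds trivially. A minor point to verify is that $\ell^{-1/4}$ is an admissible relative deviation (it needs $\ell \ge 1$, which is the case whenever the bound is non-trivial) and that $\exp(-\Omega(\ell^{1/2}))$ is overwhelmingly small for the values of~$\ell$ arising in the application, where $\ell = \Theta(n)$.
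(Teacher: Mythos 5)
Your proposal is correct and follows essentially the same route as the paper's proof: define the free-rider counts as independent geometric$(1/2)$ variables on an idealised infinite string, note that this idealisation can only fail when the optimum is reached (which is the lemma's second alternative), and apply the lower-tail Chernoff bound for sums of geometric random variables from \cite[Theorem~1.10.32]{chap:prob_tools_analysis_rand_opt_heurs}. Your explicit $\min(X_k,R_k)$ coupling is simply a more detailed justification of the step the paper states in one sentence.
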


\begin{proof}[Proof of Lemma~\ref{lem:num_free_riders_lb}]
    We define the random variables $X_{k}$ and $X^{\ell}$ as in the proof of
    Lemma~\ref{lem:num_free_riders_ub}. We may assume in the following that free riders are effectively drawn from an infinite bit
    string. This assumption is only false when an optimum is reached and this event is contained in the event whose probability we aim to bound from below.

    By
    \cite[Theorem~1.10.32]{chap:prob_tools_analysis_rand_opt_heurs} we
    have that
    \begin{align*}
     \prob(X^{\ell} \le (1-\ell^{-1/4}) \cdot \ell)
     \le\;& \exp \left( -\frac{\ell^{-1/2} \ell}{2-4\ell^{-1/4}/3} \right)\\
      =\;& \exp(-\Omega(\ell^{1/2})).\qedhere
    \end{align*}
\end{proof}

Combining
Lemmas~\ref{lem:1p1ea_lo_num_improvements_ub}~and~\ref{lem:num_free_riders_ub}
allows us to derive an upper bound on the total progress made by the algorithm
in a period of length $n^2/\psi$ which holds \wop

\begin{proof}[Proof of Lemma~\ref{lem:lo_progress_bounds}]
    \textit{Proof of Lemma~\ref{lem:lo_progress_bounds}~(\ref{lem:lo_progress_ub}):}\\
    Note that the progress of the algorithm stops abruptly if the global optimum is reached. Hence we assume pessimistically that progress is not bounded.
    Applying Lemma~\ref{lem:1p1ea_lo_num_improvements_ub} tells us that, \wop, the algorithm makes at least $\ell := (1+n^{-1/4}) \frac{\chi n}{\psi \cdot \exp \left( \frac{\chi j}{n} \right)}$ improvements. By Lemma~\ref{lem:num_free_riders_ub}, the number of free riders is at most
    $(1+\ell^{-1/4})\ell$ with probability $1-\exp(-\Omega(\ell^{1/2})) = 1 - \exp(-\Omega(n^{1/2}))$. Together, the fitness increases by at most
    \[
        (2+\ell^{-1/4})\ell =
        \frac{2\chi n}{\psi \cdot \exp \left( \frac{\chi j}{n}\right)} + o(n),
    \]
    \wop, as claimed.

    \textit{Proof of Lemma~\ref{lem:lo_progress_bounds}~(\ref{lem:lo_progress_lb}):}\\
    Lemma~\ref{lem:1p1ea_lo_num_improvements_lb} tells us that, with overwhelming
    probability, the algorithm makes at least
    $\ell := (1-n^{-1/4}) \frac{\chi n}{\psi \cdot \exp \left( \frac{\chi i}{n-\chi} \right)}$
    improvements within a period of
    length $n^2/\psi$ or exceeds some fitness $i$ as defined in the statement
    of this theorem. If the algorithm exceeds
    fitness
    $i$ at the end of the period then we are done, since this event is contained in the event whose probability we aim to bound from below. Assuming at least $\ell$ improvements are made, by Lemma~\ref{lem:num_free_riders_lb} the total gain through free riders is at least $(1-\ell^{-1/4})\ell$ with probability $1-\exp(-\Omega(\ell^{1/2})) = 1-\exp(-\Omega(n^{1/2}))$. Hence, \wop the \EA makes at least
    \begin{align*}
     &(2-\ell^{-1/4})\ell = \frac{2\chi n}{\psi \cdot \exp \left( \frac{\chi i}{n-\chi} \right)} - o(n)
    \end{align*}
    progress in total during the period, or the \EA exceeds a fitness of $i$ at the end of the period.

    Finally, we argue that the term $n-\chi$ in the $\exp$-term can be replaced
    by~$n$ since
    \begin{align*}
        \exp\left(\frac{\chi i}{n-\chi}\right)
        =\;& \exp\left(\frac{\chi i}{n} \cdot \frac{n}{n-\chi}\right)\\
        =\;& \exp\left(\frac{\chi i}{n} + \frac{\chi^2i}{n(n-\chi)}\right)\\
        =\;& \exp\left(\frac{\chi i}{n}\right) \cdot \exp\left(\frac{\chi^2i}{n(n-\chi)}\right)\\
        \le\;& \exp\left(\frac{\chi i}{n}\right) \cdot \exp\left(\frac{\chi^2}{n-\chi}\right)\\
        \le\;& \exp\left(\frac{\chi i}{n}\right) \cdot \frac{1}{1-\frac{\chi^2}{n-\chi}}
    \end{align*}
    where in the last step we used $e^x \le \frac{1}{1-x}$ for $x < 1$.
    Together,
    \begin{align*}
    &  \frac{2\chi n}{\psi \cdot \exp \left( \frac{\chi i}{n-\chi} \right)} - o(n)\\
    & \ge  \frac{2\chi n}{\psi \cdot \exp \left( \frac{\chi i}{n} \right)} \left(1-\frac{\chi^2}{n-\chi}\right)  - o(n)\\
    & = \frac{2\chi n}{\psi \cdot \exp \left( \frac{\chi i}{n} \right)} - O(1) - o(n)
    \end{align*}
    and the $O(1)$ term is absorbed in the $o(n)$ term.
\end{proof}

\subsection{Proof of Lemma~\ref{lem:lo_fitness_bounds}}

Since Lemma~\ref{lem:lo_progress_bounds} provides upper and lower bounds on the
total progress of the algorithm in a period of length $n^2/\psi$, we are now
able to finally prove Lemma~\ref{lem:lo_fitness_bounds}.

\begin{proof}[Proof of Lemma~\ref{lem:lo_fitness_bounds}]
    We prove the statement by induction. If an optimum has been reached at the end of period~$i$ then there is nothing to prove. We may therefore assume that $\ell_{\chi , i} < n$.
     With overwhelming probability, the initial fitness will be in $[\ell_{\chi ,0},u_{\chi ,0}]:=[0,\sqrt{n}]$, since the fitness of the initial search point
    is
    at most $\sqrt{n}$ with probability $1-2^{-\sqrt{n}}$.

    By Lemma~\ref{lem:lo_progress_bounds}~(\ref{lem:lo_progress_ub}) we have that if the
    individual begins with a fitness of $u_{\chi,i}$ then, with
    overwhelming probability, after a period of $n^2/\psi$ iterations it has a
    fitness of at most
    \[ u_{\chi,i+1} := u_{\chi,i} +  \frac{2\chi n}{\psi \exp \left( \frac{\chi  u_{\chi,i}}{n} \right)} + o(n). \]
    Note that the upper bound $u_{\chi , i+1}$ still holds (trivially) if $u_{\chi , i+1} \ge n$.

    We now apply Lemma~\ref{lem:lo_progress_bounds}~(\ref{lem:lo_progress_lb}) to show that the fitness of the
    individual at the end of period $i+1$ is at least $\ell_{\chi,i+1}$ given that
    its fitness at the end of period $i$ is at least $\ell_{\chi,i}$.
    Lemma~\ref{lem:lo_progress_bounds}~(\ref{lem:lo_progress_lb}) tells us that, with overwhelming
    probability, the fitness of the individual in the \EA is at least
    \[ \ell_{\chi , i+1} := \ell_{\chi,i} + \frac{2\chi n}{\psi \cdot \exp \left( \frac{\chi  u_{\chi,i+1}}{n} \right)} - o(n) \]
    or it exceeds $u_{\chi,i+1}$. Since we do not exceed fitness $u_{\chi,i+1}$ with
    overwhelming probability (and we certainly do not exceed it if $u_{\chi , i+1} \ge n$), we have that the fitness is at least $\ell_{\chi , i+1}$, \wop

    Since at the beginning of period $i+1$ the individual has a fitness in the
    interval $[\ell_{\chi,i},u_{\chi,i}]$ and at the end of the period it has a
    fitness in the interval $[\ell_{\chi,i}, u_{\chi,i+1}]$, and since both
    $\ell_{\chi ,j}$ and $u_{\chi ,j}$ are monotonically increasing for all $j$ we can
    conclude that, \wop, the fitness of the individual
    remains in the interval $[\ell_{\chi,i},u_{\chi,i+1}]$ throughout the period.
    Taking a union bound over all failure probabilities concludes the proof of
    both claims.
\end{proof}

\subsection{Proof of Lemma~\ref{lem:recurrence_defs}}

This proof uses the fitness bounds derived in Lemma~\ref{lem:lo_fitness_bounds}.

\begin{proof}[Proof of Lemma~\ref{lem:recurrence_defs}]
    The statement about $c_{\ell, \chi, 0}$ and $c_{u, \chi, 0}$ is obvious as
    $\ell_{\chi , 0} = 0$ and $u_{\chi , 0} = \sqrt{n}$.

    By definition, $u_{\chi,i+1}$ can be written as
    \begin{align*}
     & c_{u,\chi,i} \cdot n + \frac{2\chi n}{\psi \exp \left( \frac{\chi  \cdot (c_{u,\chi,i} \cdot n + o(n))}{n} \right)} + o(n)\\
     \le\;&
    c_{u,\chi,i} \cdot n + \frac{2\chi n}{\psi \exp \left( \chi \cdot c_{u,\chi,i} \right)} + o(n)
    \end{align*}
    and the leading constant is
    \[ c_{u,\chi,i+1} := c_{u,\chi,i} + \frac{2\chi}{\psi \exp \left( \chi \cdot c_{u,\chi,i} \right)}. \]

    By definition, $\ell_{\chi , i+1}$ can be written as
    \begin{align*}
    & c_{\ell,\chi,i} \cdot n + \frac{2\chi n}{\psi \exp \left( \frac{\chi (u_{\chi,i+1}+o(n))}{n} \right)} - o(n)\\
    =\;& c_{\ell,\chi,i} \cdot n + \frac{2\chi n}{\psi \exp \left( \chi \cdot c_{u,\chi,i+1}+o(1) \right)} - o(n).
    \end{align*}
    This is at least
    \[ c_{\ell,\chi,i} \cdot n + \frac{2\chi n}{\psi \exp \left( \chi \cdot c_{u,\chi,i+1} \right)} - o(n) \]
    since $e^{-o(1)} \ge 1-o(1)$ since $e^{-x} \le 1 - x/2$ for ${0 \le x \le
    1}$ \cite[Lemma~1.4.2]{chap:prob_tools_analysis_rand_opt_heurs}. The leading constant is thus
    \[
        c_{\ell,\chi,i+1} := c_{\ell,\chi,i} + \frac{2\chi}{\psi \exp \left( \chi \cdot c_{u,\chi,i+1} \right)}. \qedhere
    \]
\end{proof}

\subsection{Proof of Lemma~\ref{lem:lo_param_space_unimodal_concave}}

\begin{proof}[Proof of Lemma~\ref{lem:lo_param_space_unimodal_concave}]
    In order to bound the fitness of the individual in the \EA after $\alpha
    n^2$ we simply iterate the recurrences given by
    Lemma~\ref{lem:recurrence_defs} ${\psi \cdot \alpha}$ times. We do so for all
    cutoff times in the set \{$0.000001n^2, 0.000002n^2, \ldots, 772074n^2,
    772075n^2$\}, setting $\psi = 1000000$.  For all configurations and cutoff times
    which we consider here, the upper bound on the leading constant of the fitness is strictly less than $1$. Then Lemma~\ref{lem:recurrence_defs} implies
    that, \wop, no configuration reaches the optimum
    within any of these cutoff times, and hence we can ignore the case in
    Lemma~\ref{lem:lo_fitness_bounds} that the optimum is reached by the end of
    a period and simply assume that the fitness is contained in the interval
    given by the lemma. By Lemma~\ref{lem:recurrence_defs}, the fitness of the
    individual in the \EA is in the range $[c_{\ell,\chi,i}n - o(n),c_{u,\chi,i+1}n +
    o(n)]$ \wop Hence, for two parameters $a, b$, if
    the interval $[c_{\ell,a,i},c_{u,a,i+1}]$ is non-overlapping with the
    interval $[c_{\ell,b,i},c_{u,b,i+1}]$ and $c_{\ell,a,i} > c_{u,b,i+1}$ then
    we can conclude that, for all times $t$ satisfying $\tau_{i} \le t \le
    \tau_{i+1}$, where $\tau_{i}$ is the number of iterations corresponding to
    the end of period $i$, the \EAc[a] is ahead of the \EAc[b] with
    overwhelming probability. \par

    We conducted the above verification for all pairs of neighbouring
    configurations (i.e. for all configurations between which it is possible to
    transition in a single mutation using the local search operator) for all
    cutoff times up to $0.772075n^2$. We discovered that each value of $\chi
    \ge 1.6$ is optimal for some range of quadratic cutoff times (bounds on which are given in Table~\ref{tab:param_space_monotonic_ranges}), and also that for these ranges of cutoff times the parameter landscape is unimodal.
\end{proof}

\subsection{Proof of Lemma~\ref{lem:remains_ahead_ineq}}

\begin{proof}[Proof of Lemma~\ref{lem:remains_ahead_ineq}]
    We know by Lemma~\ref{lem:lo_progress_bounds}~(\ref{lem:lo_progress_lb}) that, with overwhelming
    probability, for any $i$ greater than the current fitness, the \EA
    makes progress of at least
    \[ \frac{2\chi n}{\psi \cdot \exp \left( \frac{\chi i}{n-\chi} \right)} - o(n) \]
    in a period of length $n^2/\psi$ or exceeds a fitness of $i$ at the end of
    the period. We know by Lemma~\ref{lem:lo_progress_bounds}~(\ref{lem:lo_progress_ub}) that, with overwhelming
    probability, the \EA makes progress of at most
    \[ \frac{2\chi n}{\psi \cdot \exp \left( \frac{\chi j}{n} \right) } + o(n) \]
    in a period of length $n^2/\psi$, given that it begins the period at
    fitness~$j$. \par

    Setting $i$ in Lemma~\ref{lem:lo_progress_bounds}~(\ref{lem:lo_progress_lb}) to $n-1$ tells us that, with
    overwhelming probability, the \EAc[a] either makes progress of at least
    \begin{align*}
        \frac{2an}{\psi \cdot \exp \left( \frac{a(n-1)}{n-a} \right)} - o(n) &\ge \frac{2an}{\psi \cdot \exp \left( a \right)} - o(n)
    \end{align*}
    in a period of length $n^2/\psi$ (where the inequality holds by the same reasoning as in the proof of Lemma~\ref{lem:recurrence_defs}) or exceeds a fitness of $n-1$ at
    the end of the period (i.e.\ reaches the optimum). The quantity $n - (c_{\ell,a,i} \cdot n - o(n))$ is an upper bound on the distance to the optimum of the \EAc[a] and therefore if the algorithm makes more progress than this then it has reached the optimum. We have that
    \begin{align*}
     \frac{2an}{\psi \cdot \exp \left( a \right)} - o(n) \ge\;& n - (c_{\ell,a,i} \cdot n - o(n)) \\
     \iff \frac{2a}{\psi \cdot \exp \left( a \right)} \ge\;& 1 - c_{\ell,a,i} + o(1)\\
    \iff \psi \le\;& \frac{2a}{(1 - c_{\ell,a,i} + o(1)) \cdot \exp \left( a \right)}
    \end{align*}
    for which a sufficient condition is
    \[ \psi \le \frac{2a}{(1 - c_{\ell,a,i}) \cdot \exp \left( a \right)}. \]
    Recalling that the length of a period is $n^2/\psi$, this tells us that, when beginning at a fitness of at least $c_{\ell,a,i} \cdot n - o(n)$, the
    \EAc[a] reaches the optimum \wop for all periods
    of length at least
    \begin{equation}
        \label{eqn:max_time_a_opt}
        \frac{n^2}{\left( \frac{2a}{(1 - c_{\ell,a,i}) \cdot \exp \left( a \right)} \right)}
    \end{equation}

    Let us now derive a lower bound on the time required for the \EAc[b] to
    make progress of at least the initial distance between the \EAc[a] and itself. Since by assumption the fitness of the
    \EAc[a] is at least $c_{\ell,a,i} \cdot n-o(n)$ and the fitness of the
    \EAc[b] is at most $c_{u,b,i} \cdot n+o(n)$ we have a lower bound on this distance of
    $(c_{\ell,a,i}-c_{u,b,i}) \cdot n - o(n)$. By Lemma~\ref{lem:lo_fitness_bounds} we have that,
    \wop, the \EAc[b] makes progress of at most
    \[ \frac{2bn}{\psi \cdot \exp \left( \frac{b(c_{\ell,b,i} \cdot n-o(n))}{n} \right)} + o(n) \le \frac{2bn}{\psi \cdot \exp \left( b c_{\ell,b,i} \right)} + o(n) \]
    in a period of length $n^2/\psi$, where the inequality holds due to
    \cite[Lemma~1.4.2(a)]{chap:prob_tools_analysis_rand_opt_heurs}.
    Therefore the \EAc[b] does not cover the initial distance between the two
    algorithms if
    \[ \frac{2 bn}{\psi \cdot \exp \left( bc_{\ell,b,i} \right)} + o(n) \le (c_{\ell,a,i}-c_{u,b,i}) \cdot n - o(n) \]
    \[ \iff \psi \ge \frac{2 b}{(c_{\ell,a,i}-c_{u,b,i}) \cdot \exp \left( bc_{\ell,b,i} \right)} + o(1) \]
    for which a sufficient condition is
    \[ \psi \ge \frac{2 b}{(c_{\ell,a,i}-c_{u,b,i}) \cdot \exp \left( bc_{\ell,b,i} \right)} + \varepsilon \]
    for some positive constant $\varepsilon$.  Recalling that the length of the
    period is $n^2/\psi$, we see that, with overwhelming probability, the
    \EAc[b] requires at least
    \begin{equation}
        \label{eqn:min_time_b_covers_D}
        \frac{n^2}{\left( \frac{2 b}{(c_{\ell,a,i}-c_{u,b,i}) \cdot \exp \left( bc_{\ell,b,i} \right)} + \varepsilon \right)}
    \end{equation}
    iterations before the probability that it has covered the initial distance
    between the two algorithms is not overwhelmingly small.  Combining the
    bounds on $\psi$ in
    Equations~\eqref{eqn:max_time_a_opt}~and~\eqref{eqn:min_time_b_covers_D} we
    conclude that the \EAc[a] reaches the optimum of LO
    before the \EAc[b] catches up if
    \[ \frac{n^2}{\left( \frac{2 b}{(c_{\ell,a,i}-c_{u,b,i}) \cdot \exp \left( bc_{\ell,b,i} \right)} + \varepsilon \right)} \ge \frac{n^2}{\left( \frac{2a}{(1 - c_{\ell,a,i}) \cdot \exp \left( a \right)} \right)} \]
    which holds if and only if
    \[ \frac{\left( \frac{2 b}{(c_{\ell,a,i}-c_{u,b,i}) \cdot \exp \left( bc_{\ell,b,i} \right)} + \varepsilon \right)}{\left( \frac{2a}{(1 - c_{\ell,a,i}) \cdot \exp \left( a \right)} \right)} \le 1 \qedhere
    \]
\end{proof}

\subsection{Proof of Lemma~\ref{lem:lo_param_landscape_monotone}}

\begin{proof}[Proof of Lemma~\ref{lem:lo_param_landscape_monotone}]
    We split this proof into two parts. In the first part, we prove that, from
    time $0.772076n^2$ until some time $t_a$ the \EAc[a] remains
    ahead of the \EAc[b] \wop In the second part, we
    prove that for cutoff times larger than $t_a$ the \EAc[a] will find the
    optimum before the \EAc[b] catches up with where it began the period, with
    overwhelming probability. These two cases together imply that, with
    overwhelming probability, the \EAc[a] wins a ParamRLS-F comparison against
    the \EAc[b] for all cutoff times $\kappa \ge 0.772076n^2$. \par

    We first verify that the \EAc[a] is ahead of the \EAc[b] at all times
    between $0.772076n^2$ and some time~$t_a$. Lemma~\ref{lem:recurrence_defs}
    tells us that, from the end of period $i$ to the end of period $i+1$, the
    leading constant of the fitness of the individual in the
    \EA is in the range $[c_{\ell,\chi,i},c_{u,\chi,i+1}]$ with overwhelming
    probability, where $c_{\ell,\chi,i}$ and $c_{u,\chi,i+1}$ are as defined in
    Lemma~\ref{lem:recurrence_defs}. Hence if we verify for all periods $i$
    satisfying $772076 \le i \le \psi \cdot t_a$ that the intervals
    $[c_{\ell,a,i},c_{u,a,i+1}]$ and $[c_{\ell,b,i},c_{u,b,i+1}]$ are
    non-overlapping then we can conclude that, \wop,
    the \EAc[a] is ahead of the \EAc[b] for all times $t$ satisfying
    $0.772076n^2 \le t \le t_a$. For each $b$, the time $t_a$ is chosen to be
    the end of the final period for which $c_{u,b,i}<1$.  For these values of
    $t_a$, we verified that the above intervals are non-overlapping and hence
    that the \EAc[a] is ahead of each worse configuration for all time $t$
    satisfying $0.772076n^2 \le t \le t_a$. \par

    If $a \le 1.6$ and for all times $t$ satisfying $0.772076n^2 \le t \le
    t_a$ the condition in Lemma~\ref{lem:remains_ahead_ineq}
    holds for all $b < a$ then we can conclude that, with overwhelming
    probability, the \EAc[a] reaches the optimum before any \EAc[b] catches it.
    Therefore for any cutoff time $\kappa \ge t_a$ the \EAc[a] wins a
    comparison in ParamRLS-F against any \EAc[b] for $b > a$.  Similarly, if $a
    \ge 1.6$ and the condition in Lemma~\ref{lem:remains_ahead_ineq} holds for
    all $b > a$ then we can conclude that \wop the
    \EAc[a] reaches the optimum before any \EAc[b] catches it. Therefore for
    any cutoff time $\kappa \ge t_a$ the \EAc[a] wins a comparison in
    ParamRLS-F against any \EAc[b] for $b > a$.  \par

    We can therefore prove the claim by verifying that, for all $a \le 1.6$ and
    $b < a$ the inequality in Lemma~\ref{lem:remains_ahead_ineq} holds, and
    also for all $a \ge 1.6$ and $b > a$ the same inequality is true. We do so
    for the specific tuning scenario given by this theorem by iterating over
    all cases where we require this inequality to hold and verifying that it
    does so in each case. We used a value of $\varepsilon=0.00000000001$. Table~\ref{tab:vals_of_lemma_ineq} contains the
    values of the quantity from Lemma~\ref{lem:remains_ahead_ineq}, scaled up by a factor of $100000$ for readability, for all
    required comparisons for values of $\chi \le 1.6$. It is easily verified that
    all values are much smaller than $100000$, as required. This proves our claim.
\end{proof}


\begin{table*}
    \small
    \centering
    \begin{tabular}{|c| r@{\hspace{0.6em}}r@{\hspace{0.6em}}r@{\hspace{0.6em}}r@{\hspace{0.6em} }r@{\hspace{0.6em} }r@{\hspace{0.6em}}r@{\hspace{0.6em}}r@{\hspace{0.6em}}r@{\hspace{0.6em} }r@{\hspace{0.6em} }r@{\hspace{0.6em}}r@{\hspace{0.6em}}r@{\hspace{0.6em}}r@{\hspace{0.6em} }r@{\hspace{0.6em} }|}
    \hline
            & 0.1 & 0.2 & 0.3 & 0.4 & 0.5 & 0.6 & 0.7 & 0.8 & 0.9 & 1.0 & 1.1 & 1.2 & 1.3 & 1.4 & 1.5 \\
    \hline
        0.2 & 0.0 & -- & -- & -- & -- & -- & -- & -- & -- & -- & -- & -- & -- & -- & -- \\
        0.3 & 0.0 & 0.2 & -- & -- & -- & -- & -- & -- & -- & -- & -- & -- & -- & -- & -- \\
        0.4 & 0.0 & 0.1 & 0.4 & -- & -- & -- & -- & -- & -- & -- & -- & -- & -- & -- & -- \\
        0.5 & 0.0 & 0.1 & 0.2 & 0.4 & -- & -- & -- & -- & -- & -- & -- & -- & -- & -- & -- \\
        0.6 & 0.0 & 0.1 & 0.2 & 0.3 & 0.8 & -- & -- & -- & -- & -- & -- & -- & -- & -- & -- \\
        0.7 & 0.0 & 0.1 & 0.1 & 0.3 & 0.5 & 1.1 & -- & -- & -- & -- & -- & -- & -- & -- & -- \\
        0.8 & 0.0 & 0.1 & 0.2 & 0.3 & 0.5 & 0.9 & 2.2 & -- & -- & -- & -- & -- & -- & -- & -- \\
        0.9 & 0.0 & 0.1 & 0.2 & 0.3 & 0.4 & 0.7 & 1.2 & 2.9 & -- & -- & -- & -- & -- & -- & -- \\
        1.0 & 0.1 & 0.1 & 0.2 & 0.3 & 0.5 & 0.8 & 1.3 & 2.2 & 5.2 & -- & -- & -- & -- & -- & -- \\
        1.1 & 0.1 & 0.1 & 0.2 & 0.3 & 0.5 & 0.8 & 1.1 & 1.8 & 3.1 & 7.3 & -- & -- & -- & -- & -- \\
        1.2 & 0.1 & 0.2 & 0.3 & 0.4 & 0.6 & 0.9 & 1.3 & 2.0 & 3.1 & 5.5 & 12.9 & -- & -- & -- & -- \\
        1.3 & 0.1 & 0.1 & 0.3 & 0.4 & 0.6 & 0.8 & 1.2 & 1.7 & 2.6 & 4.1 & 7.3 & 17.5 & -- & -- & -- \\
        1.4 & 0.1 & 0.2 & 0.3 & 0.4 & 0.7 & 0.9 & 1.3 & 1.8 & 2.7 & 4.0 & 6.5 & 12.0 & 30.1 & -- & -- \\
        1.5 & 0.1 & 0.2 & 0.4 & 0.6 & 0.9 & 1.3 & 1.8 & 2.5 & 3.6 & 5.3 & 8.3 & 14.0 & 27.8 & 78.0 & -- \\
        1.6 & 0.1 & 0.3 & 0.4 & 0.7 & 1.0 & 1.4 & 2.0 & 2.7 & 3.9 & 5.7 & 8.8 & 14.5 & 27.4 & 65.9 & 294.9 \\
        \hline
            & 1.7 & 1.8 & 1.9 & 2.0 & 2.1 & 2.2 & 2.3 & 2.4 & 2.5 & 2.6 & 2.7 & 2.8 & 2.9 & 3.0 \\
        \hline
        1.6 & 245.3 & 67.0 & 31.3 & 18.3 & 12.1 & 8.7 & 6.6 & 5.2 & 4.2 & 3.5 & 2.9 & 2.5 & 2.2 & 1.9 & \\
        1.7 & -- & 95.3 & 37.1 & 20.5 & 13.2 & 9.3 & 7.0 & 5.5 & 4.4 & 3.6 & 3.1 & 2.6 & 2.3 & 2.0 & \\
        1.8 & -- & -- & 69.8 & 29.9 & 17.6 & 11.8 & 8.6 & 6.6 & 5.3 & 4.3 & 3.6 & 3.1 & 2.7 & 2.4 & \\
        1.9 & -- & -- & -- & 52.0 & 23.3 & 14.2 & 9.8 & 7.3 & 5.7 & 4.6 & 3.8 & 3.2 & 2.8 & 2.4 & \\
        2.0 & -- & -- & -- & -- & 46.6 & 21.4 & 13.3 & 9.3 & 7.0 & 5.6 & 4.5 & 3.8 & 3.2 & 2.8 & \\
        2.1 & -- & -- & -- & -- & -- & 49.0 & 22.9 & 14.4 & 10.2 & 7.8 & 6.2 & 5.1 & 4.3 & 3.7 & \\
        2.2 & -- & -- & -- & -- & -- & -- & 43.4 & 20.5 & 13.0 & 9.3 & 7.2 & 5.7 & 4.7 & 4.0 & \\
        2.3 & -- & -- & -- & -- & -- & -- & -- & 42.8 & 20.4 & 13.0 & 9.4 & 7.2 & 5.8 & 4.8 & \\
        2.4 & -- & -- & -- & -- & -- & -- & -- & -- & 42.2 & 20.3 & 13.0 & 9.4 & 7.3 & 5.9 & \\
        2.5 & -- & -- & -- & -- & -- & -- & -- & -- & -- & 42.6 & 20.5 & 13.3 & 9.6 & 7.5 & \\
        2.6 & -- & -- & -- & -- & -- & -- & -- & -- & -- & -- & 44.8 & 21.7 & 14.1 & 10.3 & \\
        2.7 & -- & -- & -- & -- & -- & -- & -- & -- & -- & -- & -- & 43.6 & 21.2 & 13.8 & \\
        2.8 & -- & -- & -- & -- & -- & -- & -- & -- & -- & -- & -- & -- & 45.9 & 22.4 & \\
        2.9 & -- & -- & -- & -- & -- & -- & -- & -- & -- & -- & -- & -- & -- & 46.0 & \\
        \hline
    \end{tabular}
    \caption{The number in row $a$ column $b$ is $100000$ times the value of the
    quantity given in Lemma~\ref{lem:remains_ahead_ineq} (to one decimal
    place) for the \EAc[a] ahead of the \EAc[b] by some linear distance. Hence
    if it is no greater than $100000$ then, \wop, the
    \EAc[a] remains ahead of the \EAc[b]. The values have been displayed in
    this way to give an idea of their relative size, since all values are so
    small this relationship was otherwise lost when reducing the size of the
    table. It is easily verified that all values are several orders of magnitude
    smaller than we require.}
    \label{tab:vals_of_lemma_ineq}
\end{table*}

\end{document}